\documentclass[11pt,letterpaper]{article}

\usepackage{amsmath, amssymb, amsthm, bm, bbm}
\usepackage[margin=1.1in]{geometry}
\usepackage{authblk}

\usepackage[utf8]{inputenc} 
\usepackage{hyperref}       
\usepackage{url}            
\usepackage{booktabs}       
\usepackage{microtype}      
\usepackage{xcolor}    
\usepackage{hyperref}
\usepackage{enumerate}
\hypersetup{colorlinks=true, linkcolor=blue, citecolor=blue, filecolor=blue, urlcolor=blue}
\usepackage[square]{natbib}
\usepackage{graphicx}
\usepackage{wrapfig}
\usepackage{multirow}
\usepackage{algorithm}
\usepackage{algorithmic}
\usepackage{xargs}
\usepackage[most]{tcolorbox}
\usepackage{subcaption}
\usepackage{enumitem}

\definecolor{phaseblue}{HTML}{E3EDF8}
\definecolor{phasegreen}{HTML}{DDEFE7}
\definecolor{phasepink}{HTML}{F8E5D9}
\definecolor{phasepurple}{HTML}{EAE3F3}

\newtcolorbox{phasebox}[2]{%
  enhanced,
  frame hidden,
  boxrule=0pt,
  sharp corners,
  colback=#1,
  left=1mm,
  right=1mm,
  top=0.7mm,
  bottom=0.7mm,
  boxsep=0pt,
  width=\dimexpr\linewidth-2.2cm\relax,
  before skip=2pt,
  after skip=2pt,
  overlay={
    \node[
      anchor=west,
      font=\footnotesize\bfseries,
      align=center,
      text width=1.8cm
    ] at ([xshift=6pt]frame.east) {#2};
  }
}

\numberwithin{equation}{section}

\theoremstyle{plain}
\newtheorem{theorem}{Theorem}[section]
\newtheorem{lemma}[theorem]{Lemma}

\newtheorem{corollary}[theorem]{Corollary}

\theoremstyle{definition}
\newtheorem{definition}{Definition}
\newtheorem{assumption}{Assumption}
\newtheorem{remark}{Remark}[section]

\newcommand{\cD}{\mathcal{D}}
\newcommand{\cG}{\mathcal{G}}

\newcommand{\cX}{\mathcal{X}}
\newcommand{\cY}{\mathcal{Y}}

\newcommand{\E}{\mathbb{E}}
\newcommand{\N}{\mathbb{N}}
\renewcommand{\P}{\mathbb{P}}
\newcommand{\R}{\mathbb{R}}

\newcommand{\one}{\mathbbm{1}}

\renewcommand{\d}{\mathrm{d}}
\newcommand{\test}{\mathrm{test}}
\newcommand{\rmN}{\mathrm{N}}
\newcommand{\train}{\mathrm{train}}
\renewcommand{\cal}{\mathrm{cal}}
\newcommand{\oracle}{\mathrm{oracle}}
\newcommand{\eff}{\mathrm{eff}}
\newcommand{\TV}{\mathrm{TV}}

\usepackage[nameinlink,noabbrev]{cleveref}
\crefname{assumption}{assumption}{assumptions}

\newcommand{\papertitle}{Multi-source conformal prediction: leveraging heterogeneity via localization}
\newcommand{\paperauthorA}{Rohan Hore}
\newcommand{\affilOne}{Department of Statistics, Stanford University}
\newcommand{\paperauthorB}{Anirban Chatterjee}
\newcommand{\affilTwo}{Department of Mathematics and Statistics, Boston University}
\newcommand{\paperauthorC}{Sayantan Choudhury}
\newcommand{\affilThree}{Department of Computing and Mathematical Sciences, MBZUAI}
\newcommand{\keywordslist}{Multi-source predictive inference, covariate shift, conformal prediction, local coverage, multi-environment prediction}

\title{\papertitle}
\author[1]{\paperauthorA}
\author[2]{\paperauthorB}
\author[3]{\paperauthorC\textsuperscript{\(\dagger\)}}
\affil[1]{\affilOne}
\affil[2]{\affilTwo}
\affil[3]{\affilThree}

\date{\today}

\newcommand{\paperabstract}[1]{%
  \begin{abstract}
    #1
  \end{abstract}
}
\newcommand{\paperkeywords}[1]{%
  \noindent\textbf{Keywords: } #1
}

\usepackage[nameinlink,noabbrev]{cleveref}

\begin{document}
\maketitle
\begingroup
\renewcommand{\thefootnote}{\fnsymbol{footnote}}
\footnotetext[1]{These authors contributed equally to this work.}
\footnotetext[2]{Now at Amazon, India.}
\endgroup
\paperabstract{
Many modern prediction tasks involve data from multiple heterogeneous sources, while the test distribution may differ substantially from any individual source. Although heterogeneity poses challenges, it also offers an opportunity: different sources may provide complementary information, with some regions of the feature space better represented in one source than another. We propose Multi-Source Randomly Localized Conformal Prediction (MS-RLCP), which builds on the local coverage properties of randomly localized conformal prediction (RLCP) \citep{hore2025conformal} and extends it to multiple sources through data-adaptive source selection. Under the widely adopted assumption of a shared response distribution conditional on the features across sources and the test population, we establish finite-sample coverage bounds using an interpretable notion of \emph{envelope distribution} that captures their aggregate feature-space representation. Our analysis allows the test feature distribution to be absolutely continuous with respect to the envelope, extending beyond mixtures of source distributions. Under additional regularity conditions, we also establish asymptotic test-conditional coverage. Simulations and real-world experiments demonstrate the effectiveness of MS-RLCP across varying levels of data heterogeneity.
}

\paperkeywords{\keywordslist}

\section{Introduction}

Recent advances in machine learning methods have led to their increased adoption in safety-critical applications, including autonomous driving \citep{bojarski2016end}, medical diagnosis \citep{kompa2021second, bhatt2021uncertainty}, and financial risk assessment \citep{mashrur2020machine}. This has made reliable uncertainty quantification increasingly important in these settings (see \cite{guo2017calibration, he2026survey, ovadia2019can} and references therein). Conformal prediction \citep{vovk2005algorithmic} provides a simple yet powerful distribution-free framework for constructing prediction sets with finite-sample coverage guarantees. These guarantees typically rely on exchangeability, an assumption satisfied when training and test data are drawn i.i.d.\ from the same population.

However, many applications rely on multiple heterogeneous sources collecting data under different conditions or protocols \citep{crammer2008learning,mansour2008domain,zhao2018adversarial}. For instance, in healthcare, predicting a new patient's response to a drug may involve clinical data from hospitals serving different patient populations \citep{ali2022federated,jochems2016distributed}. Similarly, sensor networks combine measurements from devices operating under different environmental conditions \citep{mcmahan2017communication,zhao2025blockchain}. Such differences across source populations can violate the exchangeability assumption underlying standard conformal prediction.

While the challenges posed by heterogeneity are well recognized in the literature, the opportunity to combine complementary information from different sources remains less explored, particularly when some sub-populations are better represented in one source than another. This motivates our central question: \emph{how can we bring together such local information to construct efficient conformal prediction sets?} We describe our formal setup and goal below.

\subsection{Problem setup}

Let $\N$ denote the set of natural numbers and consider $K \in \mathbb{N}$ heterogeneous data sources. For each $k \in [K]$, let $\mathcal{D}_k = \{(Y_{i,k}, X_{i,k}) : 1 \le i \le n_k\}$ consist of $n_k \in \mathbb{N}$ independent samples from a distribution $P_k$. We also assume that the datasets $\mathcal{D}_k$ are independent across $k$. Here, $X_{i,k} \in \mathcal{X}$ is a feature vector and $Y_{i,k} \in \mathcal{Y}$ is the corresponding response. Write $\mathcal{D} = \bigcup_{k=1}^{K} \mathcal{D}_k$ for the combined dataset and $n = |\mathcal{D}|$ for its total sample size.

Let $2^{\mathcal{Y}}$ denote the power set of $\mathcal{Y}$. The aim of this work is to construct a prediction set $\hat{C}_n : \mathcal{X} \to 2^{\mathcal{Y}}$ that satisfies
\begin{equation}\label{eq:conformal_coverage}
\P_{\mathcal{D} \times P_{\text{test}}}\!\left( Y_{n+1} \in \hat{C}_n(X_{n+1}) \right) \ge 1 - \alpha
\qquad \text{for all } P_{\text{test}}.
\end{equation}
Here, the probability is taken over the randomness of the observed data $\mathcal{D}$ and a test sample $(Y_{n+1}, X_{n+1})$ drawn from $P_{\text{test}}$ independently of $\mathcal{D}$.

While~\eqref{eq:conformal_coverage} may at first glance seem a meaningful goal, without assumptions on the source and test distributions, nontrivial prediction sets cannot generally satisfy \eqref{eq:conformal_coverage}, even with a single source ($K=1$). This is closely tied to the difficulty of achieving distribution-free conditional coverage \citep{vovk2012conditional,foygel2021limits, lei2014distribution}; see \citet[Section~1]{hore2025conformal} for a formal connection between the two.

A natural starting point is therefore to adopt the widely used covariate shift assumption \citep{tibshirani2019conformal}: $P_k = P_{Y \mid X} \times P_{k,X}$ for all $k \in [K]$ and $P_{\text{test}} = P_{Y \mid X} \times P_{\text{test},X}$. In other words, the marginal feature distributions may differ, while the conditional distribution of the response given the features, $P_{Y\mid X}$, is shared across sources and the test environment.

Even under this covariate shift assumption, exact coverage guarantees remain difficult to attain in practice, including when $K=1$. For instance, the weighted conformal method of \citet{tibshirani2019conformal} provides an exact coverage guarantee when the true shift function $\frac{\mathsf{d} P_{\text{test},X}}{\mathsf{d} P_{1,X}}$ is known. In practice, this function must typically be estimated, which can be challenging when the distributions are supported on high-dimensional spaces. 

To address this difficulty, a rich literature seeks approximate coverage guarantees for the setting $K=1$ (see \cite{gibbs2021adaptive, podkopaev2021distribution, liu2024multi} and references therein). In line with these works, we also work under covariate shift and aim to construct prediction sets from heterogeneous multi-source data with \emph{approximate} coverage: the guarantee in \eqref{eq:conformal_coverage} holds up to a provably small deviation from the nominal level $1-\alpha$ for any finite $K\ge 1$.

\subsection{Related work}

Given the widespread relevance of multi-source predictive inference in practical problems, a substantial body of literature has studied it from different perspectives. Below, we provide a selective review of existing work.

\paragraph{Federated conformal prediction.}
A prominent line of work arises from the federated learning literature, where methods combine information across heterogeneous sources \citep{lu2023federated,plassier2024efficient,zhu2024federated}. For example, \citet{lu2023federated} establish coverage under partial exchangeability, while \citet{liu2024multi} develop target-population inference using influence-function-based estimation and adaptive source weighting. Some methods further reduce communication and data sharing; notably, \citet{humbert2023one} propose a quantile-of-quantiles procedure requiring only one round of communication. These considerations are particularly relevant in healthcare, where sharing individual observations can be restricted \citep{sun2026unlocking,adnan2022federated}. A common formulation in this literature takes the test population to be a mixture of source populations, as in \citet{lu2023federated}; guarantees under this formulation do not directly extend to test distributions outside the mixture class.

\paragraph{Aggregation across sources.}
Since conformal prediction can be viewed through the lens of testing, a natural approach is to construct source-specific conformal prediction sets or $p$-values and then aggregate them. \citet{yang2026multi} develop max-based aggregation of conformal $p$-values, while \citet{ying2024informativeness} study aggregation of source-specific weighted conformal prediction sets. However, the efficiency of aggregation depends on score design: scores that perform well separately need not yield efficient prediction sets after aggregation.

\paragraph{Group-conditional coverage.}
When different sources are interpreted as groups or classes, multi-source predictive inference is closely related to group-conditional coverage. Several works study coverage guarantees within specified groups or classes \citep{romano2020malice,jung2022batch,ding2023class,bairaktari2025kandinsky}. Coverage of at least $1-\alpha$ within every group also ensures coverage under any mixture of those group distributions. A related formulation considers covariate shift determined by changes in group proportions \citep{bhattacharyya2026group}. However, guarantees are limited to test distributions falling within the mixture class.

\paragraph{Hierarchical conformal prediction.}
Distributed data are also studied through hierarchical models, in which source distributions are first drawn from a common higher-level population and observations within each source are then sampled conditionally on its distribution. \citet{dunn2023distribution,lee2023distribution,duchi2025predictive} develop conformal procedures for hierarchical or multi-environment data. Guarantees based on this structure require similar hierarchical structure for test distribution.

\paragraph{Coverage under distribution shift.}
More broadly, since we allow $P_{\text{test}}$ to differ from the source populations, our goal is closely connected to robust conformal prediction under distribution shift. In the single-source setting, weighted conformal prediction accounts for covariate shift through likelihood-ratio weighting when the shift is known \citep{tibshirani2019conformal}. \citet{cauchois2024robust,xu2025wasserstein,aolaritei2026conformal} further study robustness to general distributional shifts through different notions of distributional discrepancy.

\paragraph{Conformal prediction with local coverage guarantees.}
When distribution shift is restricted to covariate shift, constructing robust prediction sets is naturally linked to achieving good local coverage. Despite the challenges of distribution-free test-conditional coverage, several works develop theoretically grounded conformal methods with local or conditional coverage guarantees under appropriate assumptions or relaxations \citep{chernozhukov2021distributional,gibbs2025conformal,guan2023localized,hore2025conformal}.

\subsection{Our approach}\label{sec:our_approach}

We approach the multi-source prediction problem by constructing a conformal prediction set at each source and selecting among these sets using a data-adaptive rule. The intuition, supported by empirical evidence, is that local coverage of conformal prediction sets is typically better in regions that are more strongly represented in the data. With multiple sources available, we can match a test point to a source with stronger local representation and use its conformal prediction set to help improve coverage at that test point.

To implement this idea, we first revisit RLCP framework of \cite{hore2025conformal}, which constructs prediction sets with provable local coverage guarantees in the single-source setting. We then propose \underline{M}ulti-\underline{S}ource \underline{R}andomly \underline{L}ocalized \underline{C}onformal \underline{P}rediction (\texttt{MS-RLCP}), which forms source-specific RLCP prediction sets for each of the $K$ sources and adaptively selects one at test time. Integrating data-adaptive source selection into RLCP yields the following useful properties:
\begin{itemize}
    \item \textbf{Local training without data sharing.}
    Each source-specific RLCP prediction set can use its own conformal score, allowing each source to train its prediction model independently without sharing training data. This reduces the communication burden associated with centralized training. Moreover, \texttt{MS-RLCP} can serve as a wrapper around ready-to-use source-specific models, without requiring any model aggregation or global calibration.

    \item \textbf{Meaningful coverage beyond mixtures of sources.}
    We introduce an interpretable notion of \emph{envelope distribution} that captures the aggregate representation of the feature space across sources. We provide finite-sample upper bounds on coverage under $P_{\text{test}}$ that are absolutely continuous with respect to this envelope. In particular, the test covariate distribution need not be a mixture of the source distributions; it may only be absolutely continuous with respect to any such mixture.

    \item \textbf{Asymptotic test-conditional coverage.}
    Beyond marginal coverage, we characterize coverage conditional on the test feature and establish that \texttt{MS-RLCP} inherits the asymptotic conditional coverage guarantees of RLCP, despite data-adaptive source selection.
\end{itemize}

Our approach is related to data-adaptive selection among conformal prediction sets, as studied by \citet{yang2025selection,liang2026conformal,wang2026localized}. These works, however, typically focus on choosing among prediction models in a single-source setting to improve efficiency while preserving validity. In our setting, the test distribution may differ from every source, so relying on any single source may lead to coverage far below the nominal level. Establishing meaningful coverage guarantees through source
selection is therefore the primary challenge.
\section{Multi-Source Randomly Localized Conformal Prediction}
In this section, we formally introduce the Multi-Source Randomly Localized Conformal Prediction (\texttt{MS-RLCP}) algorithm for constructing reliable prediction sets from heterogeneous data sources. First, in Section~\ref{sec:warmup}, we illustrate its guiding principles in a simplified setting with known source alignment. We then review RLCP, the building block of our approach, in Section~\ref{sec:local_coverage_RLCP}, before presenting the full \texttt{MS-RLCP} method in Section~\ref{sec:MS-RLCP}.

\subsection{A Structured Multi-Source Setting with Known Alignment}\label{sec:warmup}

To illustrate the key ideas underlying our approach, we begin with a simple multi-source setting in which constructing a prediction set reduces to selecting the source aligned with a given test point. Assume a known partition of the feature space $\mathcal{X}=\bigsqcup_{k=1}^{K}\mathcal{X}_k$ such that, for every $k\in[K]$, the test feature distribution matches the $k$-th source feature distribution when restricted to $\mathcal{X}_k$. Formally,
\begin{equation}\label{eq:simple_test_dist}
P_{\text{test},X \mid X\in \mathcal{X}_k}
= P_{k,X \mid X\in \mathcal{X}_k},
\qquad \text{for all } k\in[K].
\end{equation}

Equivalently, the test feature distribution is a mixture of the conditional distributions
$P_{k,X\mid X\in\mathcal{X}_k}$, $k\in[K]$. In this setting, the coverage guarantee in~\eqref{eq:conformal_coverage} can be achieved through a simple extension of split conformal prediction. For each source $k\in[K]$, construct a source-specific prediction set $\hat{C}_{n_k,k}:\mathcal{X}\to 2^{\mathcal{Y}}$, where $n_k = |\mathcal{D}_k|$, by applying split conformal prediction \citep{vovk2005algorithmic, papadopoulos2002inductive} to the samples in $\mathcal{D}_k$ whose features lie in $\mathcal{X}_k$. This prediction set satisfies the following conditional coverage guarantee (see~\cite{lei2012distribution}):
\begin{equation}\label{eq:cond_cover}
\mathbb{P}_{P_k\times\mathcal{D}_k}\!\left(
Y_{n+1}\in \hat{C}_{n_k,k}(X_{n+1})
\;\middle|\;
X_{n+1}\in \mathcal{X}_k
\right)\ge 1-\alpha .
\end{equation}

That is, $\hat{C}_{n_k,k}$ provides valid coverage for a test sample drawn from $P_k$ conditional on $\{X_{n+1}\in \mathcal{X}_k\}$
We then define a global prediction rule $\hat{C}_n:\mathcal{X}\to 2^{\mathcal{Y}}$ by setting $\hat{C}_n(x)=\hat{C}_{n_k,k}(x)$ whenever $x\in\mathcal{X}_k$, for $k\in[K]$. Under covariate shift, combining the alignment condition in \eqref{eq:simple_test_dist} with the coverage guarantee in \eqref{eq:cond_cover} shows that $\hat C_n$ satisfies \eqref{eq:conformal_coverage} without an approximation error.

The guiding principles of this construction, which also underlies our general proposal, are as follows:
\begin{itemize}
\item Construct source-specific prediction sets with local coverage guarantees.
\item For each test point, identify the source with the most appropriate local representation in the feature space.
\end{itemize}

Intuitively, we select the source that best represents the neighborhood of a new test point and report its locally calibrated prediction set as the final prediction set. In the structured setting above, the known partition determines this selection; in our general method, we select the source using a data-adaptive rule. The formal details are given in Section~\ref{sec:MS-RLCP}.

\subsection{RLCP and local coverage}\label{sec:local_coverage_RLCP}

We briefly review Randomly Localized Conformal Prediction (RLCP) from \cite{hore2025conformal}, which constructs prediction sets with provable local coverage guarantees and forms a key component of our \texttt{MS-RLCP} method.

Consider the single-source setting. With a slight abuse of notation, let
$\cD=\{(Y_i,X_i):1\le i\le n\}$
denote observations generated independently from an unknown distribution $P$. Let $(Y_{n+1},X_{n+1})$ denote an independent test sample from the same distribution $P$.

For a user-specified kernel $H(\cdot,\cdot)$ (assumed symmetric in its arguments, with $H(x,\cdot)$ a density for all $x\in\cX$), generate a perturbed test feature $\tilde X_{n+1}\sim H(X_{n+1},\cdot)$ conditional on $X_{n+1}$. Using the original test feature $X_{n+1}$ and the perturbed feature $\tilde X_{n+1}$, RLCP aims to construct a prediction set $\hat C_n(X_{n+1},\tilde X_{n+1})$ satisfying
\begin{align}
\P\!\left(Y_{n+1}\in \hat C_n(X_{n+1},\tilde X_{n+1}) \mid \tilde X_{n+1}\right)\ge 1-\alpha.
\label{eq:rlcp_target}
\end{align}
For a suitably localized kernel $H$, we expect $\tilde X_{n+1}\approx X_{n+1}$. The notion of conditional coverage in~\eqref{eq:rlcp_target} can be interpreted as coverage conditional on a random
neighborhood of $X_{n+1}$.

For $\cX=\R^d$ with $d\ge 1$, natural choices of kernel $H$ include the Gaussian and box kernels:
\begin{eqnarray}\label{eq:gauss_box_kernels}
    H(x,y) & = & \frac{1}{(2\pi)^{d/2}h^d}\exp\!\left(-\frac{\|x-y\|_2^2}{2h^2}\right), \notag\\
    H(x,y) & = & \frac{1}{V_d h^d}\mathbf{1}\{\|x-y\|_2\le h\}, \qquad x,y\in\R^d,
\end{eqnarray}
where $h>0$ is the bandwidth and $V_d$ denotes the volume of the unit ball in $\R^d$. The Gaussian kernel corresponds to perturbing $X_{n+1}$ with Gaussian noise, i.e., generating $\tilde X_{n+1}=X_{n+1}+hZ$ with $Z\sim\mathcal{N}_d(0,I_d)$, while the box kernel corresponds to sampling $\tilde X_{n+1}$ uniformly from the Euclidean ball of radius $h$ centered at $X_{n+1}$.

To achieve the local coverage guarantee in \eqref{eq:rlcp_target}, RLCP adopts the weighted conformal approach originally proposed in \cite{tibshirani2019conformal}. To begin, define a nonconformity score $s:\cX\times\cY\to\R$, where larger values indicate greater nonconformity of a sample with respect to the observed data $\mathcal{D}$. In practice, a split-conformal approach is adopted, where $\cD$ is first partitioned into a training set
$\cD_{\text{train}}=\{(Y_i,X_i):1\le i\le n_{\text{train}}\}$
and a calibration set
$\cD_{\text{cal}}=\{(Y_i,X_i):n_{\text{train}}+1\le i\le n\}$. The score is learned using the training data $\cD_{\text{train}}$, for instance by fitting a predictive model and defining the score through its residuals. In a regression setting, one may use
$s(x,y)=|y-\hat f(x)|$, where $\hat f$ is the fitted predictor.

Given $s$, the calibration set $\cD_{\text{cal}}$ is used to determine a data-dependent threshold. Let $\delta_x$ denote the Dirac measure at $x$. We define the weighted empirical distribution
\begin{equation}\label{eq:weighted_empirical_dist}
\sum_{i=n_{\text{train}}+1}^{n} \!\!\!\!\!\tilde{w}_i \,\delta_{s(X_i,Y_i)} + \tilde{w}_{n+1}\,\delta_{+\infty}
\quad \text{with }\quad
\tilde{w}_i = \frac{H(X_i,\tilde X_{n+1})}{\sum_{j=n_{\text{train}}+1}^{n+1} H(X_j,\tilde X_{n+1})},
\quad \forall i\in [n+1]\setminus [n_{\text{train}}].
\end{equation}
The RLCP prediction set is then defined as
\begin{align}\label{eqn:def_RLCP}
\hat{C}_n^{\textnormal{RLCP}}(X_{n+1},\tilde X_{n+1};\cD_{\text{train}},\cD_{\text{cal}})
= \left\{y\in\cY : s(X_{n+1},y)\le \hat{q}_{1-\alpha}(X_{n+1},\tilde X_{n+1})\right\},
\end{align}
where $\hat{q}_{1-\alpha}(X_{n+1},\tilde X_{n+1})$ denotes the $(1-\alpha)$-quantile of the weighted empirical distribution in \eqref{eq:weighted_empirical_dist}. \citet[Proposition 1]{hore2025conformal} shows that $\hat C_n^{\mathrm{RLCP}}$ achieves the coverage guarantee in \eqref{eq:rlcp_target}. We restate this result below for later reference.

\begin{lemma}[Proposition 1, \cite{hore2025conformal}]\label{lemma:rlcp_hore_barber_validity}
    The RLCP prediction set constructed in \eqref{eqn:def_RLCP} satisfies
    \begin{align*}
        \P\left({Y_{n+1}\in \hat{C}_n^{\rm RLCP}(X_{n+1},\tilde X_{n+1};\cD_{\rm train},\cD_{\rm cal})\mid \tilde X_{n+1},\cD_{\rm train}}\right)\ge 1-\alpha.
    \end{align*}
\end{lemma}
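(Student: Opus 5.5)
Conditioning on $\cD_{\rm train}$ fixes the score $s$, so what remains is a weighted-exchangeability argument in the style of \cite{tibshirani2019conformal}. To lighten notation, re-index the calibration points together with the test point as $Z_i=(X_i,Y_i)$ for $i\in\{n_{\rm train}+1,\dots,n+1\}$, and write $m=n+1-n_{\rm train}$. The key observation is that the joint density of $(Z_{n_{\rm train}+1},\dots,Z_{n+1},\tilde X_{n+1})$ is
\[
\prod_{i} p(Z_i)\cdot H(X_{n+1},\tilde X_{n+1}),
\]
where $p$ is the density of $P$. The product over $i$ is symmetric in the $Z_i$. Consequently, conditional on $\tilde X_{n+1}=\tilde x$ and on the unordered multiset $\{Z_i\}=\{z_i\}$, the identity of the test point is random: the probability that $Z_{n+1}=z_j$ is proportional to $H(x_j,\tilde x)$. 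Using the symmetry of $H$, this probability equals exactly the weight $\tilde w_j$ in \eqref{eq:weighted_empirical_dist}, with those weights evaluated on the multiset.

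The proof then proceeds in three steps. First, I would establish the conditional law above. This requires a careful change of variables: condition on the multiset via the exchangeable structure, and handle ties either by randomization or by noting that tied points carry equal scores, so ties do not affect the final event. Second, let $S_i=s(X_i,Y_i)$ and let $Q$ be the $(1-\alpha)$-quantile of $\sum_i \tilde w_i\delta_{S_i}$, the weighted empirical distribution that includes the test score $S_{n+1}$ in place of $+\infty$. Note that $Q$ is a function of the multiset and $\tilde x$ only. By the first step,
\[
\P\bigl(S_{n+1}\le Q\mid \{Z_i\},\tilde X_{n+1}\bigr)=\sum_j \tilde w_j\one\{S_j\le Q\}\ge 1-\alpha,
\]
by the definition of the quantile. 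Third, apply the standard replacement argument. Replacing $\delta_{S_{n+1}}$ by $\delta_{+\infty}$ can only raise the quantile. Moreover, $S_{n+1}\le Q$ holds if and only if $S_{n+1}\le \hat q_{1-\alpha}(X_{n+1},\tilde X_{n+1})$: one direction follows from monotonicity, and the other because moving the mass at $S_{n+1}$ to $+\infty$ does not change the quantile when $S_{n+1}$ is itself at most the quantile. Hence the coverage event coincides with $\{S_{n+1}\le Q\}$. The tower property over the multiset, given $\tilde X_{n+1}$ and $\cD_{\rm train}$, then yields the claim.

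The main obstacle is the first step: rigorously justifying that conditioning on $\tilde X_{n+1}$ turns the test index into a draw with probabilities $\tilde w_j$. The weights depend on $\tilde X_{n+1}$, which itself depends on $X_{n+1}$. I would handle this by writing out the joint density explicitly and conditioning on the multiset (the order statistics of $(Z_i)$), exactly as in the weighted-exchangeability lemma of \cite{tibshirani2019conformal}. Here the likelihood-ratio factor $w(x)$ from that lemma is replaced by $H(x,\tilde x)$ for fixed $\tilde x$. If densities are unavailable, the same argument goes through with densities taken with respect to a dominating measure.
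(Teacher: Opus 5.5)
Your argument is correct, and it is the standard proof of this result. The paper does not reprove it but imports it from \citet{hore2025conformal}, and your route is the one behind that result: conditional on $\tilde X_{n+1}$ and $\cD_{\rm train}$, the calibration points stay i.i.d.\ from $P$, while the test point becomes a covariate-shifted draw with likelihood ratio proportional to $H(\cdot,\tilde X_{n+1})$, after which the weighted-exchangeability argument of \citet{tibshirani2019conformal} applies.

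One small slip in your third step: only the monotonicity direction $Q\le\hat q_{1-\alpha}$ is needed for the lower bound, and your stated reason for the converse is misphrased. Moving the mass at $S_{n+1}$ to $+\infty$ leaves the quantile unchanged when $S_{n+1}>Q$, which gives the converse by contraposition, not when $S_{n+1}$ is at most the quantile. For example, with equal weights on scores $1,2,3$, $1-\alpha=1/2$ and $S_{n+1}=1$, you get $Q=2$ but $\hat q_{1-\alpha}=3$.
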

We next introduce our main method, \texttt{MS-RLCP}, for reliable predictive coverage with heterogeneous data.

\subsection{Multi-Source RLCP}\label{sec:MS-RLCP}

We now formally present our main method, \texttt{MS-RLCP}. Following the intuition from Section~\ref{sec:warmup} and the outline in Section~\ref{sec:our_approach}, we combine the local coverage properties of source-specific RLCP prediction sets with a data-driven rule that selects the source with the strongest local representation near the test point.

\begin{algorithm}[t]
\caption{Multi-Source Randomly Localized Conformal Prediction (\texttt{MS-RLCP})}
\label{alg:ms_rlcp}
\small
\begin{algorithmic}[1]
\REQUIRE Source datasets $\{\cD_k\}_{k=1}^K$, test feature $X_{n+1}$, kernel $H$, target miscoverage level $\alpha$, and score-fitting routine $\mathsf{FitScore}$.
\begin{phasebox}{phaseblue}{Randomize \\ Test Feature}
\STATE Draw $\tilde X_{n+1} \sim H(X_{n+1},\cdot)$;
\end{phasebox}
\begin{phasebox}{phasegreen}{Train \& \\ Align Sources}
\FOR{$k=1,\dots,K$}
    \STATE Split $\mathcal D_k=\mathcal D_{k,\mathrm{train}}\sqcup \mathcal D_{k,\mathrm{cal}}$ and fit $s_k \leftarrow \mathsf{FitScore}(\mathcal D_{k,\mathrm{train}})$;
    \STATE Compute the local source alignment score
    $$
    \hat w_k
    \leftarrow
    \frac{1}{|\mathcal D_{k,\mathrm{train}}|}
    \sum_{(Y,X)\in \mathcal D_{k,\mathrm{train}}}
    H(X,\tilde X_{n+1});
    $$
\ENDFOR
\STATE Select $\hat k \leftarrow \arg\max_{k\in[K]} \hat w_k$;
\end{phasebox}
\begin{phasebox}{phasepink}{Local \\ Calibration}
\STATE For each $(Y_{i,\hat k},X_{i,\hat k})\in \mathcal D_{\hat k,\mathrm{cal}}$, set $S_{i,\hat k} \leftarrow s_{\hat k}(X_{i,\hat k},Y_{i,\hat k})$;
\STATE Set
$
Z \leftarrow
H(X_{n+1},\tilde X_{n+1})
+
\sum_{\mathcal D_{\hat k,\mathrm{cal}}}
H(X_{i,\hat k},\tilde X_{n+1});
$
\STATE Set $w_{i,\hat k} \leftarrow H(X_{i,\hat k},\tilde X_{n+1})/Z$ and $w_{n+1}\leftarrow H(X_{n+1},\tilde X_{n+1})/Z$;
\STATE Compute
$
\hat q \leftarrow
\mathrm{Quantile}_{1-\alpha}\!\left(
\sum_{\mathcal D_{\hat k,\mathrm{cal}}}
w_{i,\hat k}\delta_{S_{i,\hat k}}
+
w_{n+1}\delta_{+\infty}
\right);
$
\end{phasebox}
\begin{phasebox}{phasepurple}{Predict}
\STATE \textbf{Return}
$
\hat C_n(X_{n+1})
\leftarrow
\{y\in\mathcal Y: s_{\hat k}(X_{n+1},y)\le \hat q\}.
$
\end{phasebox}
\end{algorithmic}
\end{algorithm}

Consider a test sample $(Y_{n+1},X_{n+1})$ drawn from an unknown test distribution $P_{\text{test}} = P_{Y\mid X}\times P_{\text{test},X}$. Given a kernel $H$, generate the perturbed test feature conditional on $X_{n+1}$:
\begin{align}\label{eq:perturbed_X}
   \tilde X_{n+1}\sim H(X_{n+1},\cdot).
\end{align}

\paragraph{Step I: Source-Specific Prediction Sets with Local Coverage.}
To construct prediction sets with good local coverage, we apply the RLCP framework from Section~\ref{sec:local_coverage_RLCP} independently within each source. For each $k\in[K]$, we partition the data $\cD_k=\cD_{k,\train}\bigsqcup\cD_{k,\cal}$ into training and calibration subsets and construct an RLCP prediction set as in \eqref{eqn:def_RLCP}:
\begin{equation}\label{eq:k_source_pred_set}
\hat C_k\!\left(X_{n+1},\tilde X_{n+1}\right)
= \hat C_{n_k}^{\rm RLCP}\!\left(X_{n+1},\tilde X_{n+1};\cD_{k,\train},\cD_{k,\cal}\right).
\end{equation}
This is achieved by training a source-specific score function $s_k$ on $\cD_{k,\train}$ and calibrating it using the data-dependent quantile threshold computed from $\cD_{k,\cal}$.

\paragraph{Step II: Source Identification through Data-Dependent Alignment.}
Next, to complete the construction of the prediction set, we align the test observation with the most appropriate source. Specifically, for each source, we evaluate how likely the observed perturbed test point $\tilde X_{n+1}$ would be if the original feature were drawn from that source and then perturbed according to~\eqref{eq:perturbed_X}. To that end, for each $k\in [K]$ let $P_{k,\tilde X}$ be the distribution of $\tilde X$, where $\tilde X$ is obtained by first sampling $X\sim P_{k,X}$ and then generating $\tilde X\mid X\sim H(X,\cdot)$. Thus, $P_{k,\tilde X}$ admits the density
\begin{align}
\label{eq:pop_den_Xtilde_k}
\tilde x \;\mapsto\; \mathbb{E}_{X \sim P_{k,X}}\!\left[H(X,\tilde x)\right].
\end{align}
We select the source under which the perturbed test point has the largest density, giving the oracle source selection rule
\begin{equation}\label{eq:k_oracle}
k_{\mathrm{oracle}}(\tilde X_{n+1})
\;:=\;
\arg\max_{k \in [K]}
\mathbb{E}_{X \sim P_{k,X}}\!\left[H(X,\tilde X_{n+1})\right].
\end{equation}
Since the population density in \eqref{eq:pop_den_Xtilde_k} is unknown, we estimate it using the empirical mean based on the training samples $\mathcal D_{k,\mathrm{train}}$. This leads to the following data-dependent source selection rule:
\begin{equation}\label{eq:source_choice}
\hat{k}(\tilde X_{n+1})
=\arg\max_{k \in [K]}
\frac{1}{n_{k,\mathrm{train}}}
\sum_{i=1}^{n_{k,\mathrm{train}}}
H\!\left(X_{i,k},\tilde X_{n+1}\right),
\end{equation}
where $n_{k,\mathrm{train}} = |\mathcal D_{k,\mathrm{train}}|$. Ties in both selection rules can be resolved using a pre-fixed rule. With this choice of source, we report the \texttt{MS-RLCP} prediction set
\begin{equation}\label{eq:MSRLCP_pred_set}
    \hat C_n(X_{n+1}):=\hat C_{\hat k(\tilde X_{n+1})}\left(X_{n+1},\tilde X_{n+1}\right),
\end{equation}
where the prediction set on the right-hand side is defined in \eqref{eq:k_source_pred_set}. The resulting procedure is summarized in Algorithm~\ref{alg:ms_rlcp}.

\begin{remark}
A key feature of our framework is that we allow \emph{source-specific scores} in \textbf{Step I}. In particular, each source $k\in[K]$ is free to choose its score function $s_k$. This flexibility allows each source to draw on its own strengths and account for its particular data characteristics and sample diversity. In contrast to conformal approaches that aggregate data and learn a single score function from the pooled dataset \citep{lu2023federated, plassier2024efficient, zhu2024federated}, our approach requires no data sharing during model training, substantially reducing the corresponding communication burdens.
\end{remark}
In Section~\ref{sec:msrlcp_theory_main}, we establish the validity of the \texttt{MS-RLCP} prediction set defined in \eqref{eq:MSRLCP_pred_set}. Before turning to that result, we further examine the role of the perturbed test feature $\tilde X_{n+1}$ in our procedure.

\subsection{Role of $\tilde X_{n+1}$ in \texttt{MS-RLCP}}

The perturbed test feature $\tilde X_{n+1}$ plays a central role in constructing the \texttt{MS-RLCP} prediction set in \eqref{eq:MSRLCP_pred_set}: in \textbf{Step I}, it is used to construct source-specific prediction sets with local coverage guarantees, and in \textbf{Step II}, it is used to identify an appropriate source. We now examine its role in both components and provide justification for its use in both components of the procedure.

\paragraph{Prediction Sets with Local Coverage using $\tilde X_{n+1}$.}
A unifying principle underlying our construction, first illustrated in Section~\ref{sec:warmup} and developed further in Section~\ref{sec:MS-RLCP}, is that each source-specific prediction set aims to satisfy a notion of \emph{local coverage}. We introduce the perturbed test feature $\tilde X_{n+1}$ to define such a notion. A natural question is whether one can avoid this additional randomness and instead seek stronger, test-conditional coverage guarantees. At the extreme, one might hope to construct, for each source $k\in[K]$, a prediction set $\hat C_k$ such that $\mathbb{P}\bigl(Y_{n+1}\in\hat C_k(X_{n+1})\mid X_{n+1}\bigr)\ge 1-\alpha$. This requirement is strictly stronger than those in \eqref{eq:cond_cover} and \eqref{eq:rlcp_target}. If such a construction were possible, choosing \emph{any} source based on the test feature would preserve the desired coverage guarantee.

However, even in the single-source setting, such conditional coverage guarantees are generally impossible without producing uninformative prediction sets, that results in infinite length in regression settings \citep{vovk2005algorithmic}. We sidestep this limitation by adopting a principled notion of local coverage through the perturbed feature $\tilde X_{n+1}$, as in \eqref{eq:rlcp_target}.

While the framework could, in principle, accommodate alternative relaxations of conditional coverage, such as targeting coverage within small balls in the feature space $\mathcal{X}$, we adopt this approach for its tractability. It enables a precise theoretical analysis and the coverage guarantees presented in Section~\ref{sec:msrlcp_theory_main}.

\paragraph{Source Identification through the Perturbed Test Feature $\tilde X_{n+1}$.}
The goal of source identification is to select the source that is most likely to have generated the test feature $X_{n+1}$. Assuming that each source feature distribution $P_{k,X}$ admits a density $f_{k,X}$, a natural oracle strategy would be to select a source in $\arg\max_{k\in[K]} f_{k,X}(X_{n+1})$. In practice, however, accurately estimating these densities under minimal assumptions is challenging, particularly in high-dimensional settings \citep{tsybakov2008nonparametric}. We circumvent this difficulty by introducing the perturbed test feature $\tilde X_{n+1}$. In particular from \eqref{eq:pop_den_Xtilde_k} recall that the marginal density of the perturbed feature under source $k$, evaluated at $\tilde x$, is $\mathbb{E}_{X\sim P_{k,X}}\left[H(X,\tilde x)\right]$. This quantity can be easily estimated at each source using an empirical average, making it useful for source selection. As a result, the perturbed test feature $\tilde X_{n+1}$ provides a principled and computationally tractable way to compare local representation across sources, aligning source selection with the notion of local coverage introduced in \textbf{Step I}.

\section{Theoretical Guarantees}\label{sec:msrlcp_theory_main}
We now establish coverage guarantees for \texttt{MS-RLCP}. We begin by introducing an \emph{envelope distribution} that captures the aggregate representation of the feature space across sources and plays a central role in our theoretical analysis.

\begin{figure}[!t]
    \centering
    \fbox{
    \includegraphics[width=0.5\linewidth]{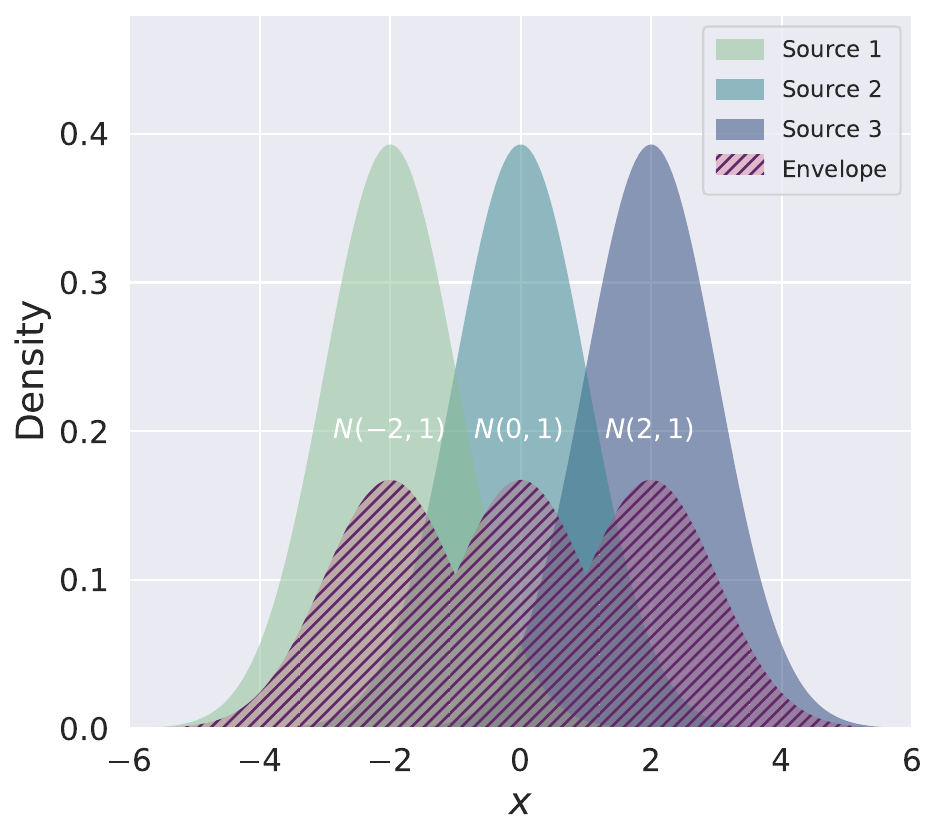}}
    \caption{Illustration of the envelope density for $\cX=\R$ with $K=3$ source distributions $P_{1,X}, P_{2,X}, P_{3,X}$ given by $\rmN(-2,1)$, $\rmN(0,1)$, and $\rmN(2,1)$, respectively, shown as black curves. The corresponding envelope density $\bar f_X$ (shaded) captures the aggregate representation of the feature space across sources.}
    \label{fig:envelope_density}
\end{figure}

Suppose that each source feature distribution $P_{k,X}$ admits a density $f_{k,X}$ with respect to a common dominating measure $\nu$. This imposes no additional assumption, since one can always choose $\nu=\tfrac{1}{K}\sum_{k=1}^{K}P_{k,X}$ and note that for each $k\in [K]$, $P_{k,X}\ll \nu$. With this notation laid out, we define the envelope distribution as follows.

\begin{definition}\label{def:envelope_density}
Define the \emph{envelope density} by $\bar f_X(x)=B^{-1}\max_{k\in[K]}f_{k,X}(x)$, where $B=\int_{\cX}\max_{k\in[K]}f_{k,X}(x)\,\d\nu(x)$ is the normalizing constant. We denote the corresponding \emph{envelope distribution} by $\bar P_X$.
\end{definition}

The name ``envelope'' reflects that the unnormalized density $\max_{k\in[K]}f_{k,X}$ is the smallest function that upper bounds all source densities pointwise. After normalization, it defines a distribution that captures local representation across sources, as illustrated in Figure~\ref{fig:envelope_density}.

By construction, each source feature distribution $P_{k,X}$ is absolutely continuous with respect to $\bar P_X$. Consequently, the density $g_{k,X}=\d P_{k,X}/\d\bar P_X$ is well defined. The following lemma records this fact.

\begin{lemma}\label{lemma:bdd_g_k}
For every $k\in[K]$, the source feature distribution $P_{k,X}$ admits a density $g_{k,X}$ with respect to $\bar P_X$. Moreover, $\max_{k\in[K]}g_{k,X}(x)=B$ for $\bar P_X$-almost every $x$, where $B$ is defined in Definition~\ref{def:envelope_density}.
\end{lemma}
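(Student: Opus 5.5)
The plan is to write down the candidate density explicitly and verify it by the chain rule for Radon--Nikodym derivatives. Set $M(x)=\max_{k\in[K]}f_{k,X}(x)$, so that $\bar f_X = M/B$. First I will check that $0<B<\infty$. The upper bound follows from $M\le\sum_k f_{k,X}$, which gives $B\le K$. The lower bound follows from $M\ge f_{1,X}$, which gives $B\ge 1$. Hence $\bar P_X$ is a genuine probability measure with density $M/B$ with respect to $\nu$.

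Next I will show $P_{k,X}\ll\bar P_X$. Let $A$ be measurable with $\bar P_X(A)=0$. Then $\int_A M\,\d\nu=0$, so $M=0$ $\nu$-a.e.\ on $A$. Since $0\le f_{k,X}\le M$, also $f_{k,X}=0$ $\nu$-a.e.\ on $A$, and therefore $P_{k,X}(A)=0$.

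I will then propose the explicit version
\[
g_{k,X}(x)=\frac{B\,f_{k,X}(x)}{M(x)}\ \text{ when } M(x)>0, \qquad g_{k,X}(x)=0 \ \text{ otherwise},
\]
and check that it is a density of $P_{k,X}$ with respect to $\bar P_X$. For any measurable $A$,
\[
\int_A g_{k,X}\,\d\bar P_X=\int_{A\cap\{M>0\}}\frac{Bf_{k,X}}{M}\cdot\frac{M}{B}\,\d\nu=\int_{A\cap\{M>0\}} f_{k,X}\,\d\nu=P_{k,X}(A).
\]
The last equality uses that $f_{k,X}=0$ on $\{M=0\}$. Uniqueness of Radon--Nikodym derivatives then shows that any version of $\d P_{k,X}/\d\bar P_X$ agrees with this $g_{k,X}$ $\bar P_X$-a.e.

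Finally, on $\{M>0\}$ we have $\max_k g_{k,X}(x)=B\max_k f_{k,X}(x)/M(x)=B$. The complementary set $\{M=0\}$ has $\bar P_X$-measure zero. So $\max_k g_{k,X}=B$ holds $\bar P_X$-a.e.

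The main obstacle is only the bookkeeping with versions: the identity holds for a specific version of each $g_{k,X}$, and it has to be transferred to arbitrary versions. Taking the maximum over finitely many $k$ handles this, because a finite union of $\bar P_X$-null exceptional sets is still null.
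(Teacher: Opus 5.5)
Your proposal is correct and follows the paper's route. Absolute continuity comes from the domination $f_{k,X}\le \max_j f_{j,X} = B\bar f_X$, which the paper phrases as $P_{k,X}(A)\le B\bar P_X(A)$, and then $g_{k,X}=f_{k,X}/\bar f_X$ is identified via the chain rule so that $\max_k g_{k,X}=B$. Your treatment of the null set $\{\max_k f_{k,X}=0\}$ and of arbitrary versions is somewhat more careful than the paper's pointwise argument, which implicitly divides by $\bar f_X(x)$ even where it vanishes.
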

By Lemma~\ref{lemma:bdd_g_k}, we may also write $B=\max_{k\in[K]}\|g_{k,X}\|_\infty$, where $\|g\|_\infty=\operatorname*{ess\,sup}_{x\in\cX}|g(x)|$ denotes the $L_\infty(\bar P_X)$ norm of a measurable function $g:\cX\to\R$. 

For weights $\lambda_k>0$ satisfying $\sum_{k=1}^K\lambda_k=1$, let $Q_{\lambda,X}:=\sum_{k=1}^K\lambda_kP_{k,X}$. The following lemma shows that every such mixture shares the same null sets as the envelope distribution.

\begin{lemma}\label{lemma:envelope_mixture_equivalence}
Let $\lambda_{\min}:=\min_{k\in[K]}\lambda_k>0$. Then,
\[
B\lambda_{\min}\bar P_X
\le Q_{\lambda,X}
\le B\bar P_X.
\]
In particular, $Q_{\lambda,X}$ and $\bar P_X$ are equivalent measures (i.e., absolutely continuous w.r.t. each other).
\end{lemma}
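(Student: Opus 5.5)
The plan is to compare densities with respect to the common dominating measure $\nu$. By Definition~\ref{def:envelope_density}, $\bar P_X$ has $\nu$-density $\bar f_X = B^{-1}\max_{k\in[K]} f_{k,X}$. The mixture $Q_{\lambda,X}$ has $\nu$-density $q_\lambda:=\sum_{k=1}^K \lambda_k f_{k,X}$. It therefore suffices to establish the pointwise sandwich
\[
\lambda_{\min}\max_{k\in[K]} f_{k,X}(x)\;\le\; q_\lambda(x)\;\le\;\max_{k\in[K]} f_{k,X}(x)\qquad\text{for $\nu$-a.e.\ } x.
\]
Integrating over an arbitrary measurable set $A$ then gives $B\lambda_{\min}\bar P_X(A)\le Q_{\lambda,X}(A)\le B\bar P_X(A)$, because $\int_A \max_k f_{k,X}\,\d\nu = B\,\bar P_X(A)$.

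For the upper bound, use $f_{k,X}\le \max_j f_{j,X}$ for each $k$ together with $\sum_k\lambda_k=1$ and $\lambda_k>0$. This gives $\sum_k\lambda_k f_{k,X}\le \max_j f_{j,X}$.

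For the lower bound, fix $x$ and let $k^*(x)$ attain the maximum. Since all terms are nonnegative, $q_\lambda(x)\ge \lambda_{k^*}f_{k^*,X}(x)\ge \lambda_{\min}\max_k f_{k,X}(x)$. No measurability issue arises, because the bound is pointwise and only uses the value of the maximum, not a measurable selection of $k^*$.

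Equivalence of the two measures follows at once from the sandwich. If $\bar P_X(A)=0$, the upper bound gives $Q_{\lambda,X}(A)=0$. If $Q_{\lambda,X}(A)=0$, then $\bar P_X(A)\le Q_{\lambda,X}(A)/(B\lambda_{\min})=0$. Here $\lambda_{\min}>0$ is assumed, and $B>0$ holds because $B\ge\int f_{1,X}\,\d\nu=1$. There is no real obstacle; the only care needed is to note $B\in[1,K]$ is finite and positive so the normalization is legitimate.
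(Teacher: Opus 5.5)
Your proposal is correct and follows the paper's proof: both establish the pointwise sandwich $\lambda_{\min}\max_{k\in[K]}f_{k,X}\le\sum_{k=1}^K\lambda_kf_{k,X}\le\max_{k\in[K]}f_{k,X}$, integrate it over measurable sets using $\max_{k}f_{k,X}=B\bar f_X$, and deduce equivalence of the two measures from $B\lambda_{\min}>0$. Your remark that $B\ge 1$, which ensures $B>0$ so the lower bound can be divided by $B\lambda_{\min}$, is a detail the paper leaves implicit.
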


Beyond the covariate shift assumption, we impose no structural assumption on the test distribution $P_{\text{test}}$ except the following:

\begin{assumption}\label{assumption:test_density}
The test feature distribution $P_{\test,X}$ is absolutely continuous with respect to the envelope distribution $\bar P_X$, and we write $g_{\test,X}$ to denote the corresponding density.
\end{assumption}

Assumption~\ref{assumption:test_density} ensures that the test feature distribution assigns no mass to sets that have zero probability under every source. This is necessary for informative distribution-free prediction, since otherwise, the source data can provide no information about the feature--response relationship on those sets.

By Lemma~\ref{lemma:envelope_mixture_equivalence}, any $P_{\text{test}}$ that is absolutely continuous with respect to any mixture of the source distributions (with possibly zero mixture weights) satisfies Assumption~\ref{assumption:test_density}. Existing multi-source conformal methods \citep{yang2026multi,lu2023federated,ying2024informativeness} often assume that the test feature distribution itself is a mixture of the source distributions. Our framework allows a substantially broader class: the test distribution need not be a mixture, but may only be absolutely continuous with respect to one. 

\subsection{Finite-sample bound on coverage}
We now state finite-sample lower bounds on the coverage of
\texttt{MS-RLCP}. For clarity of exposition, we defer the general
result to Theorem~\ref{thm:main_coverage}, stated and proved in
Appendix~\ref{app:general_analysis}, and present a simplified
bound here. Specifically, we assume that the densities
$g_{1,X},\ldots,g_{K,X}$ and $g_{\test,X}$ are globally
$L$-Lipschitz.

Let $n_\eff:=\min_{k\in[K]}n_{k,\train}$ denote the smallest
training split size among the $K$ sources, which we refer to
as the effective training sample size.

\begin{theorem}\label{thm:msrlcp_lipschitz}
Fix $\alpha\in(0,1)$. Suppose $\cX=\R^d$ for some integer
$d\ge1$, Assumption~\ref{assumption:test_density} holds, and
the densities $g_{1,X},\ldots,g_{K,X}$ and $g_{\test,X}$ are
globally $L$-Lipschitz, with $\|g_{\test,X}\|_\infty<\infty$.
Let $H$ be a kernel as in Section~\ref{sec:local_coverage_RLCP},
with $\|H\|_\infty<\infty$ and
$\E_{X\sim\bar P_X,\,\tilde X\mid X\sim H(X,\cdot)}
\|X-\tilde X\|_2<\infty$, and suppose $n_\eff\ge2$.
Then the \texttt{MS-RLCP} prediction set satisfies
\begin{align}\label{eq:lipschitz_kernel_coverage}
\P\bigl(Y_{n+1}\in\hat C_n(X_{n+1})\bigr)
\ge{}&
\left(1-\alpha-\frac1{n_\eff}\right)
-
2L\left(1+\frac{2\|g_{\test,X}\|_\infty}{B}\right)
\E_{\substack{X\sim\bar P_X\\
              \tilde X\mid X\sim H(X,\cdot)}}
\|X-\tilde X\|_2
\nonumber\\
&-
\P_{\substack{X\sim P_{\test,X}\\
              \tilde X\mid X\sim H(X,\cdot)}}
\left(
\max_{k\in[K]}\E_{X'\sim P_{k,X}}
\left[H(X',\tilde X)\right]
\le t_{n_\eff}
\right),
\end{align}
where $t_{n_\eff}:=2\|H\|_\infty
\sqrt{2\log(2Kn_\eff)/n_\eff}$.
\end{theorem}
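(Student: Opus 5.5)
We first reduce to per-source RLCP statements by conditioning on $\tilde X_{n+1}$ and on the training splits $\cD_{\rm train}:=\{\cD_{k,\train}\}_{k=1}^K$. Given these, the selected index $\hat k=\hat k(\tilde X_{n+1})$ is fixed. Coverage then depends only on $\cD_{\hat k,\cal}$ and $(X_{n+1},Y_{n+1})$, with $X_{n+1}$ distributed according to the conditional law of the test feature given $\tilde X_{n+1}$. Lemma~\ref{lemma:rlcp_hore_barber_validity} gives exact conditional coverage $1-\alpha$ when $X_{n+1}$ has the conditional law induced by $P_{\hat k,X}$ given $\tilde X_{n+1}=\tilde x$. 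This conditional law has density
\[
x\mapsto \frac{f_{k,X}(x)H(x,\tilde x)}{\E_{P_{k,X}}[H(X',\tilde x)]}.
\]
The test conditional law has the same form with $f_{k,X}$ replaced by the test density. Coverage conditional on $\tilde X_{n+1}=\tilde x$ therefore differs from $1-\alpha$ by at most the total variation distance between these two posteriors. The plan is thus to write
\[
\P(\text{miscover})\le \alpha+\E_{\tilde X\sim P_{\test,\tilde X}}\bigl[\TV(\pi_{\test}(\cdot\mid\tilde X),\pi_{\hat k}(\cdot\mid\tilde X))\bigr].
\]
We then bound this TV term on a good event and charge everything else to the remaining terms in~\eqref{eq:lipschitz_kernel_coverage}.

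\textbf{Step 1 (TV via Lipschitzness).} Work with densities relative to $\bar P_X$. Write $\pi_k(x\mid\tilde x)\propto g_{k,X}(x)H(x,\tilde x)\bar f_X(x)$ and $\pi_{\test}(x\mid\tilde x)\propto g_{\test,X}(x)H(x,\tilde x)\bar f_X(x)$. Let $\bar\pi(\cdot\mid\tilde x)$ be the posterior under $\bar P_X$, and set $m_k(\tilde x)=\E_{\bar\pi}[g_{k,X}]$. The $L$-Lipschitz property gives
\[
\int |g_{k,X}(x)-g_{k,X}(\tilde x)|\,\bar\pi(\d x\mid\tilde x)\le L\,\E_{\bar\pi}\|X-\tilde x\|,
\]
and similarly for $g_{\test,X}$. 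A standard ratio-of-normalizers computation then yields
\[
\TV(\pi_{\test},\pi_k)\lesssim L\,\E_{\bar\pi}\|X-\tilde x\|\,\Bigl(\frac{1}{m_k(\tilde x)}+\frac{1}{m_{\test}(\tilde x)}\Bigr)
\]
up to constants. This bound must then be integrated against the test marginal of $\tilde X$, whose density relative to the $\bar P$-marginal of $\tilde X$ is $m_{\test}(\tilde x)$. The factor $1/m_{\test}$ cancels this density. The factor $1/m_k$ turns into $m_{\test}/m_k$, and we need this to be at most of order $\|g_{\test,X}\|_\infty/B$.

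\textbf{Step 2 (selection is near-oracle).} The bound $m_{\test}/m_k\lesssim\|g_{\test,X}\|_\infty/B$ requires that the selected source have $m_{\hat k}(\tilde x)$ comparable to the maximum, which is roughly $B$ up to Lipschitz errors by Lemma~\ref{lemma:bdd_g_k}. We use Hoeffding's inequality with the bounded summands $H\le\|H\|_\infty$ and a union bound over the $K$ sources, each estimating $\E_{P_{k,X}}H(X',\tilde x)$. With probability at least $1-1/n_\eff$, every empirical average $\hat w_k$ is within $t_{n_\eff}/2$ of its mean, which fixes the choice of $t_{n_\eff}$. On this event, whenever $\max_k\E_{P_{k,X}}H>t_{n_\eff}$, the selected source has population value at least $\max_k(\cdot)-t_{n_\eff}$, and by the time we square up it is at least half the maximum. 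The complementary event, where the maximal density is at most $t_{n_\eff}$, is bounded by $1$ and contributes exactly the last probability in~\eqref{eq:lipschitz_kernel_coverage}. The failure probability $1/n_\eff$ of the concentration event gives the $-1/n_\eff$ term.

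\textbf{Step 3 (assembly).} On the good event, $\E_{P_{\hat k}}H\ge\tfrac12\max_k\E_{P_k}H=\tfrac12\int \max_k g_k\,H\,\d\bar P_X$, and the last expression equals $\tfrac B2\,\bar p(\tilde x)$ by Lemma~\ref{lemma:bdd_g_k}. Integrating the Step 1 bound over $\tilde X$ then gives
\[
\E\,\TV\le 2L\Bigl(1+\frac{2\|g_{\test,X}\|_\infty}{B}\Bigr)\E_{\bar P}\|X-\tilde X\|_2 .
\]
The main obstacle is Step 1: arranging the ratio-of-normalizers computation so that every $1/m$ factor is either cancelled by the integrating density or controlled by the selection guarantee. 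If the direct approach fails, the fallback is to bound the coverage gap by $\int|g_{\test}-c\,g_{\hat k}|H\,\d\bar P$ divided by the test marginal density. This bound is linear in $g$, so the Lipschitz errors would integrate out cleanly.
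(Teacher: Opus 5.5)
Your architecture is the paper's. You condition on $\tilde X_{n+1}$ and $\cD_{\train}$, pay the total variation between the test and selected-source posteriors, and restrict to the set $\cG$ where $\mu_{\hat k}>\tfrac12\mu_{k_{\oracle}}$. Off $\cG$ you charge $1/n_{\eff}$ plus the low-density probability, and you pass from the test law of $\tilde X$ to the envelope law so that the density $m_{\test}$ cancels $1/m_{\test}$. Your Step~1 bound $\TV(\pi_{\test},\pi_k)\le L\,\epsilon(\tilde x)\bigl(1/m_{\test}+1/m_k\bigr)$, with $\epsilon(\tilde x):=\E_{\bar\pi}\|X-\tilde x\|_2$, is correct. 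It is a slightly cleaner linearization than the paper's cross-product quantity $T_k$.

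The gap is in Step~3. The identity $\max_k\E_{P_{k,X}}[H(X,\tilde x)]=\int\max_k g_{k,X}\,H(\cdot,\tilde x)\,\d\bar P_X$ is false, since a maximum of integrals is only bounded \emph{above} by the integral of the maximum. Lemma~\ref{lemma:bdd_g_k} gives $\max_k g_{k,X}=B$ only $\bar P_X$-almost everywhere. When the kernel neighbourhood of $\tilde x$ straddles regions where different sources dominate (e.g.\ near a crossing in Figure~\ref{fig:envelope_density}), $M(\tilde x):=\max_k m_k(\tilde x)$ is strictly below $B$. So $m_{\hat k}\ge B/2$ is unavailable, and the constant $2L(1+2\|g_{\test,X}\|_\infty/B)$ in Step~3 is asserted rather than derived. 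Your Step~2 remark that the maximum is ``roughly $B$ up to Lipschitz errors'' is the accurate statement.

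What you can prove is $M(\tilde x)\ge B-2L\,\epsilon(\tilde x)$. To see it, draw $X'\sim\bar\pi(\cdot\mid\tilde x)$, pick $k$ with $g_{k,X}(X')=B$, use Lipschitzness, and average over $X'$. This lower bound can be zero or negative, so you cannot divide by it, and dividing would in any case give a ratio rather than the linear bound claimed.

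The missing device is the absorption step of Lemma~\ref{lemma:Delta_G1_bdd_lipschitz}:
\begin{itemize}
\item Set $\ell(\tilde x):=m_{\test}(\tilde x)\,\E_{\cD_{\train}}[\TV(\pi_{\test},\pi_{\hat k})\one_{\{\tilde x\in\cG\}}]$ and note the trivial bound $\ell\le\|g_{\test,X}\|_\infty$.
\item Write $B\ell=M\ell+(B-M)\ell\le M\ell+2L\epsilon\|g_{\test,X}\|_\infty$.
\item Your Step~1 bound and $m_{\hat k}>M/2$ on $\cG$ give $M\ell\le L\epsilon(B+2\|g_{\test,X}\|_\infty)$.
\item Hence $\ell\le L\epsilon(1+4\|g_{\test,X}\|_\infty/B)$, which integrates over $\bar P_{\tilde X}$ to something no larger than the stated term.
\end{itemize}

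There is also a smaller slip in Step~2. A uniform deviation of $t_{n_{\eff}}/2$ only yields $\mu_{\hat k}\ge\mu_{k_{\oracle}}-t_{n_{\eff}}$. That is not at least $\tfrac12\mu_{k_{\oracle}}$ when $\mu_{k_{\oracle}}\in(t_{n_{\eff}},2t_{n_{\eff}})$. You need deviation $t_{n_{\eff}}/4$; $t_{n_{\eff}}$ is calibrated exactly so that Hoeffding with a union bound over the $K$ sources gives failure probability $1/n_{\eff}$ at that level.
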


The proof is given in Appendix~\ref{sec:proof_of_msrlcp_lipschitz}. For further discussion, we simplify this bound for the special case of box kernel.

\begin{corollary}
Consider the setting of Theorem~\ref{thm:msrlcp_lipschitz}, and suppose that $H$ is the box kernel, defined in~\eqref{eq:gauss_box_kernels}. Then,
    \begin{align*}
&\left(1-\alpha-\frac1{n_\eff}\right)
-2Lh\left(1+\frac{2\|g_{\test,X}\|_\infty}{B}\right)-
\P_{\substack{X\sim P_{\test,X}\\
              \tilde X\mid X\sim H(X,\cdot)}}\left(
\max_{k\in[K]}P_{k,X}\bigl(B_h(\tilde X)\bigr)
\le\tau_{n_\eff}
\right),
\end{align*}
where $B_h(x)$ denotes the closed Euclidean ball of radius $h$ centered at $x$, and $\tau_{n_\eff}:= 2\sqrt{\frac{2\log(2Kn_\eff)}{n_\eff}}$.
\end{corollary}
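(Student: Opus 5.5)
The corollary follows by specializing the three terms of \eqref{eq:lipschitz_kernel_coverage} to the box kernel $H(x,y)=(V_dh^d)^{-1}\mathbf{1}\{\|x-y\|_2\le h\}$ and checking the hypotheses of Theorem~\ref{thm:msrlcp_lipschitz}. I read the displayed expression as a lower bound on $\P(Y_{n+1}\in\hat C_n(X_{n+1}))$.

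\textbf{Hypotheses.} Symmetry of $H$ is immediate. For each $x$, $H(x,\cdot)$ is the uniform density on $B_h(x)$. Moreover $\|H\|_\infty=(V_dh^d)^{-1}<\infty$. Under $\tilde X\mid X\sim H(X,\cdot)$ we have $\|X-\tilde X\|_2\le h$ almost surely, so the finite-moment condition holds whatever the law of $X$.

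\textbf{Second term.} Since $\|X-\tilde X\|_2\le h$ almost surely under $\bar P_X$ as well, $\E\|X-\tilde X\|_2\le h$. The coefficient $2L(1+2\|g_{\test,X}\|_\infty/B)$ is nonnegative, so replacing the expectation by $h$ can only decrease the right-hand side. This gives the term $2Lh(\cdots)$. The exact value, $\tfrac{d}{d+1}h$, is not needed.

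\textbf{Third term.} For any $\tilde x$,
\[
\E_{X'\sim P_{k,X}}[H(X',\tilde x)]=(V_dh^d)^{-1}P_{k,X}\bigl(\|X'-\tilde x\|_2\le h\bigr)=(V_dh^d)^{-1}P_{k,X}\bigl(B_h(\tilde x)\bigr),
\]
using symmetry of the Euclidean norm. Maximizing over $k$ preserves the common positive factor. With $t_{n_\eff}=2(V_dh^d)^{-1}\sqrt{2\log(2Kn_\eff)/n_\eff}$, the event
\[
\max_k \E_{X'\sim P_{k,X}}[H(X',\tilde X)]\le t_{n_\eff}
\]
is therefore identical to $\max_k P_{k,X}(B_h(\tilde X))\le\tau_{n_\eff}$. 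The probabilities under $X\sim P_{\test,X}$ then coincide.

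\textbf{Conclusion and main obstacle.} Substituting both facts into \eqref{eq:lipschitz_kernel_coverage} yields the stated bound. There is no real obstacle. The only points needing care are the direction of the inequality when bounding the second term by $h$, which uses the sign of its coefficient, and the exact cancellation of the normalizing constant $V_dh^d$ between $\|H\|_\infty$ in $t_{n_\eff}$ and the kernel mean.
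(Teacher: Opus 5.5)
Your proposal is correct and follows the paper's route. The paper simply notes that $\|X-\tilde X\|_2\le h$ almost surely for the box kernel and calls the corollary immediate from Theorem~\ref{thm:msrlcp_lipschitz}. Your explicit check that the factor $(V_dh^d)^{-1}$ in $\|H\|_\infty$ cancels against the kernel mean, which turns the event $\max_k \E_{X'\sim P_{k,X}}[H(X',\tilde X)]\le t_{n_\eff}$ into $\max_k P_{k,X}(B_h(\tilde X))\le\tau_{n_\eff}$, spells out the step the paper leaves implicit.
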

Since for box kernel, $\|X-\tilde X\|_2\le h$ almost surely, the corollary is immediate from Theorem~\ref{thm:msrlcp_lipschitz}. Each of the terms in the bound can be interpreted as follows:

\smallskip
\noindent\textbf{(a)}
The first term is the target coverage level $1-\alpha$,
reduced by a finite-sample correction $1/n_\eff$ that
vanishes as the sample size at every source grows.

\medskip
\noindent\textbf{(b)}
The second term accounts for localization and perturbation
and is of order $O(h)$ for the box kernel. Choosing a smaller
$h$ reduces this term but can lead to larger prediction sets.
A typical approach is to let $h$ vary with $n_\eff$; we return
to this tradeoff later in the section. For a general kernel,
the corresponding quantity is the average perturbation distance
$\E_{X\sim\bar P_X,\,\tilde X\mid X\sim H(X,\cdot)}
\|X-\tilde X\|_2$.

\medskip
\noindent\textbf{(c)}
The final term measures how often the perturbed test feature
falls in a region that is poorly represented across sources.
Under the Lipschitz assumption and the envelope construction,
\[
\max_{k\in[K]}P_{k,X}\bigl(B_h(\tilde x)\bigr)
\ge c_h\bar P_X\bigl(B_h(\tilde x)\bigr),
\qquad
c_h:=(B-2Lh)\vee\frac{B}{K}>0.
\]
Here, $\bar P_X(B_h(\tilde x))$ is the envelope mass within
a ball of radius $h$ around $\tilde x$; a small value indicates
limited representation in that neighborhood.
Consequently, under Assumption~\ref{assumption:test_density}
and the boundedness of $g_{\test,X}$,
\begin{align*}
\P_{\substack{X\sim P_{\test,X}\\
              \tilde X\mid X\sim H(X,\cdot)}}
\left(
\bar P_X\bigl(B_h(\tilde X)\bigr)
\le\frac{\tau_{n_\eff}}{c_h}
\right)\le
\|g_{\test,X}\|_\infty\cdot
\P_{\substack{X\sim\bar P_X\\
              \tilde X\mid X\sim H(X,\cdot)}}
\left(
\bar P_X\bigl(B_h(\tilde X)\bigr)
\le\frac{\tau_{n_\eff}}{c_h}
\right).
\end{align*}
For fixed $h$, $K$, and source and test distributions, we have
$\tau_{n_\eff}/c_h\to0$ as $n_\eff\to\infty$.
Moreover, noting that
$\bar P_X(B_h(\tilde x))/(V_dh^d)$ is the marginal density of
$\tilde{X}$, by the dominated convergence theorem, the final probability term vanishes in large sample setting.

The general result in Theorem~\ref{thm:main_coverage} allows non-Lipschitz densities while retaining the same three components and their corresponding interpretations, with Lipschitz parameters replaced by modulus of continuities defined in \eqref{eq:modulus_of_continuity}.

\begin{remark}
Even the simplified result allows considerable irregularity in the source and test distributions: the Lipschitz assumption concerns only their densities with respect to $\bar P_X$. Thus, their original densities may have discontinuities or other irregularities, provided their variation relative to the envelope is smooth. As an illustration, let $r$ be any symmetric probability density on $[-1,1]$, which need not satisfy standard smoothness assumptions. Consider a two-source prediction problem with
\[
f_{1,X}(x)=\left(1+\frac{x}{2}\right)r(x),
\qquad
f_{2,X}(x)=\left(1-\frac{x}{2}\right)r(x),
\qquad
f_{\test,X}(x)=r(x).
\]
Symmetry ensures that both source densities integrate to one. The corresponding envelope density is $\bar f_X(x)=B^{-1}(1+|x|/2)r(x)$, giving the density ratios
\[
g_{1,X}(x)=\frac{B(1+x/2)}{1+|x|/2},
\qquad
g_{2,X}(x)=\frac{B(1-x/2)}{1+|x|/2},
\qquad
g_{\test,X}(x)=\frac{B}{1+|x|/2}.
\]
These ratios are Lipschitz on $[-1,1]$ regardless of the irregularities in $r$.
\end{remark}

\begin{remark}[Benefit of heterogeneity]
\label{remark:benefit_of_heterogeneity}
    Observe that in Theorem~\ref{thm:msrlcp_lipschitz}, the leading miscoverage term scales as $O(h)$ for moderate sample sizes and holds whenever the test distribution admits a Lipschitz density with respect to the envelope distribution. In contrast, the single-source RLCP result \citep[Theorem~3]{hore2025conformal} yields a similar $O(h)$ rate, but requires Lipschitz smoothness relative to a single source distribution. The class of distributions that are Lipschitz with respect to the envelope is strictly richer, highlighting the advantage of localization in improving robustness to distribution shift by exploiting heterogeneity across sources.
\end{remark}
\subsection{Asymptotic test-conditional coverage}
Since \texttt{MS-RLCP} is built upon the RLCP method, it is worth investigating whether \texttt{MS-RLCP} inherits the strong local coverage properties of RLCP. In particular, here we will study the coverage of \texttt{MS-RLCP} prediction sets, conditional on the true test feature. 

We adopt the multi-source setting with fixed $K$ sources from the previous section. For simplicity of the exposition, we assume balanced source datasets, with each $\cD_k$ containing $2n$ observations, split equally between $\cD_{k,\train}$ and $\cD_{k,\cal}$. Further, we take $\cX=\R^d$, with $d\ge1$, and use the box kernel in \eqref{eq:gauss_box_kernels} with a deterministic bandwidth $h_n>0$ at total sample size $2nK$.

\begin{theorem}\label{thm:conditional_coverage_main}
Fix $\alpha\in (0,1)$ and $x_0\in\R^d$. Under Assumptions~(\ref{A1})--(\ref{A5}), stated formally in Appendix~\ref{app:cond_cov}, we have that
\[
\P_{Y\sim P_{Y\mid X=x_0}}\left(
Y\in\hat C_n(X)
\;\middle|\;
X=x_0
\right)
\longrightarrow1-\alpha\qquad \text{as}~ n\to\infty.
\]
\end{theorem}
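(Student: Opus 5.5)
The plan is to condition on $X_{n+1}=x_0$ and split on the selected source. Since $\tilde X_{n+1}=x_0+h_nU$ with $U$ uniform on the unit ball, the selection rule $\hat k(\tilde X_{n+1})$ compares the empirical masses $\hat w_k=(V_dh_n^d)^{-1}\hat P_{k,X}(B_{h_n}(\tilde X_{n+1}))$.

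Write the coverage as
\[
\sum_{k=1}^K \P\bigl(\hat k=k,\; Y\in\hat C_k(x_0,\tilde X_{n+1})\mid X=x_0\bigr).
\]
The target is then to show that, for each source $k$ with nonvanishing selection probability, $\P\bigl(Y\in\hat C_k(x_0,\tilde X_{n+1})\mid X=x_0,\tilde X_{n+1},\hat k=k\bigr)\to1-\alpha$ in probability. The decoupling is the useful structural fact. The selection event $\{\hat k=k\}$ is measurable with respect to $(\tilde X_{n+1},\{\cD_{j,\train}\}_j)$. Conditionally on $\cD_{k,\train}$ and $\tilde X_{n+1}$, the calibration set $\cD_{k,\cal}$ and the test response $Y\sim P_{Y\mid X=x_0}$ are independent of the selection. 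So conditioning on $\hat k=k$ does not distort the calibration step, and the problem reduces to single-source RLCP test-conditional coverage at $x_0$, uniformly over the random localization center $\tilde X_{n+1}\in B_{h_n}(x_0)$.

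For that single-source statement, I would follow the asymptotic conditional coverage argument for RLCP in \citet{hore2025conformal}, using the stated regularity assumptions (A1)--(A5). The intended content of those assumptions is: $h_n\to0$ with $nh_n^d\to\infty$; the densities $f_{k,X}$ are continuous and positive near $x_0$ for the relevant $k$; the conditional law of the score is continuous in $x$ near $x_0$; and the fitted scores $s_k$ converge to a limit $s_k^\ast$.

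The argument then has three parts. First, by a uniform (VC / Bernstein) law of large numbers over balls, the number of calibration points in $B_{h_n}(\tilde X_{n+1})$ is of order $nh_n^d\to\infty$. The test point's weight $w_{n+1}$ therefore tends to $0$. Second, the weighted empirical distribution of the scores converges uniformly to the conditional score distribution of $s_k(X,Y)$ given $X\in B_{h_n}(\tilde X_{n+1})$, which in turn converges to the law of $s_k(x_0,Y)$ by continuity. Third, the weighted quantile $\hat q$ is therefore close to the $(1-\alpha)$-quantile of $s_k(x_0,Y)$, and continuity of that distribution function at its quantile gives coverage tending to $1-\alpha$.

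Finally, since each of the finitely many conditional coverages converges to $1-\alpha$, summing against the selection probabilities $\P(\hat k=k\mid X=x_0)$, which sum to one, yields the limit. No control of which source is chosen is needed; sources with vanishing local mass near $x_0$ are either rarely selected or contribute with vanishing weight.

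I expect the main obstacle to be the uniformity required by data-adaptive selection. If some source $k$ has $f_{k,X}(x_0)=0$ but is still selected with nonvanishing probability, its local calibration sample may be small. I would handle this by showing that $\hat w_k$ concentrates around $\E[H(X,\tilde X_{n+1})]\approx f_{k,X}(x_0)$ uniformly at rate $\sqrt{\log n/(nh_n^d)}$. This would mean that, with probability tending to one, only sources with $f_{k,X}(x_0)=\max_j f_{j,X}(x_0)>0$ are selected, and that each of them has $\gtrsim nh_n^d$ calibration points in the ball. On that event, the single-source argument applies uniformly over both $k$ and $\tilde X_{n+1}$.
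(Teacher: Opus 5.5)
Your skeleton matches the paper's. You condition on $\tilde X_{n+1}$ and $\cD_\train$ and note that $\hat k$ is then fixed, while the selected source's calibration sample and the test response stay independent of the selection. You then reduce to a single-source box-kernel calibration whose error depends on the local calibration count and on score continuity near $x_0$.

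\textbf{The gap.} It sits in the step you flag as the main obstacle: guaranteeing that the \emph{selected} source has $\gtrsim nh_n^d$ calibration points in $B_{h_n}(\tilde X_{n+1})$. You resolve it by arguing that only sources with $f_{k,X}(x_0)=\max_j f_{j,X}(x_0)>0$ get selected. That requires each $f_{k,X}$ to be continuous at $x_0$, so that the local masses $\mu_k(\tilde x)$ converge to $f_{k,X}(x_0)$ uniformly over $\tilde x\in B_{h_n}(x_0)$. Assumptions (A1)--(A5) give nothing of this kind. (A1) only says $f_{k,X}>0$ almost everywhere, and (A4) only bounds the \emph{envelope} density from below near $x_0$. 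So $f_{k,X}(x_0)$ need not be well defined, and the locally dominant source can keep switching as $h_n\to0$. Your single-source step likewise relies on a limiting score $s_k^\ast$ and on continuity of its distribution function at the quantile, and neither is a hypothesis. As written, you would be proving a variant of the theorem under different assumptions.

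\textbf{The missing idea.} You never need to know \emph{which} source is selected, only a lower bound on its local mass. Write $m(\tilde x):=\max_k P_{k,X}(B_{h_n}(\tilde x))$.
\begin{itemize}
\item Since $\hat k$ maximizes the empirical local training mass, the argument of Lemma~\ref{lemma:Delta_2_bdd} shows that $P_{\hat k,X}(B_{h_n}(\tilde x))\le m(\tilde x)/2$ forces some source's empirical mass to deviate from its mean by at least $m(\tilde x)/4$.
\item Bernstein's inequality plus a union bound over sources bounds that event by $2K\exp(-n\,m(\tilde x)/36)$. It can be applied conditionally on $\tilde x$; no VC-type uniformity is needed, because $\tilde X_{n+1}$ is independent of the data.
\item Off that event, $N_n$ is binomial with success probability above $m(\tilde x)/2$, so $\E[1/(N_n+1)]\le 2/((n+1)m(\tilde x))$.
\item The envelope gives the lower bound with no per-source regularity. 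For large $n$, (A4) yields $m(\tilde x)\ge\frac1K\sum_k P_{k,X}(B_{h_n}(\tilde x))\ge\frac BK\bar P_X(B_{h_n}(\tilde x))\ge\frac{BcV_d}{K}h_n^d$, and $nh_n^d\to\infty$ by (A3).
\end{itemize}

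On the score side, the paper avoids quantile convergence entirely. Given $N_n$, an auxiliary test score drawn from the local score law has coverage exactly $\lceil (N_n+1)(1-\alpha)\rceil/(N_n+1)$ by the rank argument, with ties excluded by (A2). Swapping in the true test-score law $Q_{\hat k,n}(x_0)$ costs at most a Kolmogorov distance, which (A5) controls \emph{uniformly over $k$}. That uniformity is what makes the data-dependent choice of source harmless; see Theorem~\ref{thm:cond_cov_finite_sample}.
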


While we have deferred the formal description of the assumptions, to put informally, our main assumptions require (1) shrinking bandwidth such that $h_n\to0$ and $nh_n^d\to\infty$ (2) positive envelope density near $x_0$, (3) smoothness of the learned score distributions under total variation measure. Such assumptions are atypical in studying test-conditional coverages, and appear in existing works (for e.g., see~\citep{guan2023localized,hore2025conformal}). The proof of this result is given in Appendix~\ref{app:proof_of_test_cond_cov}, and it follows from a more general finite-sample bound on test conditional coverage, we give in Theorem~\ref{thm:cond_cov_finite_sample}.

\section{Numerical Experiments}

Now, we evaluate \texttt{MS-RLCP} on both synthetic and real datasets. In all simulations, we use a Gaussian kernel (see~\eqref{eq:gauss_box_kernels}) and employ the smoothed RLCP construction for source-specific prediction sets to reduce over-coverage (see Appendix~B of \cite{hore2025conformal} for more details).

\subsection{Simulations}\label{sec:sims}
We first evaluate \texttt{MS-RLCP} on a synthetic setting designed to induce systematic heterogeneity. Let $\cX=\R^5$ and consider $K=10$ source distributions with taking $P_{k,X}=\mathcal{N}(\nu_k,I_5)$. To construct $\{\nu_k\}$, enumerate all $10$ pairs $(i_1,i_2)$ with $1\le i_1<i_2\le 5$, and for the $k$th pair $(i_1,i_2)$ we define $\nu_k = (\nu_{k,1},\ldots,\nu_{k,5})$ by
\[
(\nu_k)_j=
\begin{cases}
    -\nu, & j\in \{i_1,i_2\},\\
    \ \nu, & \text{otherwise}.
\end{cases}
\]
This construction induces structured heterogeneity, with each source emphasizing a distinct subpopulation of the feature space. The conditional model is shared across sources and is given by \[Y=\sin(X_1+X_2)+\cos(X_3)+X_4+2X_5+\epsilon\] with  $\epsilon\sim\mathcal{N}(0,1)$. 

\paragraph{Experiment 1: beyond mixture settings.}
In Section~\ref{sec:msrlcp_theory_main} we establish robust coverage of \texttt{MS-RLCP} beyond the standard mixture setting (i.e., where the test population is a mixture of the $10$ source distributions). We define $P_{\rm test}$ with the same conditional $P_{Y\mid X}$ and $P_{{\rm test},X}=\frac{1}{10}\sum_{k=1}^{10}\mathcal{N}(\nu_k^{(\mu)},I_5)$,
where $\nu_k^{(\mu)}$ is defined analogously to $\nu_k$ with $\nu$ replaced by $\mu$. The case $\nu=\mu$ recovers the exact mixture setting with uniform weights, while varying $\mu\in[-2,2]$ induces controlled departures from it. For each source, we generate $5,000$ training observations and evaluate all methods on $1,000$ test samples. The experiment is repeated $100$ times, and we report the empirical coverage and average prediction set length across repetitions. As shown in Figure~\ref{fig:expt1} left panel, \texttt{MS-RLCP} maintains coverage close to the target level across all $\mu$.

\begin{figure}[!t]
    \centering
    \includegraphics[width=\linewidth]{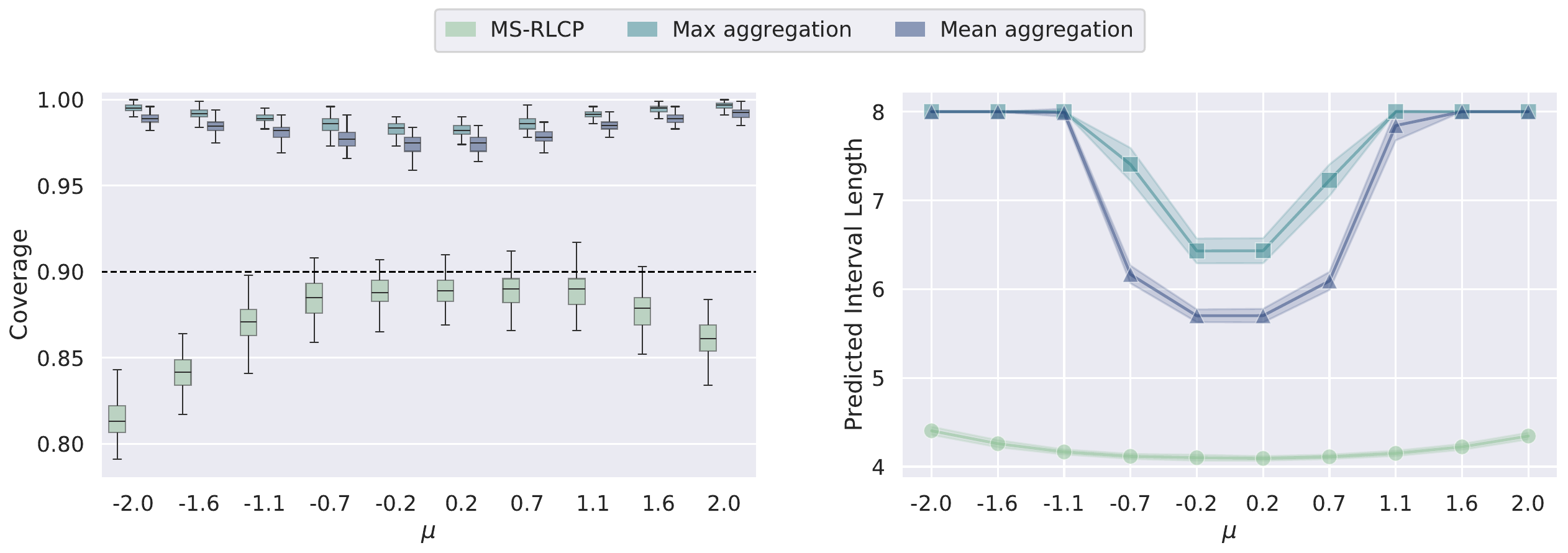}
    \caption{\texttt{MS-RLCP} achieves coverage close to the target level $0.9$ while producing substantially shorter prediction intervals than competing methods. Left: empirical coverage in Experiment~1. Right: average prediction set length in Experiment~1.}
    \label{fig:expt1}
\end{figure}

Since \texttt{MS-RLCP} can be cast as a data-dependent aggregation of source-specific confidence sets, it is natural to compare \texttt{MS-RLCP} with other aggregation-based baselines, i.e., applying RLCP within each source to obtain $p$-values $(p_k)_{k\in[K]}$, followed by a \emph{max}- and a \emph{mean}-aggregation \citep{yang2026multi,ying2024informativeness}. These baselines while natural are highly conservative (i.e., empirical coverage is close to $1$), whereas \texttt{MS-RLCP} attains near-nominal coverage ($0.9$) with substantially shorter prediction sets (Figure~\ref{fig:expt1} right panel)

\paragraph{Experiment 2: benefit of heterogeneity.}
As suggested in Remark~\ref{remark:benefit_of_heterogeneity}, \texttt{MS-RLCP} leverages heterogeneity via localization. To assess this, we generate the test distribution as in Experiment~1 and vary $\nu\in\{0,0.5,1.0\}$, where note that larger $\nu$ corresponds to greater heterogeneity across sources. Figure~\ref{fig:expt2} left panel shows that for small $\nu$, coverage becomes conservative as $\mu$ increases, indicating sensitivity to distribution shift. In contrast, for larger $\nu$, coverage remains close to the target level even for extreme $\mu$, confirming that heterogeneity improves robustness. Moreover, for larger $\nu$, \texttt{MS-RLCP} yields noticeably shorter prediction sets, reflecting improved learning of the feature--response relationship across the heterogeneous sources (Figure~\ref{fig:expt2} right panel).

\begin{figure}[!t]
    \centering
    \includegraphics[width=0.85\textwidth]{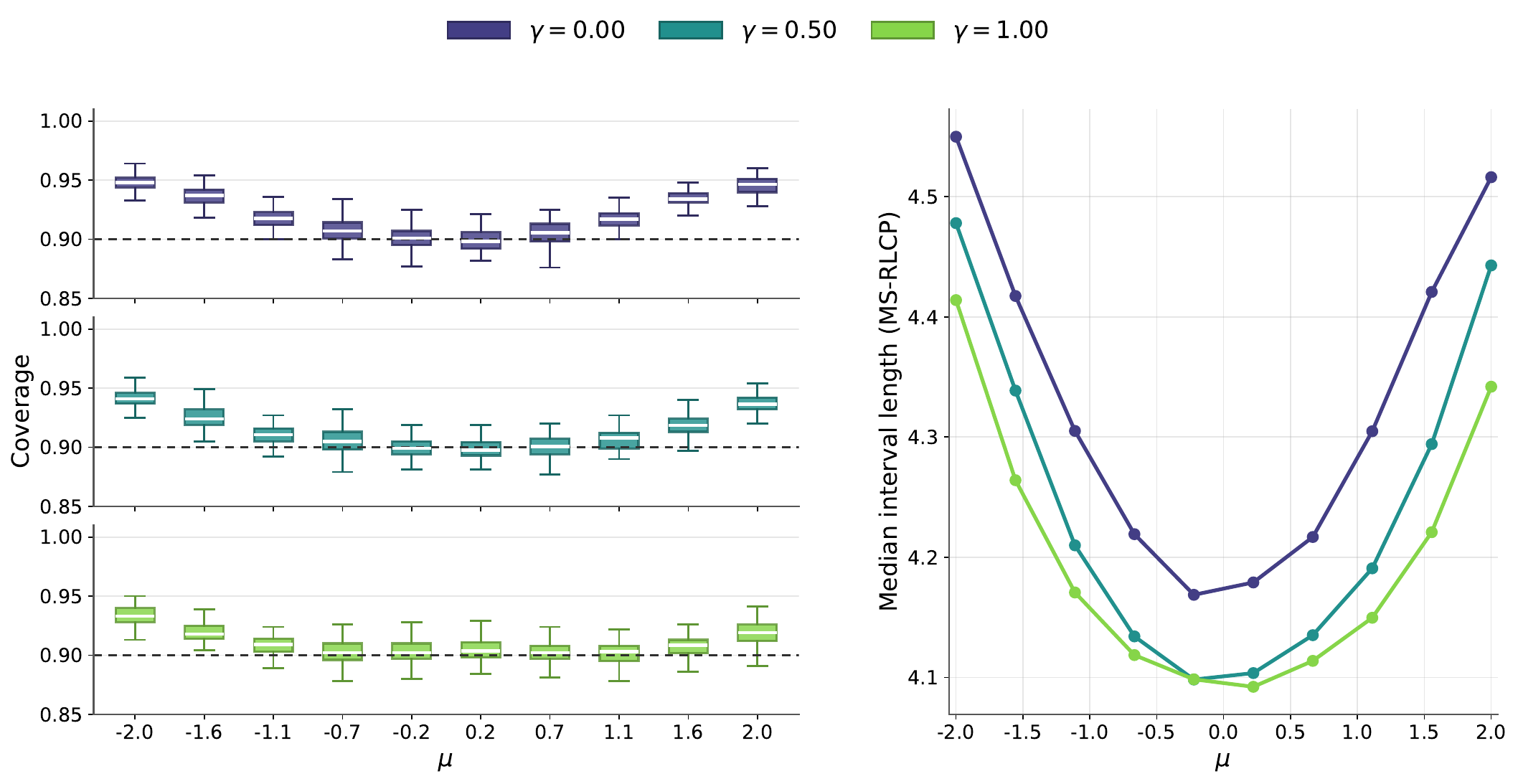}
    \caption{ Heterogenrity results in more robust coverage and shorter set-lengths for \texttt{MS-RLCP}. Left: empirical coverage in Experiment~2. Right: average prediction set length in Experiment~2.}
    \label{fig:expt2}
\end{figure}

\begin{figure*}[t]
    \centering
    \captionsetup[subfigure]{font=small,skip=2pt}

    \begin{subfigure}[t]{0.40\linewidth}
        \centering
        \includegraphics[width=0.9\linewidth]{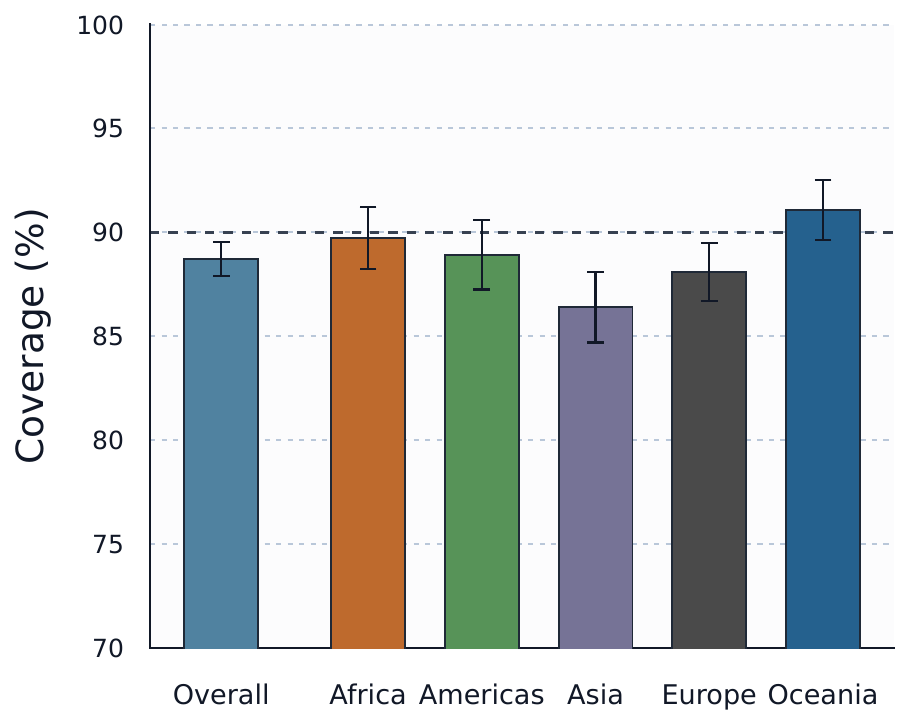}
        \caption{FMoW: Coverage}
        \label{fig:fmow_barplots_coverage}
    \end{subfigure}%
    \begin{subfigure}[t]{0.60\linewidth}
        \centering
        \includegraphics[width=0.9\linewidth]{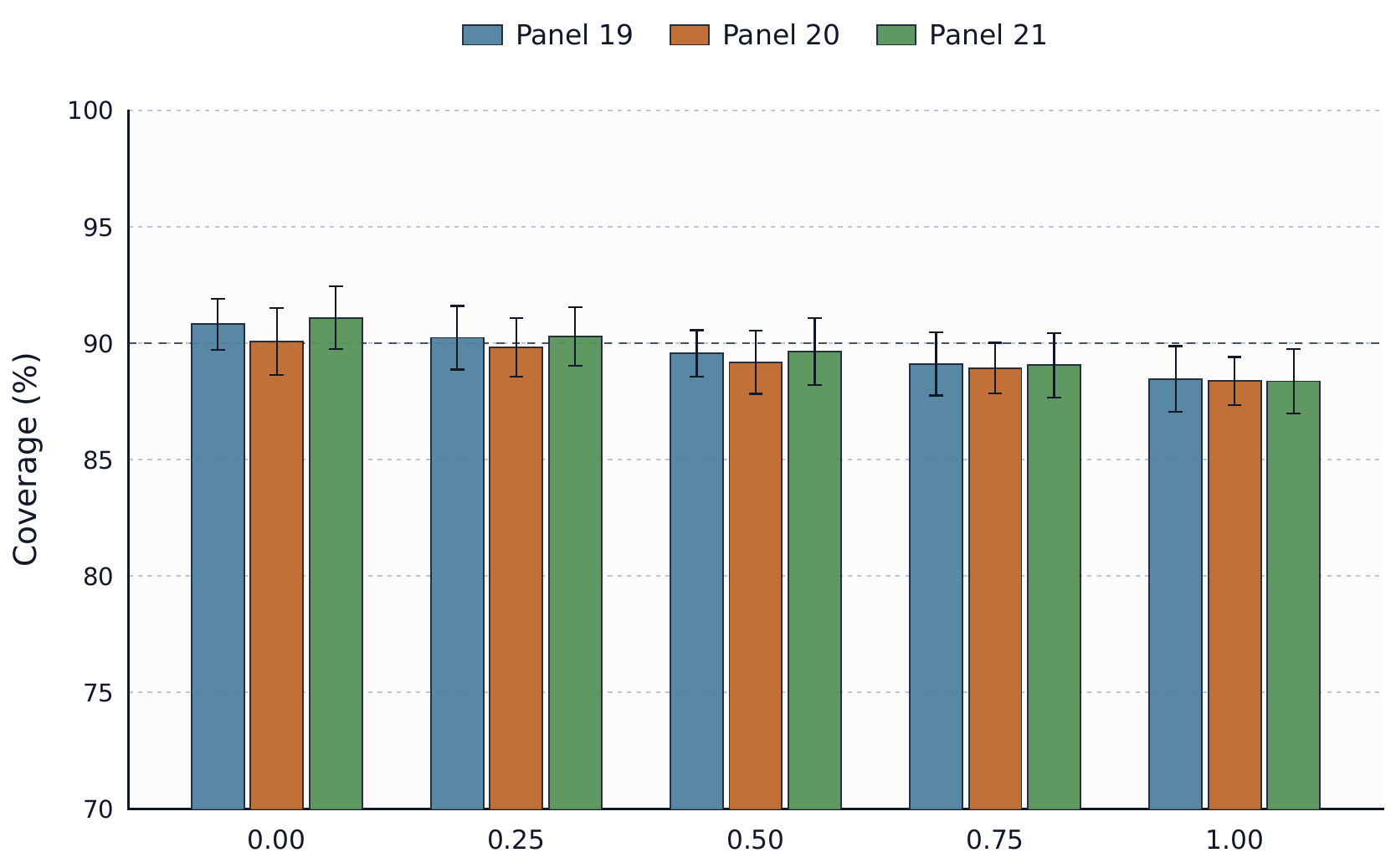}
        \caption{MEPS: Coverage}
        \label{fig:meps_eta_coverage}
    \end{subfigure}

    \begin{subfigure}[t]{0.40\linewidth}
        \centering
        \includegraphics[width=0.9\linewidth]{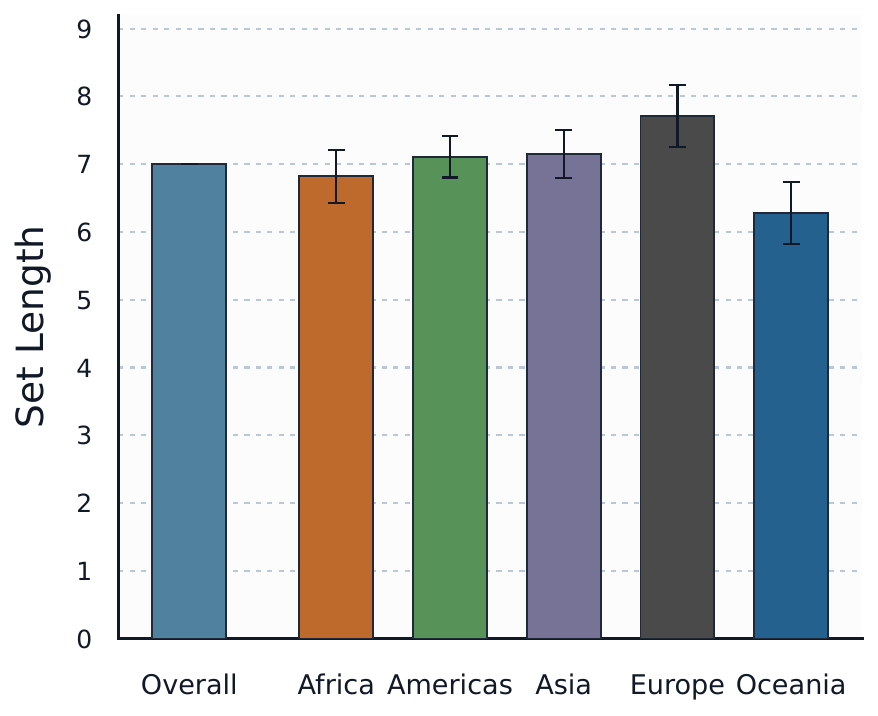}
        \caption{FMoW: Set Length}
        \label{fig:fmow_barplots_width}
    \end{subfigure}%
    \begin{subfigure}[t]{0.60\linewidth}
        \centering
        \includegraphics[width=0.9\linewidth]{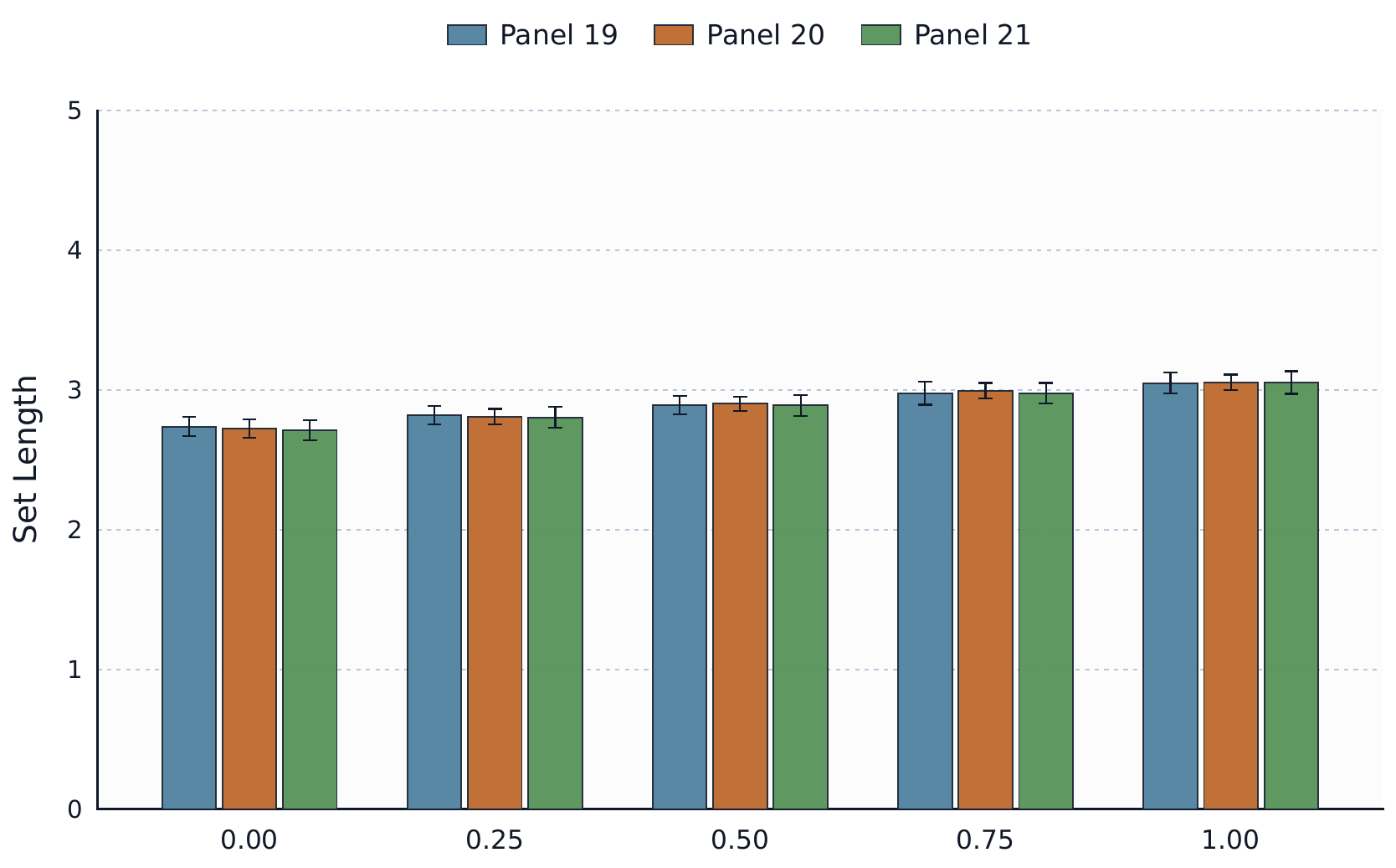}
        \caption{MEPS: Set Length}
        \label{fig:meps_eta_width}
    \end{subfigure}

    \caption{\small Figure \ref{fig:fmow_barplots_coverage} and \ref{fig:fmow_barplots_width} plot coverage and set length for FMOW experiment. Figure \ref{fig:meps_eta_coverage} and \ref{fig:meps_eta_width} plot the effect of latent test-mixture shift in the MEPS experiment.
    }
    \label{fig:four_panel}
    
\end{figure*}

\subsection{Real Data Experiments}\label{sec:real_experiment}

\subsubsection{FMoW Classification under Geographic Distribution Shift}\label{sec:fmow_experiments}

Machine learning models are widely used for satellite imagery tasks such as land-use mapping, resource allocation, and risk assessment \citep{cong2022satmae, jean2016combining, wang2018deep, russwurm2020self}. A key challenge for reliable inference in such settings is subpopulation shift due to geographic and imaging variability. We evaluate \texttt{MS-RLCP} to tackle such distribution shifts and perform reliable predictive inference from satellite imagery.

We use the Functional Map of the World (2016 slice) dataset \citep{christie2018functional}, covering 249 countries/regions and 62 classes. We define Africa, the Americas, Asia, Europe, and Oceania as source domains. From each, 40\% of samples are used to train a shared \texttt{DenseNet} feature extractor \citep{huang2017densely} (ImageNet-initialized). Remaining data are split into train, calibration, and test sets. Source-specific heads are trained on PCA-reduced (16 components) features. We compute conformal scores using RAPS \citep{ding2023class, angelopoulosuncertainty} and construct RLCP prediction sets from the calibration split with target coverage $1-\alpha = 0.9$.




For the test set, we pool test partitions across all sources, which simulates a mixture of the source populations for the test population. Over 50 repetitions, the method achieves $88.59 \pm 0.82\%$ overall coverage with median set width $7.12 \pm 0.33$ labels. We also compute coverage of \texttt{MS-RLCP} within each region. From Figure \ref{fig:fmow_barplots_coverage}, the coverage in Africa, the Americas, and Oceania are near or above the $0.9$ target, while Asia and Europe are more challenging. Correspondingly, from Figure \ref{fig:fmow_barplots_width}, set sizes increase in these harder regions, indicating that the method adapts uncertainty locally rather than enforcing a single global set size. Additional experimental details are provided in Appendix~\ref{app:fmow_details}.

\subsubsection{MEPS Healthcare Utilization}\label{subsec:meps}
We now evaluate \texttt{MS-RLCP} on the MEPS healthcare-utilization benchmark \citep{romano2020malice} (panels 19–21), using sources defined by the sensitive attribute race (White vs non-White) as in prior works \cite{romano2020malice} and \cite{yang2026multi}. The goal is to predict medical care utilization. We retain continuous features and apply log transforms to both features and response.
Within each source, we split data into train, calibration, and test sets and fit a heteroskedastic Gaussian gradient-boosted model (see \eqref{eq:het_gauss}). Calibration scores are computed as absolute residuals normalized by estimated conditional standard deviation. We target coverage $1-\alpha=0.9$ and pool test data across sources, reporting overall and source-specific coverage as well as interval width (on the log-transformed scale).
\begin{table}[!t]
\centering
\small
\setlength{\tabcolsep}{4pt}
\caption{\small MEPS results averaged over $50$ repetitions. }
\label{tab:meps-results}
\begin{tabular*}{\linewidth}{@{\extracolsep{\fill}}lcccccc}
\toprule
\multirow{2}{*}[-2pt]{\textbf{Panel}}
& \multicolumn{3}{c}{\textbf{Coverage (\%)}}
& \multicolumn{3}{c}{\textbf{Set Length}} \\
\cmidrule(lr){2-4} \cmidrule(lr){5-7}
& \textbf{Overall}
& \textbf{White}
& \textbf{Non-White}
& \textbf{Overall}
& \textbf{White}
& \textbf{Non-White} \\
\midrule
19 & 89.8 & 88.3 & 90.7 & 2.86 & 3.04 & 2.72 \\
20 & 89.4 & 88.2 & 90.1 & 2.89 & 3.05 & 2.71 \\
21 & 89.7 & 88.1 & 90.8 & 2.88 & 3.05 & 2.71 \\
\bottomrule
\end{tabular*}%
\end{table}

Table~\ref{tab:meps-results} confirms that coverage remains close to the target level $0.9$ overall and within each source. We further evaluate robustness to latent mixture shift by varying the White fraction $\eta \in \{0,0.25,0.5,0.75,1\}$ in synthetic test sets (with fixed total size per panel). Figure~\ref{fig:meps_eta_coverage} and Figure \ref{fig:meps_eta_width} show that coverage and median interval length remain stable across all mixture proportions, with slightly increased difficulty as the test set becomes more White-dominated, consistent with the smaller sample size in that source. Additional details are presented in Appendix \ref{app:meps-details}.

\section{Discussion}
In this work, we propose \texttt{MS-RLCP}, which combines the local
coverage properties of single-source RLCP with data-adaptive source
selection. We introduce an envelope distribution that captures the
collective representation of the feature space across sources and
provides a common reference for theoretical analysis. Under a shared
conditional response distribution $P_{Y\mid X}$ and absolute continuity
of the test feature distribution with respect to the envelope, we derive
interpretable coverage lower bound.

A natural direction for future work is to extend source selection to
local weighting of multiple sources. Such an extension could draw on
more calibration observations near each test point, potentially improving
the efficiency of the prediction sets while retaining coverage guarantees.
\section*{Acknowledgments}
During the preparation of this manuscript, the authors used AI tools for coding assistance, proofreading, and improving the manuscript's
presentation. The authors have carefully reviewed the manuscript and
take full responsibility for its content and conclusions.

\bibliographystyle{abbrvnat}
\bibliography{reference}

\newpage
\appendix

\part*{Appendix}
\tableofcontents

\newpage

\section{Proof of results from Section~\ref{sec:msrlcp_theory_main}}
\subsection{Proof of Lemma~\ref{lemma:bdd_g_k}}
Fix any $k\in [K]$. For any measurable set $A\subseteq \mathcal{X}$, we have that
\[
\P_{k,X}(A)\le \int_A \max_{k\in [K]}f_k(x)\, \d\nu(x)=B\int_A \bar{f}(x)\, \d\nu(x)=B\bar{P}_X(A).
\]
Consequently, $\P_{k,X}\ll \bar{P}_X$ for each $k\in [K]$, and hence $g_k=\frac{\mathsf{d}P_{k,X}}{\mathsf{d}\bar{P}_X}$ exists. Moreover, by construction, for any $x\in \cX$ and for any $k\in [K]$,
\[
g_{k}(x)\,\le\, \max_{k\in [K]} g_k(x)=\max_{k\in [K]} \frac{\frac{\d\P_{k,X}}{\d\nu}}{\frac{\d\bar{P}_X}{\d\nu}}(x)=\max_{k\in [K]}\frac{f_k(x)}{\bar{f}(x)}=B.
\]
This proves the second part. $\hfill\square$

\subsection{Proof of Lemma~\ref{lemma:envelope_mixture_equivalence}}

Since $\lambda_k\ge\lambda_{\min}$ and $\sum_{k=1}^K\lambda_k=1$, for every $x\in \mathcal{X}$,
\[
\lambda_{\min}\max_{k\in[K]}f_{k,X}(x)
\le
\sum_{k=1}^K\lambda_kf_{k,X}(x)
\le
\max_{k\in[K]}f_{k,X}(x).
\]
Using $\max_{k\in[K]}f_{k,X}=B\bar f_X$ and integrating over any measurable set $A\subseteq \mathcal{X}$ gives the stated measure inequalities. Since $B\lambda_{\min}>0$, the two measures have the same null sets and are therefore equivalent measures. $\hfill\square$

\subsection{Proof of Theorem~\ref{thm:msrlcp_lipschitz}}
\label{sec:proof_of_msrlcp_lipschitz}

We start with the coverage decomposition in~\eqref{eq:lb_main_1}
from the proof of Theorem~\ref{thm:main_coverage}, which gives
\[
\P\bigl(Y_{n+1}\in\hat C_n(X_{n+1})\bigr)
\ge
1-\alpha-\Delta_{\cG,2}-\Delta_{\cG,1}.
\]
$\Delta_{\cG,2}$ can be controlled by
Lemma~\ref{lemma:Delta_2_bdd}. In particular,
\[
\Delta_{\cG,2}
\le
\inf_{0<\delta<1}
\left\{
\delta+
\P_{\tilde X_{n+1}\sim P_{\test,\tilde X}}
\left(
\mu_{k_\oracle(\tilde X_{n+1})}(\tilde X_{n+1})
\le t_\delta
\right)
\right\}.
\]
With the choice of $\delta=\frac{1}{n_\eff}$, $t_\delta$ reduces to $t_{n_\eff}$ where the latter is as defined in the theorem statement. Further, noting that 
\[
\mu_{k_\oracle(\tilde X_{n+1})}(\tilde X_{n+1})=\max_{k\in[K]}\E_{X'\sim P_{k,X}}
\left[H(X',\tilde X_{n+1})\right],
\]
we obtain
\[
\Delta_{\cG,2}\le \frac{1}{n_\eff}+\P_{\substack{X\sim P_{\test,X}\\
              \tilde X\mid X\sim H(X,\cdot)}}
\left(
\max_{k\in[K]}\E_{X'\sim P_{k,X}}
\left[H(X',\tilde X)\right]
\le t_{n_\eff}
\right).
\]
On the other hand, Lemma~\ref{lemma:Delta_G1_bdd_lipschitz}, yields
\[
\Delta_{\cG,1}
\le
2L\left(1+\frac{2\|g_{\test,X}\|_\infty}{B}\right)
\E_{\substack{X\sim\bar P_X\\
              \tilde X\mid X\sim H(X,\cdot)}}
\|X-\tilde X\|_2.
\]
Substituting these two bounds into~\eqref{eq:lb_main_1}
proves~\eqref{eq:lipschitz_kernel_coverage}.
\hfill$\square$

\subsection{Proof of Theorem~\ref{thm:conditional_coverage_main}}\label{app:proof_of_test_cond_cov}
By Assumption~(\ref{A4}), choose $c>0$ and $r>0$ such that $\bar f_X(x)\ge c$ for almost every $x\in B_r(x_0)$. By Assumption~(\ref{A3}), $2h_n<r$ for all sufficiently large $n$. For every $\tilde x\in B_{h_n}(x_0)$, we then have
\begin{align*}
\max_{k\in[K]}P_{k,X}(B_{h_n}(\tilde x))
&=
\max_{k\in[K]}
\int_{B_{h_n}(\tilde x)}f_{k,X}(x)\,\d x\\
&\ge
\frac1K
\int_{B_{h_n}(\tilde x)}
\sum_{k=1}^K f_{k,X}(x)\,\d x\ge
\frac{B}{K}
\int_{B_{h_n}(\tilde x)}\bar f_X(x)\,\d x\ge
\frac{BcV_d}{K}h_n^d.
\end{align*}
The first expectation in \eqref{eq:cond_cov_finite_sample_bound} is therefore bounded by
\[
2K\exp\left(-\frac{BcV_d}{36K}nh_n^d\right)
+
\frac{2K}{(n+1)BcV_dh_n^d},
\]
which tends to zero because $nh_n^d\to\infty$.

The Kolmogorov disyance in \eqref{eq:cont_tv} is bounded by one. Its convergence in probability under Assumption~(\ref{A5}) therefore implies convergence of its expectation to zero. Applying Theorem~\ref{thm:cond_cov_finite_sample} completes the proof.$\hfill\square$

\section{General finite-sample coverage guarantee for \texttt{MS-RLCP}}\label{app:general_analysis}

We establish the main coverage theorem for \texttt{MS-RLCP}, which underlies the simplified guarantees in the main text. The bound separates the coverage error into two parts: one arising from data-dependent source selection and the other from differences between the source and test distributions within local neighborhoods.

We retain the setup and notation from Section~\ref{sec:msrlcp_theory_main}. In particular, $\bar P_X$ denotes the envelope distribution, $B$ its normalizing constant, and $g_{k,X}$ and $g_{\test,X}$ the source and test feature densities with respect to $\bar P_X$. The conditional response distribution $P_{Y\mid X}$ is shared across the sources and the test environment.

For each source, the perturbation kernel $H$ induces a marginal distribution $P_{k,\tilde X}$ and a conditional distribution $P_{k,X\mid\tilde X}$. We use analogous notation for the test and envelope distributions. Their perturbed feature densities, with respect to the base measure $\mu$ of the kernel, are
\[
\mu_k(\tilde x):=\E_{P_{k,X}}[H(X,\tilde x)],
\qquad
\mu_{\test}(\tilde x):=\E_{P_{\test,X}}[H(X,\tilde x)],
\qquad
\bar\mu(\tilde x):=\E_{\bar P_X}[H(X,\tilde x)].
\]

To describe local variation in the density ratios, for a measurable set $A\subseteq\cX$, define
\begin{equation}\label{eq:modulus_of_continuity}
\omega_{k,A}
:=
\sup_{\substack{x,y\in A\\x\ne y}}
\frac{|g_{k,X}(x)-g_{k,X}(y)|}{\|x-y\|_2},
\qquad
\omega_{\test,A}
:=
\sup_{\substack{x,y\in A\\x\ne y}}
\frac{|g_{\test,X}(x)-g_{\test,X}(y)|}{\|x-y\|_2},
\end{equation}
and write $\omega_A:=\max_{k\in[K]}\omega_{k,A}$. These are local Lipschitz seminorms, with the supremum over an empty set taken to be zero. For $\epsilon>0$, also define
\begin{equation}\label{eq:def_Delta_x_A_vep}
\Delta(\tilde x;A,\epsilon)
:=
\bar P_{X\mid\tilde X=\tilde x}(A^c)
+
\bar P_{X\mid\tilde X=\tilde x}
\bigl(\|X-\tilde x\|_2>\epsilon\bigr).
\end{equation}
This quantity measures the conditional envelope mass outside $A$ or outside an $\epsilon$-neighborhood of $\tilde x$. Together, the seminorms and $\Delta$ allow us to control local variation on $A$ while accounting for the remaining probability mass.

We use $\|g_{\test,X}\|_\infty$ and $\|H\|_\infty$ for the essential suprema with respect to $\bar P_X$ and $\bar P_X\otimes\mu$, respectively. We fix versions of the densities satisfying their essential bounds and interpret fractions with zero denominators as $+\infty$. Infima over $A$ below are over measurable sets; if a pointwise infimum is not measurable, its expectation is interpreted as an outer expectation.

\begin{theorem}[General coverage guarantee]\label{thm:main_coverage}
Fix $\alpha\in(0,1)$. Suppose Assumption~\ref{assumption:test_density} holds, $\|g_{\test,X}\|_\infty<\infty$, and $\|H\|_\infty<\infty$. Further, define
\[
n_\eff:=\min_{k\in[K]}n_{k,\train},
\qquad
t_\delta
:=
2\|H\|_\infty
\sqrt{\frac{2\log(2K/\delta)}{n_\eff}},
\quad 0<\delta<1.
\]
Then the \texttt{MS-RLCP} prediction set satisfies
\begin{align}\label{eq:main_bdd_delta_1}
\P\bigl(Y_{n+1}&\in\hat C_n(X_{n+1})\bigr)
\ge{}1-\alpha -
\inf_{0<\delta<1}
\left\{
\delta+
\P_{\tilde X_{n+1}\sim P_{\test,\tilde X}}
\left(
\mu_{k_\oracle(\tilde X_{n+1})}(\tilde X_{n+1})
\le t_\delta
\right)
\right\}
\nonumber\\
&-
\E_{\tilde X_{n+1}\sim\bar P_{\tilde X}}
\left[
\inf_{\substack{A\subseteq\cX\\\epsilon>0}}
\frac{
2\left(
\frac{\|g_{\test,X}\|_\infty}{B}\omega_A
+\omega_{\test,A}
\right)\epsilon
+
2\|g_{\test,X}\|_\infty
\Delta(\tilde X_{n+1};A,\epsilon)
}{
\left(
1-\frac{2\omega_A\epsilon}{B}
-2\Delta(\tilde X_{n+1};A,\epsilon)
\right)\vee \frac{1}{K}
}
\right].
\end{align}
Here, $k_\oracle$ is as defined in \eqref{eq:k_oracle}, and the probability on the left is taken over the randomness of all source data $\cD$, the test sample $(X_{n+1},Y_{n+1})$, and the perturbed test $\tilde{X}_{n+1}$.
\end{theorem}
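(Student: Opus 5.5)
The plan is to compare the actual procedure with an oracle that uses $k_\oracle(\tilde X_{n+1})$ in place of $\hat k(\tilde X_{n+1})$. The miscoverage then splits into two parts: a selection error, which gives the $\delta$-term, and a localization error, which comes from the fact that the test conditional law of $X_{n+1}$ given $\tilde X_{n+1}$ differs from the source conditional law.

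\emph{Step one: the selection term.} Define the event $E=\{\mu_{\hat k}(\tilde X_{n+1})\ge \mu_{k_\oracle}(\tilde X_{n+1})-t_\delta\}$. Hoeffding's inequality applies to each empirical mean $\hat w_k$, since it averages $n_{k,\train}\ge n_\eff$ i.i.d.\ terms bounded by $\|H\|_\infty$. A union bound over the $K$ sources then gives, conditionally on $\tilde X_{n+1}$,
\[
\P\bigl(\max_k|\hat w_k-\mu_k(\tilde X_{n+1})|>t_\delta/2\bigr)\le\delta .
\]
On the complement of this deviation event, $\hat k$ is a $t_\delta$-approximate maximizer. If in addition $\mu_{k_\oracle}(\tilde X_{n+1})>t_\delta$, then $\mu_{\hat k}(\tilde X_{n+1})>0$, so the selected source has positive perturbed density at $\tilde X_{n+1}$.

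A more useful consequence is the bound $\mu_{\hat k}\ge\max_k\mu_k-t_\delta$. Together with $\max_k\mu_k\ge\frac1K\sum_k\mu_k$ and $\max_k\mu_k\ge$ (envelope-related quantities), this keeps $\mu_{\hat k}/\bar\mu$ bounded below. The denominator $(\cdot)\vee\frac1K$ is exactly where this lower bound enters. Everything outside the event costs at most $\delta+\P(\mu_{k_\oracle}\le t_\delta)$, which is the first infimum in \eqref{eq:main_bdd_delta_1}.

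\emph{Step two: fixed-source coverage under test conditionals.} Condition on $\tilde X_{n+1}=\tilde x$ and on all training splits. These splits are independent of the calibration data, so $\hat k$ is fixed. Lemma~\ref{lemma:rlcp_hore_barber_validity}, applied to source $\hat k$, gives coverage at least $1-\alpha$ when $X_{n+1}\mid\tilde X_{n+1}=\tilde x$ follows $P_{\hat k,X\mid\tilde X=\tilde x}$. This uses that the weighted quantile is symmetric in the calibration points and the test point given $\tilde x$.

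Under the test law, $X_{n+1}\mid\tilde x\sim P_{\test,X\mid\tilde X=\tilde x}$, and $Y\mid X$ is shared across populations. Given the calibration data, the coverage event is a function of $(X_{n+1},Y_{n+1})$. The calibration points, however, also interact through the joint exchangeability. I would therefore bound the coverage gap directly by
\[
\TV\bigl(P_{\hat k,X\mid\tilde x},P_{\test,X\mid\tilde x}\bigr),
\]
replacing only the law of the test point. This requires a coupling argument: the RLCP guarantee is itself a statement about the joint law, so I would write coverage as an integral over $x_{n+1}$ of a conditional coverage function in $[0,1]$.

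\emph{Step three: bounding the total variation by the modulus terms.} Both conditionals have densities $g_{\hat k}(x)H(x,\tilde x)/\mu_{\hat k}(\tilde x)$ and $g_{\test}(x)H(x,\tilde x)/\mu_\test(\tilde x)$ with respect to $\bar P_X$. I would write them as $\frac{g(x)}{\E_{\bar P_{X\mid\tilde x}}g}$ times $\bar P_{X\mid\tilde x}$. For a normalized ratio $g/\E g$, the total variation to $\bar P_{X\mid\tilde x}$ is controlled as follows:
\begin{itemize}
\item on $A\cap\{\|X-\tilde x\|\le\epsilon\}$, $|g(x)-g(\tilde x\text{-proxy})|\le 2\omega\epsilon$, since any two points there are within $2\epsilon$;
\item off this set, the mass is $\Delta$, weighted by $\|g\|_\infty$.
\end{itemize}
The normalizer $\E g$ is bounded below via $g_{\hat k}$: using $\max_k g_k=B$ and Step one, one reaches $B(1-2\omega_A\epsilon/B-2\Delta)\vee B/K$. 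Combining the two densities through the triangle inequality gives the numerator
\[
2\Bigl(\tfrac{\|g_\test\|_\infty}{B}\omega_A+\omega_{\test,A}\Bigr)\epsilon+2\|g_\test\|_\infty\Delta .
\]
The last term of \eqref{eq:main_bdd_delta_1} then comes from taking the infimum over $A$ and $\epsilon$, and integrating over $\tilde x$ after a change of measure from $P_{\test,\tilde X}$ to $\bar P_{\tilde X}$, with density $\mu_\test/\bar\mu$.

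\emph{Main obstacle.} The delicate step is the lower bound on the normalizer $\E_{\bar P_{X\mid\tilde x}}g_{\hat k}$ for the data-selected $\hat k$ rather than the oracle. I would first try to show that $k_\oracle$ nearly maximizes $g_k$ near $\tilde x$, so that $g_{k_\oracle}\ge B-2\omega_A\epsilon$ on the local set. The selection slack $t_\delta$ would then be absorbed into the $\delta$-term, and the $\vee\frac1K$ bound would follow from $\sum_k g_k\ge B$.
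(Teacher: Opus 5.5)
Your Steps two and three follow the paper's proof closely: an auxiliary sample from $P_{\hat k,X\mid\tilde X=\tilde x}$, Lemma~\ref{lemma:rlcp_hore_barber_validity}, a total-variation comparison, and a change of measure to $\bar P_{\tilde X}$. The gap is in the selection step. Write $Q_{\tilde x}=\bar P_{X\mid\tilde X=\tilde x}$. After the change of measure, the selected source's total-variation term carries the factor $\bar\mu(\tilde x)/\mu_{\hat k}(\tilde x)=1/\E_{Q_{\tilde x}}[g_{\hat k,X}]$, so you need a \emph{multiplicative} lower bound on $\mu_{\hat k}$. Your event only gives $\mu_{\hat k}\ge\mu_{k_\oracle}-t_\delta$. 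On $\{\mu_{k_\oracle}>t_\delta\}$ this yields nothing beyond $\mu_{\hat k}>0$, so it does not keep $\mu_{\hat k}/\bar\mu$ bounded below. Nor can the slack $t_\delta/\bar\mu(\tilde x)$ be absorbed into the $\delta$-term: it is a deterministic shift inside a normalizer, not a probability, and $\bar\mu(\tilde x)$ can be arbitrarily small relative to $t_\delta$. The missing device is the good set $\cG=\{\mu_{\hat k}>\frac12\mu_{k_\oracle}\}$. If $\mu_{\hat k}\le\frac12\mu_{k_\oracle}$, then $\max_k|\hat\mu_k-\mu_k|\ge\mu_{k_\oracle}/4$. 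So on $\{\mu_{k_\oracle}>t_\delta\}$ a deviation of $t_\delta/4$ is needed, and Hoeffding at level $t_\delta/4$ gives exactly $\delta$ for the stated $t_\delta$. With your level $t_\delta/2$, the factor $\frac12$ would hold only on $\{\mu_{k_\oracle}>2t_\delta\}$, which changes the first term. On $\cG$ you have $\bar\mu/(2\mu_{\hat k})<\bar\mu/\mu_{k_\oracle}$. The $2$ cancels the $\frac12$ in the total-variation formula, and only the \emph{oracle} normalizer $\mu_{k_\oracle}/\bar\mu=\max_k\E_{Q_{\tilde x}}[g_{k,X}]$ needs a lower bound.

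Your plan for that normalizer, showing $g_{k_\oracle,X}\ge B-2\omega_A\epsilon$ pointwise on the local set, fails in general. The reason is that $k_\oracle$ maximizes a $Q_{\tilde x}$-average that also weights mass off the local set. For example, let $A$ be the closed $\epsilon$-ball around $\tilde x$, with $g_{1,X}=B$ on $A$ and $0$ off it, and $g_{2,X}=B-\eta$ on $A$ and $B$ off it. Then $\omega_A=0$, yet $k_\oracle=2$ whenever $\eta\,Q_{\tilde x}(A)<B\,Q_{\tilde x}(A^c)$. What works is an integrated comparison. For $X'\sim Q_{\tilde x}$, pick $\bar k(X')$ with $g_{\bar k(X'),X}(X')=B$. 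Then $\max_k\E_{Q_{\tilde x}}[g_{k,X}]\ge \E_{X\sim Q_{\tilde x}}[g_{\bar k(X'),X}(X)]\ge B-\E_X|g_{\bar k(X'),X}(X)-g_{\bar k(X'),X}(X')|$. Averaging over $X'$ gives $B-2\omega_A\epsilon-2B\Delta$, where $\Delta=\Delta(\tilde x;A,\epsilon)$. The $\vee\frac BK$ branch follows from $\max_k\ge\frac1K\sum_k$ and $\sum_kg_{k,X}\ge B$. Separately, routing the total variation through $Q_{\tilde x}$ by the triangle inequality charges the off-local mass twice, once for $g_{\test,X}$ and once for $g_{\hat k,X}$. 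It produces $\frac{2\|g_{\test,X}\|_\infty(\omega_A\epsilon/B+\Delta)}{D}+\omega_{\test,A}\epsilon+\|g_{\test,X}\|_\infty\Delta$, with $D$ the stated denominator, which is not the stated bound. To get exactly \eqref{eq:main_bdd_delta_1}, compare the two conditionals directly. Use $g_{\test,X}(x)\mu_k-g_{k,X}(x)\mu_\test=\bar\mu\,\E_{X'\sim Q_{\tilde x}}[g_{\test,X}(x)g_{k,X}(X')-g_{k,X}(x)g_{\test,X}(X')]$, then Jensen and the triangle inequality on the product difference.
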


\begin{proof}
We start by recalling that by \eqref{eq:MSRLCP_pred_set}, the \texttt{MS-RLCP} prediction set is obtained by reporting the RLCP prediction set for the source $\hat{k}$. We write $\cD_\train=\bigcup_{k\in[K]}\cD_{k,\train}$ and, for brevity, $\hat k=\hat k(\tilde X_{n+1})$. Then, the \texttt{MS-RLCP} prediction set is given by
\[
\hat C_n(X_{n+1})
=
\hat C_{\hat k}(X_{n+1},\tilde X_{n+1})
=
\hat C_{n_{\hat k}}^{\mathrm{RLCP}}
\left(
X_{n+1},\tilde X_{n+1};
\cD_{\hat k,\train},\cD_{\hat k,\cal}
\right).
\]
Conditional on $\tilde X_{n+1}$ and $\cD_\train$, the selected source $\hat k$ is fixed. Since source selection does not use the calibration data, under the above conditioning, the selected source's calibration observations are still i.i.d.\ from $P_{\hat{k}}$.

We first define the set
\begin{equation}\label{eq:choice_G_set}
\cG
:=
\left\{
\tilde x\in\cX:
\mu_{\hat k(\tilde x)}(\tilde x)
>
\frac12\mu_{k_\oracle(\tilde x)}(\tilde x)
\right\}.
\end{equation}
This set depends on the training data $\cD_\train$ and contains the perturbations for which the `selected source is roughly as good as the oracle choice'.

Fix $\tilde x\in\cG$ and $\cD_\train$. Then $\mu_{\hat k}(\tilde x)>0$, so the conditional distribution $P_{\hat k,X\mid\tilde X=\tilde x}$ is well defined. Now, draw
$X'\sim P_{\hat k,X\mid\tilde X=\tilde x}$ and then
$Y'\mid X'\sim P_{Y\mid X}(\cdot\mid X')$,
independently of the calibration data. Applying Lemma~\ref{lemma:rlcp_hore_barber_validity} to the selected source then gives
\[
\P\left(
Y'\in\hat C_{\hat k}(X',\tilde x)
\;\middle|\;
\tilde X_{n+1}=\tilde x,\cD_\train
\right)
\ge1-\alpha,
\]
where the probability is over $(X',Y')$ and $\cD_{\hat k,\cal}$. Note that conditioning on the other sources' training data does not affect this guarantee, since those data are independent of the selected source's calibration data and the auxiliary sample $(X',Y')$.

Our goal, however, is to control coverage for the actual test sample, whose corresponding conditional feature distribution is $P_{\test,X\mid\tilde X=\tilde x}$. Under covariate shift, both the auxiliary sample and the test sample share the same conditional response law $P_{Y\mid X}$. 

Thus, the difference between their coverage probabilities is bounded by the total variation distance between their conditional feature distributions. Consequently,
\begin{align*}
\P\left(
Y_{n+1}\notin\hat C_{\hat k}(X_{n+1},\tilde x)
\;\middle|\;
\tilde X_{n+1}=\tilde x,\cD_\train
\right)\le
\alpha+
\TV\left(
P_{\test,X\mid\tilde X=\tilde x},
P_{\hat k,X\mid\tilde X=\tilde x}
\right).
\end{align*}

For perturbations $\tilde{x}$ outside $\cG$, we use the trivial upper bound of one on miscoverage. Applying the tower property and splitting according to whether $\tilde X_{n+1}\in\cG$, we obtain
\begin{align*}
\P\bigl(Y_{n+1}\notin\hat C_n(X_{n+1})\bigr)
&\le
\alpha\,\P(\tilde X_{n+1}\in\cG)
+\Delta_{\cG,1}+\Delta_{\cG,2}\\
&\le
\alpha+\Delta_{\cG,1}+\Delta_{\cG,2},
\end{align*}
where we define
\begin{align*}
\Delta_{\cG,2}
&:=
\P_{P_{\test,\tilde X}\times\cD_\train}
\left(\tilde X_{n+1}\notin\cG\right),\\
\Delta_{\cG,1}
&:=
\E_{P_{\test,\tilde X}\times\cD_\train}
\left[
\TV\left(
P_{\test,X\mid\tilde X=\tilde X_{n+1}},
P_{\hat k,X\mid\tilde X=\tilde X_{n+1}}
\right)
\one_{\{\tilde X_{n+1}\in\cG\}}
\right].
\end{align*}
The inner term defining $\Delta_{\cG,1}$ is taken to be zero outside $\cG$. Equivalently,
\begin{equation}\label{eq:lb_main_1}
\P\bigl(Y_{n+1}\in\hat C_n(X_{n+1})\bigr)
\ge1-\alpha-\Delta_{\cG,2}-\Delta_{\cG,1}.
\end{equation}

It remains to bound these two error terms. Lemma~\ref{lemma:Delta_2_bdd} controls $\Delta_{\cG,2}$, the probability of selecting a source with insufficient local mass. Lemma~\ref{lemma:Delta_G1_bdd} controls $\Delta_{\cG,1}$, the expected discrepancy between the conditional feature distributions of the test population and selected source  
population on $\cG$. Substituting their bounds into \eqref{eq:lb_main_1} completes the proof.
\end{proof}

\subsection{Supporting lemmas and their proofs}

We prove the two lemmas used to control $\Delta_{\cG,2}$ and $\Delta_{\cG,1}$ in the proof of Theorem~\ref{thm:main_coverage}.

\begin{lemma}\label{lemma:Delta_2_bdd}
Under the conditions of Theorem~\ref{thm:main_coverage},
\[
\Delta_{\cG,2}
\le
\inf_{0<\delta<1}
\left\{
\delta+
\P_{\tilde X_{n+1}\sim P_{\test,\tilde X}}
\left(
\mu_{k_\oracle(\tilde X_{n+1})}(\tilde X_{n+1})
\le t_\delta
\right)
\right\},
\]
where $t_\delta$ is as defined in Theorem~\ref{thm:main_coverage}.
\end{lemma}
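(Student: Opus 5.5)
The bound comes from a uniform concentration argument for the empirical alignment scores, combined with a union bound over the $K$ sources. Write $\hat\mu_k(\tilde x):=n_{k,\train}^{-1}\sum_{i=1}^{n_{k,\train}}H(X_{i,k},\tilde x)$. Then $\hat k(\tilde x)=\arg\max_k\hat\mu_k(\tilde x)$ and $\E[\hat\mu_k(\tilde x)]=\mu_k(\tilde x)$.

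Fix $\tilde x$ and work conditionally on it. The test perturbation $\tilde X_{n+1}$ depends only on $(X_{n+1},$ the kernel randomness$)$, so it is independent of $\cD_\train$, and conditioning on $\tilde X_{n+1}=\tilde x$ leaves $\cD_\train$ i.i.d.\ within each source. Each summand of $\hat\mu_k(\tilde x)$ lies in $[0,\|H\|_\infty]$. Hoeffding's inequality and a union bound over $k\in[K]$ then give
\[
\P\Bigl(\max_{k\in[K]}|\hat\mu_k(\tilde x)-\mu_k(\tilde x)|>\eta\Bigr)\le 2K\exp\Bigl(-\tfrac{2n_\eff\eta^2}{\|H\|_\infty^2}\Bigr).
\]
Choosing $\eta=t_\delta/4=\tfrac12\|H\|_\infty\sqrt{2\log(2K/\delta)/n_\eff}$ makes the right-hand side $2K\exp(-\log(2K/\delta))=\delta$. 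Any smaller constant in $\eta$ would also suffice, so the deviation event $E_\delta(\tilde x)$ has probability at most $\delta$.

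Next, show that on $E_\delta(\tilde x)^c$ and under $\mu_{k_\oracle(\tilde x)}(\tilde x)>t_\delta$, we have $\tilde x\in\cG$. Write $k^*=k_\oracle(\tilde x)$ and $\hat k=\hat k(\tilde x)$. By the definition of $\hat k$,
\[
\mu_{\hat k}(\tilde x)\ge\hat\mu_{\hat k}(\tilde x)-\eta\ge\hat\mu_{k^*}(\tilde x)-\eta\ge\mu_{k^*}(\tilde x)-2\eta=\mu_{k^*}(\tilde x)-\tfrac12 t_\delta.
\]
Since $\mu_{k^*}(\tilde x)>t_\delta$, this is strictly larger than $\tfrac12\mu_{k^*}(\tilde x)$, so $\tilde x\in\cG$. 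Ties in $\hat k$ are harmless, because the argument uses only $\hat\mu_{\hat k}\ge\hat\mu_{k^*}$.

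It follows that
\[
\one\{\tilde x\notin\cG\}\le\one\{\mu_{k^*}(\tilde x)\le t_\delta\}+\one_{E_\delta(\tilde x)}.
\]
Taking the expectation over $\cD_\train$ for fixed $\tilde x$ gives $\P(\tilde x\notin\cG)\le\delta+\one\{\mu_{k_\oracle(\tilde x)}(\tilde x)\le t_\delta\}$. Integrating over $\tilde X_{n+1}\sim P_{\test,\tilde X}$ via Fubini, which is justified by the independence of $\tilde X_{n+1}$ and $\cD_\train$, yields
\[
\Delta_{\cG,2}\le\delta+\P_{\tilde X_{n+1}\sim P_{\test,\tilde X}}\bigl(\mu_{k_\oracle(\tilde X_{n+1})}(\tilde X_{n+1})\le t_\delta\bigr)
\]
for every $\delta\in(0,1)$. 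Taking the infimum over $\delta$ completes the proof.

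The only delicate point is measurability: the set $\cG$ depends on $\cD_\train$, so the joint event $\{\tilde X_{n+1}\notin\cG\}$ must be measurable for Fubini to apply. It is, since $(\tilde x,\cD_\train)\mapsto\hat\mu_k(\tilde x)$ is jointly measurable. The remaining steps are routine; the substantive step is calibrating the Hoeffding radius so that $2\eta$ is exactly $t_\delta/2$.
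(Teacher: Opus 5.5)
Your proof is correct and follows essentially the same route as the paper. Both use the deterministic comparison $\mu_{\hat k}\ge\mu_{k_{\oracle}}-2\max_{k}|\hat\mu_k-\mu_k|$ (using only $\hat\mu_{\hat k}\ge\hat\mu_{k_{\oracle}}$), then Hoeffding plus a union bound over the $K$ sources conditionally on $\tilde X_{n+1}$ at radius $t_\delta/4$, and finally a split according to whether $\mu_{k_{\oracle}}(\tilde X_{n+1})\le t_\delta$. One small slip: your aside that a smaller constant in $\eta$ ``would also suffice'' is backwards for the Hoeffding step, since shrinking $\eta$ enlarges the tail bound and only helps the deterministic step; this is harmless because $\eta=t_\delta/4$ yields exactly $\delta$.
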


\begin{proof}
For brevity, we write $\mu_k=\mu_k(\tilde X_{n+1})$, $\hat k=\hat k(\tilde X_{n+1})$, and $k_\oracle=k_\oracle(\tilde X_{n+1})$. Further, define the empirical counterpart of $\mu_k$ by
\[
\hat\mu_k
:=
\frac1{n_{k,\train}}
\sum_{i=1}^{n_{k,\train}}
H(X_{i,k},\tilde X_{n+1}).
\]
By the definition of $\cG$ in \eqref{eq:choice_G_set},
\[
\Delta_{\cG,2}
=
\P_{P_{\test,\tilde X}\times\cD_\train}
\left(
\mu_{\hat k}\le\tfrac12\mu_{k_\oracle}
\right).
\]
Since $\hat k$ maximizes $\hat\mu_k$, we always have $\hat\mu_{\hat k}\ge\hat\mu_{k_\oracle}$. Hence,
\begin{align*}
\mu_{\hat k}-\mu_{k_\oracle}
&=
(\mu_{\hat k}-\hat\mu_{\hat k})
+
(\hat\mu_{\hat k}-\hat\mu_{k_\oracle})
+
(\hat\mu_{k_\oracle}-\mu_{k_\oracle})\\
&\ge
-2\max_{k\in[K]}|\hat\mu_k-\mu_k|.
\end{align*}
If $\mu_{\hat k}\le\mu_{k_\oracle}/2$, this inequality implies
$\max_{k\in[K]}|\hat\mu_k-\mu_k|\ge\mu_{k_\oracle}/4$.
Thus, for any $t>0$, it follows that
\begin{align*}
\Delta_{\cG,2}
&\le
\P(\mu_{k_\oracle}\le t)
+
\P\left(
\mu_{\hat k}\le\tfrac12\mu_{k_\oracle},
\ \mu_{k_\oracle}>t
\right)\\
&\le
\P(\mu_{k_\oracle}\le t)
+
\P\left(
\max_{k\in[K]}|\hat\mu_k-\mu_k|\ge t/4
\right).
\end{align*}
The first probability depends only on the perturbed test feature $\tilde{X}_{n+1}$, while the second one is over the randomness of both $\tilde{X}_{n+1}$ and the training data $\cD_{\train}$.

To bound the second probability, we condition on $\tilde X_{n+1}$. under this conditioning, the training observations remain independent, and each $H(X_{i,k},\tilde X_{n+1})$ has mean $\mu_k$ and lies in $[0,\|H\|_\infty]$ almost surely. Hoeffding's inequality therefore gives
\[
\P\left(
|\hat\mu_k-\mu_k|\ge t/4
\;\middle|\;
\tilde X_{n+1}
\right)
\le
2\exp\left(
-\frac{n_{k,\train}t^2}{8\|H\|_\infty^2}
\right).
\]
Taking a union bound over $k\in[K]$, using $n_{k,\train}\ge n_\eff$, and taking an expectation over $\tilde X_{n+1}$ yields
\[
\P\left(
\max_{k\in[K]}|\hat\mu_k-\mu_k|\ge t/4
\right)
\le
2K\exp\left(
-\frac{n_\eff t^2}{8\|H\|_\infty^2}
\right).
\]
For any $0<\delta<1$, choosing
$t=t_\delta=2\|H\|_\infty\sqrt{2\log(2K/\delta)/n_\eff}$
makes the right hand side of the aforementioned inequality to $\delta$. Consequently,
\[
\Delta_{\cG,2}
\le
\delta+
\P_{\tilde X_{n+1}\sim P_{\test,\tilde X}}
\left(
\mu_{k_\oracle(\tilde X_{n+1})}(\tilde X_{n+1})
\le t_\delta
\right).
\]
Taking the infimum over $0<\delta<1$ completes the proof.
\end{proof}

\begin{lemma}\label{lemma:Delta_G1_bdd}
Under the conditions of Theorem~\ref{thm:main_coverage},
\[
\Delta_{\cG,1}
\le
\E_{\tilde X_{n+1}\sim\bar P_{\tilde X}}
\left[
\inf_{\substack{A\subseteq\cX\\\epsilon>0}}
\frac{
2\left(
\frac{\|g_{\test,X}\|_\infty}{B}\omega_A
+\omega_{\test,A}
\right)\epsilon
+
2\|g_{\test,X}\|_\infty
\Delta(\tilde X_{n+1};A,\epsilon)
}{
\left(
1-\frac{2\omega_A\epsilon}{B}
-2\Delta(\tilde X_{n+1};A,\epsilon)
\right)\vee \frac{1}{K}
}
\right].
\]
\end{lemma}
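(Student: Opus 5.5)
The plan is to rewrite $\Delta_{\cG,1}$ as an expectation over the envelope perturbation law $\bar P_{\tilde X}$, and to express both conditional feature laws through densities with respect to the single reference measure $\bar P_{X\mid\tilde X=\tilde x}$. Once that is done, the total variation distance becomes an $L_1$ discrepancy between two normalized density ratios, which the local Lipschitz seminorms on $A\cap B_\epsilon(\tilde x)$ can control.

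\textbf{Step 1 (reduction to envelope conditionals).} Write $m_k(\tilde x):=\E_{\bar P_{X\mid\tilde X=\tilde x}}[g_{k,X}(X)]$ and $m_\test(\tilde x):=\E_{\bar P_{X\mid\tilde X=\tilde x}}[g_{\test,X}(X)]$. Bayes' rule gives $\mu_k=\bar\mu\, m_k$ and $\mu_\test=\bar\mu\, m_\test$. It also shows that $P_{k,X\mid\tilde X=\tilde x}$ has density $g_{k,X}/m_k(\tilde x)$ with respect to $\bar P_{X\mid\tilde X=\tilde x}$, and analogously for the test law. The density of $P_{\test,\tilde X}$ with respect to $\bar P_{\tilde X}$ is therefore $m_\test\le\|g_{\test,X}\|_\infty$. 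Hence
\[
\Delta_{\cG,1}\le \E_{\cD_\train}\E_{\tilde X_{n+1}\sim\bar P_{\tilde X}}\Bigl[\tfrac12\one_{\cG}\textstyle\int\bigl|g_{\test,X}-\tfrac{m_\test}{m_{\hat k}}g_{\hat k,X}\bigr|\,\d\bar P_{X\mid\tilde X=\tilde X_{n+1}}\Bigr].
\]
The triangle inequality then bounds the integrand by $|g_{\test,X}-m_\test|+\frac{m_\test}{m_{\hat k}}|m_{\hat k}-g_{\hat k,X}|$.

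\textbf{Step 2 (local oscillation bounds).} Fix $A$ and $\epsilon$, and set $S:=A\cap B_\epsilon(\tilde x)$, whose mass under $\bar P_{X\mid\tilde x}$ is at least $1-\Delta(\tilde x;A,\epsilon)$. Any two points of $S$ are within $2\epsilon$, so on $S$ each $g_{k,X}$ oscillates by at most $2\omega_A\epsilon$ and $g_{\test,X}$ by at most $2\omega_{\test,A}\epsilon$. Off $S$, I use the crude bounds $0\le g_{k,X}\le B$ (Lemma~\ref{lemma:bdd_g_k}) and $0\le g_{\test,X}\le\|g_{\test,X}\|_\infty$.

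This gives $\int|g_{\test,X}-m_\test|\lesssim 2\omega_{\test,A}\epsilon+2\|g_{\test,X}\|_\infty\Delta$ and $\int|g_{k,X}-m_k|\lesssim 2\omega_A\epsilon+2B\Delta$. For the second term I use $m_\test\le\|g_{\test,X}\|_\infty$, which after dividing by $m_{\hat k}$ produces the factor $\|g_{\test,X}\|_\infty\omega_A/B$ in the numerator.

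\textbf{Step 3 (lower bound on $m_{\hat k}$ on $\cG$; the main obstacle).} On $\cG$ we have $m_{\hat k}>\frac12\max_k m_k$, so I need a lower bound on $\max_k m_k$ by $B$ times the stated denominator. I will establish two bounds and take their maximum.

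First, since $\max_k g_{k,X}=B$ a.e. and $m_k\ge0$,
\[
\max_k m_k\ge \tfrac1K\textstyle\sum_k m_k\ge \tfrac1K\E[\max_k g_{k,X}]=B/K.
\]
Second, pick a point $x_0\in S$ (a positive-mass, a.e.-good point) and an index $k^\ast$ with $g_{k^\ast,X}(x_0)=B$. Then $g_{k^\ast,X}\ge B-2\omega_A\epsilon$ on $S$, so $m_{k^\ast}\ge B(1-2\omega_A\epsilon/B-\Delta)$.

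The delicate points are the measurability and a.e. issues in choosing $x_0$, and the degenerate case where $S$ is null, in which $\Delta\ge1$ and the bound is trivial. I also need to reconcile the constants $2\Delta$ and the factor $2$ from $\cG$ so that they match the stated expression. If the factor $\frac12$ from $\cG$ does not cancel cleanly against the $\frac12$ in the TV distance, I would carry the $L_1$ form through and recheck the constants.

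\textbf{Step 4 (combine).} Combining these gives the pointwise bound with the infimum over $(A,\epsilon)$, which is valid for every $\cD_\train$ because the right-hand side no longer depends on $\hat k$. Taking expectations completes the proof, with outer expectations used where the infimum is not measurable.
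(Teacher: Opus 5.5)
Your architecture is the paper's: change of measure to $\bar P_{\tilde X}$ with density $m_{\test}$, conditional densities $g_{\cdot,X}/m_{\cdot}$ with respect to $\bar P_{X\mid\tilde X=\tilde x}$, $m_{\hat k}>\frac12\max_k m_k$ on $\cG$, and two lower bounds on $\max_k m_k$. Your Step~3 is fine. Picking a single good point $x_0\in S$ even gives $\max_k m_k\ge BD'$ with $D':=(1-2\omega_A\epsilon/B-\Delta)\vee\frac1K$, which dominates the stated denominator. The genuine problem is the triangle-inequality split in Step~1, and rechecking constants does not repair it. Carried through exactly, your Steps~1--3 give, for fixed $(A,\epsilon)$,
\[
\omega_{\test,A}\epsilon+\|g_{\test,X}\|_\infty\Delta
+\frac{2\|g_{\test,X}\|_\infty\omega_A\epsilon/B+2\|g_{\test,X}\|_\infty\Delta}{D'}.
\]
The bad event $\{X\notin S\}\cup\{X'\notin S\}$ is paid for in both summands. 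The second summand alone already uses essentially the whole $\Delta$-term the statement allows. The first summand, $\frac12\int|g_{\test,X}-m_{\test}|\,\d\bar P_{X\mid\tilde X=\tilde x}$, adds a further $\|g_{\test,X}\|_\infty\Delta$, and it can genuinely be that large (e.g.\ $g_{\test,X}=0$ on $S$ and $g_{\test,X}=\|g_{\test,X}\|_\infty$ off $S$). With $\omega_A=\omega_{\test,A}=0$ and small $\Delta$, your bound is about $3\|g_{\test,X}\|_\infty\Delta$, versus about $2\|g_{\test,X}\|_\infty\Delta$ in the statement. So your inequality is incomparable with the lemma: better in the $\omega_{\test,A}$ term and the denominator, worse in the $\Delta$ term. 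Taking the infimum over $(A,\epsilon)$ does not recover the lemma.

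The missing idea is to keep the numerator as one cross-product, so the bad event is charged once. Take $X,X'$ i.i.d.\ from $\bar P_{X\mid\tilde X=\tilde x}$ and write $m_{\hat k}g_{\test,X}(x)-m_{\test}g_{\hat k,X}(x)=\E_{X'}\bigl[g_{\test,X}(x)g_{\hat k,X}(X')-g_{\hat k,X}(x)g_{\test,X}(X')\bigr]$. Jensen then gives
\[
m_{\test}\,\TV\bigl(P_{\test,X\mid\tilde X=\tilde x},P_{\hat k,X\mid\tilde X=\tilde x}\bigr)
\le\frac{1}{2m_{\hat k}}\,\E\bigl|g_{\test,X}(X)g_{\hat k,X}(X')-g_{\hat k,X}(X)g_{\test,X}(X')\bigr|.
\]
On $\{X,X'\in S\}$, split the difference as $g_{\test,X}(X)|g_{\hat k,X}(X')-g_{\hat k,X}(X)|+g_{\hat k,X}(X)|g_{\test,X}(X)-g_{\test,X}(X')|\le 2(\|g_{\test,X}\|_\infty\omega_A+B\omega_{\test,A})\epsilon$. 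Off that event, which has probability at most $2\Delta$, both products lie in $[0,B\|g_{\test,X}\|_\infty]$, so their difference is at most $B\|g_{\test,X}\|_\infty$. Now use $\frac{1}{2m_{\hat k}}<\frac{1}{\max_k m_k}\le\frac{1}{BD'}$ on $\cG$. This gives $\frac{2(\|g_{\test,X}\|_\infty\omega_A/B+\omega_{\test,A})\epsilon+2\|g_{\test,X}\|_\infty\Delta}{D'}$, which is at most the stated integrand since $D'$ dominates the stated denominator. This is exactly the paper's $T_k$ device. The paper lower-bounds the denominator by averaging over a random point $\bar k(X')$ and gets $2\Delta$; your single-point argument is a harmless sharpening of that step.
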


\begin{proof}
We start by recalling that 
\[
\Delta_{\cG,1}:=
\E_{P_{\test,\tilde X}\times\cD_\train}
\left[
\TV\left(
P_{\test,X\mid\tilde X=\tilde X_{n+1}},
P_{\hat k,X\mid\tilde X=\tilde X_{n+1}}
\right)
\one_{\{\tilde X_{n+1}\in\cG\}}
\right].
\]
We first express the inner total variation distance in terms of the densities of source and test distributions with respect to the envelope distribution. Fix $k\in[K]$ and a perturbed feature $\tilde x\in \cG$ so that the relevant conditional distributions are well defined. By Bayes' formula,
\[
\frac{\d P_{\test,X\mid\tilde X=\tilde x}}{\d\bar P_X}(x)
=
\frac{g_{\test,X}(x)H(x,\tilde x)}
{\E_{P_{\test,X}}[H(X,\tilde x)]},
\qquad
\frac{\d P_{k,X\mid\tilde X=\tilde x}}{\d\bar P_X}(x)
=
\frac{g_{k,X}(x)H(x,\tilde x)}
{\E_{P_{k,X}}[H(X,\tilde x)]}.
\]
Consequently,
\begin{align*}
\TV\left(
P_{\test,X\mid\tilde X=\tilde x},
P_{k,X\mid\tilde X=\tilde x}
\right)=
\frac12\int
H(x,\tilde x)
\left|
\frac{g_{\test,X}(x)}
{\E_{P_{\test,X}}[H(X,\tilde x)]}
-
\frac{g_{k,X}(x)}
{\E_{P_{k,X}}[H(X,\tilde x)]}
\right|
\,\d\bar P_X(x).
\end{align*}
Let $Q_{\tilde x}:=\bar P_{X\mid\tilde X=\tilde x}$ and note that
\[
\frac{\d Q_{\tilde x}}{\d\bar P_X}(x)
=
\frac{H(x,\tilde x)}
{\E_{\bar P_X}[H(X,\tilde x)]}.
\]
Therefore, a change of measure gives
\begin{align}\label{eq:TV_iden_1}
\TV\left(
P_{\test,X\mid\tilde X=\tilde x},
P_{k,X\mid\tilde X=\tilde x}
\right)=
\frac{\E_{\bar P_X}[H(X,\tilde x)]}{2}
\E_{Q_{\tilde x}}
\left|
\frac{g_{\test,X}(X)}
{\E_{P_{\test,X}}[H(X,\tilde x)]}
-
\frac{g_{k,X}(X)}
{\E_{P_{k,X}}[H(X,\tilde x)]}
\right|.
\end{align}
For each source $k$, we have
\[
\E_{P_{k,X}}[H(X,\tilde x)]
=
\E_{\bar P_X}[g_{k,X}(X)H(X,\tilde x)]
=
\E_{\bar P_X}[H(X,\tilde x)]
\E_{Q_{\tilde x}}[g_{k,X}(X)],
\]
and the same identity holds for the test distribution. Let $X,X'$ be independent draws from $Q_{\tilde x}$. Applying the above identity, for fixed $X$,
\begin{align*}
&g_{\test,X}(X)\E_{P_{k,X}}[H(X,\tilde x)]
- g_{k,X}(X)\E_{P_{\test,X}}[H(X,\tilde x)]\\
&\quad=
\E_{\bar P_X}[H(X,\tilde x)]
\E_{X'\sim Q_{\tilde x}}
\left[
g_{\test,X}(X)g_{k,X}(X')
-
g_{k,X}(X)g_{\test,X}(X')
\right].
\end{align*}
Taking absolute values and applying Jensen's inequality to the inner expectation over $X'$, followed by \eqref{eq:TV_iden_1}, yields
\begin{align}\label{eq:TV_bdd_1}
\TV\left(
P_{\test,X\mid\tilde X=\tilde x},
P_{k,X\mid\tilde X=\tilde x}
\right)\le
\frac{\E_{\bar P_X}[H(X,\tilde x)]^2}
{2\E_{P_{\test,X}}[H(X,\tilde x)]
\E_{P_{k,X}}[H(X,\tilde x)]}
T_k(\tilde x),
\end{align}
where we define
\begin{equation}\label{eq:def_T_k}
T_k(\tilde x)
:=
\E_{X,X'\stackrel{iid}{\sim}Q_{\tilde x}}
\left|
g_{\test,X}(X)g_{k,X}(X')
-
g_{k,X}(X)g_{\test,X}(X')
\right|.
\end{equation}

We now apply this bound to $\Delta_{\cG,1}$. The perturbed test and envelope distributions have densities
$\E_{P_{\test,X}}[H(X,\tilde x)]$ and
$\E_{\bar P_X}[H(X,\tilde x)]$, respectively, with respect to $\mu$. Thus, by a change of measure gives
\begin{align*}
\Delta_{\cG,1}=\E_{\bar P_{\tilde X}\times\cD_\train}
\bigg[\TV\left(
P_{\test,X\mid\tilde X=\tilde X_{n+1}},
P_{\hat k,X\mid\tilde X=\tilde X_{n+1}}
\right)
\one_{\{\tilde X_{n+1}\in\cG\}}\times
\frac{\E_{P_{\test,X}}[H(X,\tilde X_{n+1})]}
{\E_{\bar P_X}[H(X,\tilde X_{n+1})]}
\bigg].
\end{align*}
Substituting \eqref{eq:TV_bdd_1}, we obtain
\[
\Delta_{\cG,1}
\le
\E_{\bar P_{\tilde X}\times\cD_\train}
\left[
\frac{\E_{\bar P_X}[H(X,\tilde X_{n+1})]}
{2\E_{P_{\hat k,X}}[H(X,\tilde X_{n+1})]}
T_{\hat k}(\tilde X_{n+1})
\one_{\{\tilde X_{n+1}\in\cG\}}
\right].
\]
The term inside the expectation is taken to be zero outside $\cG$. 

By definition of $\cG$ in \eqref{eq:choice_G_set} and then dropping the indicator, we conclude that
\begin{equation}\label{eq:bdd_Delta_G1}
\Delta_{\cG,1}
\le
\E_{\bar P_{\tilde X}\times\cD_\train}
\left[
T(\tilde X_{n+1})T_{\hat k}(\tilde X_{n+1})
\right],
\end{equation}
where we define
\begin{equation}\label{eq:T_tilde_X_n1}
T(\tilde x)
:=
\frac{\E_{\bar P_X}[H(X,\tilde x)]}
{\E_{P_{k_\oracle(\tilde x),X}}[H(X,\tilde x)]}.
\end{equation}
Note that the denominator is positive for $\bar P_{\tilde X}$-almost every $\tilde x$: since $\sum_{k=1}^K g_{k,X}\ge B$ almost everywhere, at least one source has positive local mass. We complete the proof by deriving upper bounds on $T_k(\tilde x)$ and $1/T(\tilde x)$.

\paragraph{Controlling $T_{\hat k}(\tilde X_{n+1})$.}
Fix a measurable set $A\subseteq\cX$ and $\epsilon>0$. For $x,x'\in A$ with $\|x-x'\|_2\le2\epsilon$, the triangle inequality gives
\begin{align*}
&\left|
g_{\test,X}(x)g_{k,X}(x')
-
g_{k,X}(x)g_{\test,X}(x')
\right|\\
&\quad\le
g_{\test,X}(x)|g_{k,X}(x')-g_{k,X}(x)|
+
g_{k,X}(x)|g_{\test,X}(x)-g_{\test,X}(x')|\\
&\quad\le
\left(
\|g_{\test,X}\|_\infty\omega_{k,A}
+
B\omega_{\test,A}
\right)\|x-x'\|_2\\
&\quad\le
2\left(
\|g_{\test,X}\|_\infty\omega_A
+
B\omega_{\test,A}
\right)\epsilon.
\end{align*}
Here, we used $g_{k,X}\le B$ and
$g_{\test,X}\le\|g_{\test,X}\|_\infty$.
On the other hand, for arbitrary $x,x'$, the absolute difference is at most
$B\|g_{\test,X}\|_\infty$, since both products lie between zero and this value.

For independent $X,X'\stackrel{iid}{\sim} Q_{\tilde x}$, a union bound and the triangle inequality give
\begin{align*}
&\P_{X,X'\stackrel{iid}{\sim} Q_{\tilde{x}}}
\left(
X\notin A
\text{ or }X'\notin A
\text{ or }\|X-X'\|_2>2\epsilon
\right)\\
&\quad\le
2Q_{\tilde x}(A^c)
+
2Q_{\tilde x}\bigl(\|X-\tilde x\|_2>\epsilon\bigr)=
2\Delta(\tilde x;A,\epsilon).
\end{align*}
Applying the local bound when both points belong to $A$ and are within distance $2\epsilon$, and the uniform bound otherwise, yields
\begin{equation}\label{eq:T_k_final_bdd}
T_k(\tilde x)
\le
2\left(
\|g_{\test,X}\|_\infty\omega_A
+
B\omega_{\test,A}
\right)\epsilon
+
2B\|g_{\test,X}\|_\infty
\Delta(\tilde x;A,\epsilon).
\end{equation}
This bound holds for every $k\in[K]$.

\paragraph{Controlling $T(\tilde{X}_{n+1})$.}
Let $\bar k(x)\in\arg\max_{k\in[K]}g_{k,X}(x)$, with a fixed rule for resolving ties, and draw $X'\sim Q_{\tilde x}$ independent of all the data. Conditional on $X'$, the index $\bar k(X')$ is fixed. By the definition of the oracle choice,
\[
\E_{P_{k_\oracle(\tilde x),X}}[H(X,\tilde x)]
\ge
\E_{P_{\bar k(X'),X}}[H(X,\tilde x)].
\]
Applying the above inequality, and then change of measures to $\bar P_X$ and then to $Q_{\tilde x}$, we obtain
\begin{align*}
\frac1{T(\tilde x)}
\ge 
\frac{\E_{P_{\bar k(X'),X}}[H(X,\tilde x)]}{\E_{\bar P_X}[H(X,\tilde x)]}&=
\frac{
\E_{\bar P_X}[g_{\bar k(X'),X}(X)H(X,\tilde x)]
}{
\E_{\bar P_X}[H(X,\tilde x)]
}\\
&=
\E_{X\sim Q_{\tilde x}}[g_{\bar k(X'),X}(X)]\\
&\ge
g_{\bar k(X'),X}(X')
-
\E_{X\sim Q_{\tilde x}}
\left[
|g_{\bar k(X'),X}(X)-g_{\bar k(X'),X}(X')|
\right].
\end{align*}
By Lemma~\ref{lemma:bdd_g_k},
$g_{\bar k(X'),X}(X')=B$ almost surely. Since the left-hand side does not depend on $X'$, taking expectation over $X'$ gives
\[
\frac1{T(\tilde x)}
\ge
B-
\E_{X,X'\stackrel{iid}{\sim} Q_{\tilde{x}}}
\left[
|g_{\bar k(X'),X}(X)-g_{\bar k(X'),X}(X')|
\right].
\]

For $x,x'\in A$ with $\|x-x'\|_2\le2\epsilon$, the absolute difference inside this expectation is at most $2\omega_A\epsilon$. This holds even though the source index depends on $x'$, because $\omega_A$ bounds the local Lipschitz seminorm of every source density. For arbitrary $x,x'$, the difference is at most $B$. Applying the same union bound as in the argument of bounding the numerator, we obtain
\[
\E_{X,X'\stackrel{iid}{\sim} Q_{\tilde{x}}}
\left[
|g_{\bar k(X'),X}(X)-g_{\bar k(X'),X}(X')|
\right]
\le
2\omega_A\epsilon
+
2B\Delta(\tilde x;A,\epsilon).
\]
Consequently,
\[
\frac1{T(\tilde x)}
\ge
\left(
B-2\omega_A\epsilon
-2B\Delta(\tilde x;A,\epsilon)
\right)\vee0.
\]

Further, since $\max_{k\in[K]}g_{k,X}=B$ almost everywhere, we have $\sum_{k=1}^K g_{k,X}\ge B$. Consequently, we obtain
\begin{align*}
\frac1{T(\tilde x)}
=\frac{\max_{k\in[K]}\E_{\bar P_X}[g_{k,X}(X)H(X,\tilde x)]}{\E_{\bar P_X}[H(X,\tilde x)]}\ge
\frac{
\E_{\bar P_X}\left[\sum_{k=1}^K g_{k,X}(X)H(X,\tilde x)\right]
}{
K\E_{\bar P_X}[H(X,\tilde x)]
}
\ge \frac{B}{K}.
\end{align*}
Combining this with the preceding lower bound gives
\begin{equation}\label{eq:lb_ratio_T}
\frac1{T(\tilde x)}
\ge
\left(
B-2\omega_A\epsilon
-2B\Delta(\tilde x;A,\epsilon)
\right)\vee\frac{B}{K}.
\end{equation}

Combining \eqref{eq:T_k_final_bdd} and \eqref{eq:lb_ratio_T}, and dividing the numerator and denominator by $B$, gives
\[
T(\tilde x)T_k(\tilde x)
\le
\frac{
2\left(
\frac{\|g_{\test,X}\|_\infty}{B}\omega_A
+
\omega_{\test,A}
\right)\epsilon
+
2\|g_{\test,X}\|_\infty
\Delta(\tilde x;A,\epsilon)
}{
\left(
1-\frac{2\omega_A\epsilon}{B}
-2\Delta(\tilde x;A,\epsilon)
\right)\vee \frac{1}{K}
}
\]
for every $k\in[K]$. Taking the infimum over $A$ and $\epsilon$ gives a bound that is independent of the selected source or the training data. Substituting this bound into \eqref{eq:bdd_Delta_G1} and taking expectation over $\tilde X_{n+1}\sim\bar P_{\tilde X}$ completes the proof.
\end{proof}

\begin{lemma}\label{lemma:Delta_G1_bdd_lipschitz}
Under the conditions of Theorem~\ref{thm:main_coverage},
suppose additionally that $g_{1,X},\ldots,g_{K,X}$ and
$g_{\test,X}$ are globally $L$-Lipschitz, and that $\E_{\substack{X\sim\bar P_X\\   \tilde X\mid X\sim H(X,\cdot)}}
\|X-\tilde X\|_2<\infty$. Then,
\[
\Delta_{\cG,1}
\le
2L\left(1+\frac{2\|g_{\test,X}\|_\infty}{B}\right)
\E_{\substack{X\sim\bar P_X\\
              \tilde X\mid X\sim H(X,\cdot)}}
\|X-\tilde X\|_2.
\]
\end{lemma}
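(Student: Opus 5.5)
The plan is to avoid the general $(A,\epsilon)$ machinery of Lemma~\ref{lemma:Delta_G1_bdd}. Instead I will redo the total variation step with a triangle inequality through the envelope conditional $Q_{\tilde x}=\bar P_{X\mid\tilde X=\tilde x}$, which should give a bound that is linear in the perturbation distance. Write $m(\tilde x):=\E_{X\sim Q_{\tilde x}}\|X-\tilde x\|_2$. Then $\E_{\tilde X\sim\bar P_{\tilde X}}[m(\tilde X)]=\E_{X\sim\bar P_X,\,\tilde X\mid X\sim H(X,\cdot)}\|X-\tilde X\|_2$, which is the quantity in the statement. For $\tilde x\in\cG$, I bound
\[
\TV\bigl(P_{\test,X\mid\tilde X=\tilde x},P_{\hat k,X\mid\tilde X=\tilde x}\bigr)\le \TV\bigl(P_{\test,X\mid\tilde X=\tilde x},Q_{\tilde x}\bigr)+\TV\bigl(Q_{\tilde x},P_{\hat k,X\mid\tilde X=\tilde x}\bigr).
\]
By the Bayes identities already derived in the proof of Lemma~\ref{lemma:Delta_G1_bdd}, for any density ratio $g\in\{g_{\test,X},g_{k,X}\}$ we have $\d P_{g,X\mid\tilde X=\tilde x}/\d Q_{\tilde x}=g/\E_{Q_{\tilde x}}g$. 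Writing $\E_{Q_{\tilde x}}g=\E_{X'\sim Q_{\tilde x}}g(X')$ and applying Jensen's inequality then gives
\[
\TV\le \frac{\E_{X,X'\stackrel{iid}{\sim}Q_{\tilde x}}|g(X)-g(X')|}{2\E_{Q_{\tilde x}}g}\le \frac{L\,\E\|X-X'\|_2}{2\E_{Q_{\tilde x}}g}\le\frac{L\,m(\tilde x)}{\E_{Q_{\tilde x}}g},
\]
where the last step uses global Lipschitzness and $\|X-X'\|_2\le\|X-\tilde x\|_2+\|X'-\tilde x\|_2$.

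Next I change measure from $P_{\test,\tilde X}$ to $\bar P_{\tilde X}$ exactly as in the proof of Lemma~\ref{lemma:Delta_G1_bdd}. The likelihood ratio is $\E_{Q_{\tilde x}}g_{\test,X}$.
\begin{itemize}
\item For the test term, this factor cancels the denominator, so the test term contributes at most $L\,\E[m(\tilde X)]$.
\item For the selected-source term, the factor is $\E_{Q}g_{\test,X}/\E_{Q}g_{\hat k,X}\le\|g_{\test,X}\|_\infty/\E_{Q}g_{\hat k,X}$. On $\cG$ we have $\E_Q g_{\hat k,X}>\tfrac12\E_Q g_{k_\oracle,X}=\tfrac12\cdot 1/T(\tilde x)$.
\end{itemize}
Following the argument for \eqref{eq:lb_ratio_T} with $A=\cX$, the global Lipschitz bound gives $1/T(\tilde x)\ge B-2L\,m(\tilde x)$, and also $1/T(\tilde x)\ge B/K$.

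The main obstacle is that the denominator $B-2Lm(\tilde x)$ can degenerate, which would destroy a clean linear bound. To handle it I use that TV is at most one and split on $\{2Lm(\tilde x)\le B/2\}$.
\begin{itemize}
\item On this event, $1/T\ge B/2$ and $\E_Q g_{\hat k,X}\ge B/4$. The source term is then at most $\|g_{\test,X}\|_\infty L m/(B/4)\cdot$(its own $1/2$ from TV), i.e.\ of order $4L\|g_{\test,X}\|_\infty m/B$.
\item On the complement, I bound the whole TV by $1$. The likelihood ratio is at most $\|g_{\test,X}\|_\infty$, and Markov-type reasoning gives $\one\{2Lm>B/2\}\le 4Lm/B$.
\end{itemize}
Adding the pieces should give a bound of the form $c_1 L\,\E m+c_2 L\|g_{\test,X}\|_\infty\E m/B$. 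I will then tune the split threshold, and use the $\tfrac12$ factors in TV more carefully (e.g.\ keep $\E_Q|g(X)-g(X')|\le 2Lm$ rather than the cruder bound), to land on the stated constant $2L(1+2\|g_{\test,X}\|_\infty/B)$.

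If the constants do not match exactly, a fallback is to plug $A=\cX$ and $\epsilon=m(\tilde x)$-adapted choices directly into \eqref{eq:bdd_Delta_G1}, together with the bounds $T_k(\tilde x)\le 2L(\|g_{\test,X}\|_\infty+B)m(\tilde x)$ and \eqref{eq:lb_ratio_T}. The same case split then applies, and the integrability assumption on $\|X-\tilde X\|_2$ ensures every expectation is finite.
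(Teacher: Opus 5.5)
Your proof is correct, and it gives a slightly sharper bound than the one stated. It follows a different route from the paper's.

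\textbf{How the paper argues.} The paper never passes through $Q_{\tilde x}$. It works with the weighted conditional TV $\ell(\tilde x)$, whose $\bar P_{\tilde X}$-average is $\Delta_{\cG,1}$, and reuses the cross-difference quantity $T_k$ from Lemma~\ref{lemma:Delta_G1_bdd}. It bounds $\ell(\tilde x)\le T(\tilde x)\,\E_{\cD_\train}[T_{\hat k}(\tilde x)\one_{\{\tilde x\in\cG\}}]$ with $T_k\le L(\|g_{\test,X}\|_\infty+B)\,\E\|X-X'\|_2$. It handles a possibly degenerate $T(\tilde x)$ without any case split: it combines $B\ell\le \ell/T+L\,\E\|X-X'\|_2\,\ell$ with $\ell\le\|g_{\test,X}\|_\infty$.

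\textbf{What your route changes.} Your triangle inequality through the envelope conditional decouples the test side from the source side. On the test side, the denominator $\E_{Q_{\tilde x}}g_{\test,X}$ is cancelled exactly by the change-of-measure factor. So that term needs neither $\cG$ nor any lower bound on $1/T$, and it costs only $L\,m$. The paper's bound carries $2L\,m$ for this part, because its use of $\cG$ consumes the $\tfrac12$ of TV for the whole expression.

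\textbf{The constants already work; no tuning is needed.} With your split at $\{2Lm\le B/2\}$, the integrand is at most $Lm+4L\|g_{\test,X}\|_\infty m/B$ on that event. Do not apply an extra $\tfrac12$ there, since it is already inside $\TV\le Lm/\E_{Q_{\tilde x}}g$. Off the event the integrand is at most $\|g_{\test,X}\|_\infty\le 4L\|g_{\test,X}\|_\infty m/B$. Hence $\Delta_{\cG,1}\le L(1+4\|g_{\test,X}\|_\infty/B)\,\E\|X-\tilde X\|_2$, which is dominated by $2L(1+2\|g_{\test,X}\|_\infty/B)\,\E\|X-\tilde X\|_2$.

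\textbf{Your fallback would not have reached the stated constant.} Suppose you plug $T_k\le 2L(\|g_{\test,X}\|_\infty+B)m$ into \eqref{eq:bdd_Delta_G1} and split hard at $\{2Lm\le\theta B\}$. On the good event this gives coefficient $2/(1-\theta)>2$ on $Lm$, for every $\theta\in(0,1)$. That is why the paper uses the smooth combination rather than a hard split on that route.

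In short, the paper's argument keeps uniformity with the general $(A,\epsilon)$ lemma. Yours is more transparent and improves the constant.
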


\begin{proof}
We revisit the proof of Lemma~\ref{lemma:Delta_G1_bdd}.
Under global Lipschitzness, we can simplify the arguments and the bound. Recall that $Q_{\tilde x}:=\bar P_{X\mid\tilde X=\tilde x}$, and $T(\tilde{x})$ is as defined in~\eqref{eq:T_tilde_X_n1}.

Further, define
\[
\ell(\tilde x):=
\frac{\E_{P_{\test,X}}[H(X,\tilde X_{n+1})]}
{\E_{\bar P_X}[H(X,\tilde X_{n+1})]}
\E_{\cD_\train}
\left[
\TV\left(
P_{\test,X\mid\tilde X=\tilde x},
P_{\hat k,X\mid\tilde X=\tilde x}
\right)
\one_{\{\tilde x\in\cG\}}
\right],
\]
where $\hat k=\hat k(\tilde x)$. The integrand is taken
to be zero outside $\cG$.
By construction, $\Delta_{\cG,1}
=
\E_{\tilde X\sim\bar P_{\tilde X}}[\ell(\tilde X)]$.
Moreover, we have
\[
\frac{\E_{P_{\test,X}}[H(X,\tilde X_{n+1})]}
{\E_{\bar P_X}[H(X,\tilde X_{n+1})]}
= \E_{Q_{\tilde x}}[g_{\test,X}(X)]
\le \|g_{\text{test},X}\|_\infty,
\]
and the inner total variation term is at most one, so that we have
$0\le\ell(\tilde x)\le \|g_{\text{test},X}\|_\infty$.
From the proof of Lemma~\ref{lemma:Delta_G1_bdd}, we note that by~\eqref{eq:TV_bdd_1}, together with the definition of $\cG$ in~\eqref{eq:choice_G_set}, gives that~\eqref{eq:bdd_Delta_G1}:
\[
\ell(\tilde x)
\le
T(\tilde x)
\E_{\cD_\train}
\left[
T_{\hat k}(\tilde x)
\one_{\{\tilde x\in\cG\}}
\right].
\]
Here $T_k$ is defined in~\eqref{eq:def_T_k}.

By Lipschitzness of the densities and the bounds
$g_{k,X}\le B$ and $g_{\test,X}\le \|g_{\test,X}\|_\infty$, we have that
\begin{align*}
T_k(\tilde x)
&\le
\E_{X,X'\stackrel{iid}{\sim}Q_{\tilde x}}
\Bigl[
g_{\test,X}(X)
\bigl|g_{k,X}(X')-g_{k,X}(X)\bigr|+
g_{k,X}(X)
\bigl|g_{\test,X}(X)-g_{\test,X}(X')\bigr|
\Bigr]
\\
&\le L(\|g_{\text{test},X}\|_\infty+B)\,\E_{X,X'\stackrel{iid}{\sim}Q_{\tilde x}}
\|X-X'\|_2.
\end{align*}
Consequently, it follows that
\begin{equation}\label{eq:lipschitz_weighted_tv_bound}
\frac{\ell(\tilde{x})}{T(\tilde{x})}=\E_{\cD_\train}
\left[
T_{\hat k}(\tilde x)
\one_{\{\tilde x\in\cG\}}
\right]
\le L(\|g_{\text{test},X}\|_\infty+B)\cdot\E_{X,X'\stackrel{iid}{\sim}Q_{\tilde x}}
\|X-X'\|_2.
\end{equation}

Next, we may similarly repeat the calculation for proving lower bound on $1/T(\tilde{x})$,
preceding~\eqref{eq:lb_ratio_T} in the proof of Lemma~\ref{lemma:Delta_G1_bdd}.
In particular, we derive
\begin{align*}
\frac1{T(\tilde x)}
&\ge
B-
\E_{X,X'\stackrel{iid}{\sim}Q_{\tilde x}}
\left|
g_{\bar k(X'),X}(X)
-
g_{\bar k(X'),X}(X')
\right|
\\
&\ge B-L\,\E_{X,X'\stackrel{iid}{\sim}Q_{\tilde x}}
\|X-X'\|_2.
\end{align*}
Combining the above inequality with
\eqref{eq:lipschitz_weighted_tv_bound} and further recalling
$\ell(\tilde x)\le \|g_{\text{test},X}\|_\infty$ yields
\begin{align*}
B\ell(\tilde x)
&\le
\left\{
\frac1{T(\tilde x)}+L\E_{X,X'\stackrel{iid}{\sim}Q_{\tilde x}}
\|X-X'\|_2.
\right\}\ell(\tilde x)
\\
&\le
L(\|g_{\text{test},X}\|_\infty+B)\E_{X,X'\stackrel{iid}{\sim}Q_{\tilde x}}
\|X-X'\|_2.+L\|g_{\text{test},X}\|_\infty \E_{X,X'\stackrel{iid}{\sim}Q_{\tilde x}}
\|X-X'\|_2.\\
&=
L(B+2\|g_{\text{test},X}\|_\infty)\E_{X,X'\stackrel{iid}{\sim}Q_{\tilde x}}
\|X-X'\|_2..
\end{align*}
Finally, taking expectations, we obtain
\[
\Delta_{\cG,1}
\le
L\left(1+\frac{2\|g_{\text{test},X}\|_\infty}{B}\right)
\E_{\tilde X\sim\bar P_{\tilde X}}[\E_{X,X'\stackrel{iid}{\sim}Q_{\tilde x}}
\|X-X'\|_2].
\]
Finally, the triangle inequality and the tower property give
\begin{align*}
\E_{\tilde X\sim\bar P_{\tilde X}}[\E_{X,X'\stackrel{iid}{\sim}Q_{\tilde x}}
\|X-X'\|_2]
\le
2\E_{\tilde X\sim\bar P_{\tilde X}}
\E_{X\sim Q_{\tilde X}}
\|X-\tilde X\|_2=
2\E_{\substack{X\sim\bar P_X\\
               \tilde X\mid X\sim H(X,\cdot)}}
\|X-\tilde X\|_2.
\end{align*}
Substituting this into the preceding bound proves the result.
\end{proof}

\section{Test-Conditional Coverage of \texttt{MS-RLCP}}
\label{app:cond_cov}

In this section, we study the coverage of \texttt{MS-RLCP} conditional on a fixed test feature $X_{n+1}=x_0$, namely,
\begin{align*}
    \P\left(Y_{n+1}\in \hat C_n(X_{n+1})\mid X_{n+1}=x_0\right).
\end{align*}
As discussed earlier, exact distribution-free conditional coverage is generally impossible with finite prediction sets \citep{vovk2012conditional,foygel2021limits}. A large body of literature therefore imposes additional distributional assumptions and seeks such guarantees in the asymptotic regime. We follow the same approach and develop analogous results for \texttt{MS-RLCP} in the heterogeneous multi-source setting.

We adopt the multi-source setting from Section~\ref{sec:msrlcp_theory_main}. We assume balanced source datasets, with each $\cD_k$ containing $2n$ observations, split equally between $\cD_{k,\train}$ and $\cD_{k,\cal}$. To simplify the exposition, throughout this section we take $\cX=\R^d$, with $d\ge1$, and use the box kernel in \eqref{eq:gauss_box_kernels} with a deterministic bandwidth $h_n>0$ at total sample size $2nK$.

We further impose the following conditions on the source distributions and learned scores:
\begin{enumerate}[label=(A\arabic*), ref=A\arabic*]
    \item\label{A1} Each source feature distribution $P_{k,X}$ admits a Lebesgue density $f_{k,X}$ satisfying $f_{k,X}>0$ almost everywhere.

    \item\label{A2} For each $k\in[K]$, conditional on $\cD_{k,\train}$, the score $s_k(X,Y)$ has a non-atomic distribution under an independent draw $(Y,X)\sim P_k$, almost surely with respect to the training data.
\end{enumerate}

As before, write $\cD_\train=\bigcup_{k\in[K]}\cD_{k,\train}$. For probability measures $P,Q$ on $\R$, we define the Kolmogorov distance between $P$ and $Q$ by
\[
d_{\mathrm{Kol}}(P,Q):=\sup_{t\in\R}
\left|P((-\infty,t])-Q((-\infty,t])\right|.
\]
Further, for a feature value $x$, let
\[
Q_{k,n}(x)
:=
\mathcal L\bigl(s_k(X,Y)\mid\cD_\train,X=x\bigr),
\qquad
Y\sim P_{Y\mid X}(\cdot\mid x),
\]
denote the score distribution at the $k$th source, conditional on the feature value $x$ and the training data $\cD_\train$. The subscript $n$ records its dependence on the training sample size.

\begin{theorem}\label{thm:cond_cov_finite_sample}
Under the above setting and Assumptions~(\ref{A1})--(\ref{A2}), for every fixed $x_0\in\R^d$,
\begin{align}\label{eq:cond_cov_finite_sample_bound}
&\left|
\P\left(
Y_{n+1}\in\hat C_n(X_{n+1})
\;\middle|\;
X_{n+1}=x_0
\right)
-(1-\alpha)
\right|
\nonumber\\
&\quad\le
\E_{\tilde X\sim H_n(x_0,\cdot)}
\left[
2K\exp\left(
-\frac{n}{36}\max_{k\in[K]}P_{k,X}(B_{h_n}(\tilde X))
\right)
+
\frac{2}{
(n+1)\max_{k\in[K]}P_{k,X}(B_{h_n}(\tilde X))
}
\right]
\nonumber\\
&\hspace{2cm}+
\E_{\cD_\train}
\left[
\max_{k\in[K]}
\sup_{x\in B_{2h_n}(x_0)}
d_{\mathrm{Kol}}\bigl(Q_{k,n}(x),Q_{k,n}(x_0)\bigr)
\right].
\end{align}
\end{theorem}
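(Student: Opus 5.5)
We condition throughout on $X_{n+1}=x_0$, on $\tilde X_{n+1}=\tilde x$ with $\tilde x\sim H_n(x_0,\cdot)$, and on $\cD_\train$. With $\cD_\train$ fixed, the selected index $\hat k=\hat k(\tilde x)$ and the score $s_{\hat k}$ are fixed. Since selection never uses calibration data, the selected calibration sample $\cD_{\hat k,\cal}$ remains $n$ i.i.d.\ draws from $P_{\hat k}$. For the box kernel the weights in Algorithm~\ref{alg:ms_rlcp} are uniform over the calibration points lying in $B_{h_n}(\tilde x)$, together with the test point, which lies in that ball because $\|x_0-\tilde x\|\le h_n$. So $\hat q$ is the $\lceil(1-\alpha)(N+1)\rceil$-th smallest of the $N$ local calibration scores, where
\[
N=\#\{i: X_{i,\hat k}\in B_{h_n}(\tilde x)\}\sim\mathrm{Bin}(n,p_{\hat k}),\qquad p_k:=P_{k,X}(B_{h_n}(\tilde x)).
\]
Conditionally on $N$, these local points are i.i.d.\ from $P_{\hat k,X\mid X\in B_{h_n}(\tilde x)}$, and their scores are i.i.d.\ from the mixture $\bar Q:=\int Q_{\hat k,n}(x)\,dP_{\hat k,X\mid B_{h_n}(\tilde x)}(x)$. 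The test score has law $Q_{\hat k,n}(x_0)$.

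\textbf{Step 1 (reduction to exchangeable coverage).} Every $x$ in the support of $\bar Q$ satisfies $x\in B_{h_n}(\tilde x)\subseteq B_{2h_n}(x_0)$. Hence
\[
d_{\mathrm{Kol}}\bigl(\bar Q,Q_{\hat k,n}(x_0)\bigr)\le \sup_{x\in B_{2h_n}(x_0)}d_{\mathrm{Kol}}\bigl(Q_{\hat k,n}(x),Q_{\hat k,n}(x_0)\bigr).
\]
The event $\{s\le\hat q\}$ is a half-line event in the test score, and $\hat q$ is independent of the test score given the conditioning. Therefore replacing the test score law by $\bar Q$ changes the coverage probability by at most this Kolmogorov distance. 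Bounding it by the maximum over $k$ and then averaging over $\cD_\train$ produces the last term of \eqref{eq:cond_cov_finite_sample_bound}.

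\textbf{Step 2 (exact coverage with a discreteness correction).} After the replacement in Step 1, the $N+1$ scores are i.i.d.\ and non-atomic by (A2). The rank of the test score is then uniform, so the coverage equals $\lceil(1-\alpha)(N+1)\rceil/(N+1)$. This lies in $[1-\alpha,\,1-\alpha+1/(N+1)]$, so the error is at most $\E[1/(N+1)]\le 1/((n+1)p_{\hat k})$ by the standard binomial identity. The case $N=0$ gives $\hat q=+\infty$ and is covered by the same bound.

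\textbf{Step 3 (replacing $p_{\hat k}$ by $\max_k p_k$; the main obstacle).} Step 2 yields a bound in terms of $p_{\hat k}$, whereas the statement uses $p^*:=\max_k p_k$. For the box kernel, $\hat w_k=\hat p_k/(V_dh_n^d)$, where $\hat p_k$ is the empirical frequency of the ball among the $n$ training points of source $k$, so source selection is driven by these frequencies. Let $E:=\{p_{\hat k}>p^*/2\}$. On $E^c$, some source must satisfy $|\hat p_k-p_k|\ge p^*/4$. Because the deviation is small relative to the mean, I would use the multiplicative Chernoff bound rather than Hoeffding; it gives a failure probability of order $2K\exp(-np^*/36)$ with the paper's constant, after the union bound over $k$. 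On $E$, Step 2 gives error at most $2/((n+1)p^*)$. On $E^c$ the error is bounded trivially by the failure probability, contributing the exponential term. The delicate point is doing the Chernoff calculation uniformly over sources whose $p_k$ are much smaller than $p^*$. For those sources one needs an upper-tail bound on $\hat p_k$ at level $p^*/2$, rather than concentration around their own $p_k$. Finally, taking expectation over $\tilde x\sim H_n(x_0,\cdot)$ and combining with Step 1 via the triangle inequality gives \eqref{eq:cond_cov_finite_sample_bound}. The argument should give the bound in both directions, since Step 2 controls both over- and under-coverage.
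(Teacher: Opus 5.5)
Your proposal is correct and follows essentially the same route as the paper. Both arguments condition on $\cD_\train$ and $\tilde x$, swap the test-score law for the local mixture at the cost of the Kolmogorov distance, use the exact rank argument giving $\lceil(1-\alpha)(N+1)\rceil/(N+1)$, and combine the event $\{p_{\hat k}>p^*/2\}$ with $\E[1/(N+1)]\le 1/((n+1)p)$.

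The step you flag as delicate is not. The paper applies Bernstein with the uniform variance bound $p_k(1-p_k)\le p^*$. Your multiplicative Chernoff bound with $\delta=p^*/(4p_k)$ gives an upper-tail exponent $n(p^*)^2/(32p_k+4p^*)\ge np^*/36$ for every $p_k\le p^*$, so sources with small $p_k$ need no separate treatment.
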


\begin{proof}
Conditional on $\tilde X_{n+1}$ and $\cD_\train$, the selected source $\hat k=\hat k(\tilde X_{n+1})$ is fixed. Index the calibration observations from this source by $i\in[n]$ and write
\[
S_i:=s_{\hat k}(X_{i,\hat k},Y_{i,\hat k}),
\qquad
S_{n+1}:=s_{\hat k}(X_{n+1},Y_{n+1}).
\]
Further, let
\[
I(\tilde X_{n+1})
:=
\{i\in[n]:X_{i,\hat k}\in B_{h_n}(\tilde X_{n+1})\},
\qquad
N_n:=|I(\tilde X_{n+1})|.
\]
The box kernel assigns equal weights to the calibration observations in $I(\tilde X_{n+1})$ and the test point. Thus, the prediction threshold reduces to
\[
\hat q_{1-\alpha}
=
\operatorname{Quantile}_{1-\alpha}
\bigl(\{S_i:i\in I(\tilde X_{n+1})\},+\infty\bigr),
\]
and the \texttt{MS-RLCP} prediction set covers the test response if and only if $S_{n+1}\le\hat q_{1-\alpha}$.

Now fix the training data $\cD_\train$, the test feature $X_{n+1}=x_0$, the perturbed test feature $\tilde X_{n+1}=\tilde x$, and the index set $I(\tilde x)$. The scores corresponding to the indices in $I(\tilde x)$ are independent draws from
\[
Q^{\mathrm{loc}}_{\hat k,n}(\tilde x)
:=
\int_{B_{h_n}(\tilde x)}
Q_{\hat k,n}(x)\,
\d P_{\hat k,X\mid X\in B_{h_n}(\tilde x)}(x).
\]
The conditioning event $\{X\in B_{h_n}(\tilde x)\}$ has positive probability by Assumption~(\ref{A1}). By Assumption~(\ref{A2}), the resulting conditional score distribution $Q^{\mathrm{loc}}_{\hat k,n}(\tilde x)$ is non-atomic.

Under the same conditioning on the training data, test feature, perturbation, and index set, the test score is independent of the calibration scores and has distribution $Q_{\hat k,n}(x_0)$.

Consider an independent draw $S'_{n+1}\sim Q^{\mathrm{loc}}_{\hat k,n}(\tilde x)$. Since we have conditioned on $I(\tilde x)$, $N_n$ is also fixed. The resulting $N_n+1$ scores are i.i.d. draws from a non-atomic distribution. The standard conformal rank argument \citep{vovk2005algorithmic,shafer2008tutorial} therefore gives
\[
\P\left(
S'_{n+1}\le\hat q_{1-\alpha}
\;\middle|\;
I(\tilde x),\tilde X_{n+1}=\tilde x,
X_{n+1}=x_0,\cD_\train
\right)
=
\frac{\lceil(N_n+1)(1-\alpha)\rceil}{N_n+1},
\]
which differs from $1-\alpha$ by at most $1/(N_n+1)$.

Returning to the original coverage event $\{S_{n+1}\le\hat q_{1-\alpha}\}$, the change in its probability is bounded by the Kolmogorov distance between the two score distributions. In particular,
\begin{align*}
&\left|
\P\left(
S_{n+1}\le\hat q_{1-\alpha}
\;\middle|\;
I(\tilde x),\tilde X_{n+1}=\tilde x,
X_{n+1}=x_0,\cD_\train
\right)
-(1-\alpha)
\right|\\
&\qquad\le
\frac{1}{N_n+1}
+
d_{\mathrm{Kol}}\bigl(
Q^{\mathrm{loc}}_{\hat k,n}(\tilde x),
Q_{\hat k,n}(x_0)
\bigr).
\end{align*}

By convexity of total variation under mixing,
\begin{align*}
\TV\bigl(
Q^{\mathrm{loc}}_{\hat k,n}(\tilde x),
Q_{\hat k,n}(x_0)
\bigr)
&\le
\int_{B_{h_n}(\tilde x)}
d_{\mathrm{Kol}}\bigl(Q_{\hat k,n}(x),Q_{\hat k,n}(x_0)\bigr)
\,\d P_{\hat k,X\mid X\in B_{h_n}(\tilde x)}(x)\\
&\le
\max_{k\in[K]}
\sup_{x\in B_{2h_n}(x_0)}
d_{\mathrm{Kol}}\bigl(Q_{k,n}(x),Q_{k,n}(x_0)\bigr),
\end{align*}
where the last inequality follows from
$\tilde x\in B_{h_n}(x_0)$ and
$B_{h_n}(\tilde x)\subseteq B_{2h_n}(x_0)$.

Taking expectations over $\cD_\train$, $\tilde X_{n+1}$, and $I(\tilde X_{n+1})$ therefore yields
\begin{align}\label{eq:conditional_score_decomposition}
&\left|
\P\left(
Y_{n+1}\in\hat C_n(X_{n+1})
\;\middle|\;
X_{n+1}=x_0
\right)
-(1-\alpha)
\right|
\nonumber\\
&\quad\le
\E\left[\frac{1}{N_n+1}\;\middle|\;X_{n+1}=x_0\right]
+
\E_{\cD_\train}
\left[
\max_{k\in[K]}
\sup_{x\in B_{2h_n}(x_0)}
d_{\mathrm{Kol}}\bigl(Q_{k,n}(x),Q_{k,n}(x_0)\bigr)
\right].
\end{align}

It remains to bound the first error term. Fix $\tilde X_{n+1}=\tilde x$ and $X_{n+1}=x_0$. For the box kernel, the source-selection rule satisfies
\[
\hat k=\hat k(\tilde x)
\in
\arg\max_{k\in[K]}
\frac1n\sum_{i=1}^n
\one\{X_{i,k}^{\train}\in B_{h_n}(\tilde x)\}.
\]
Following the argument in Lemma~\ref{lemma:Delta_2_bdd}, we obtain
\begin{align*}
P_{\hat k,X}(B_{h_n}(\tilde x))
\ge{}&
\max_{k\in[K]}P_{k,X}(B_{h_n}(\tilde x))\\
&-
2\max_{k\in[K]}
\left|
\frac1n\sum_{i=1}^n
\one\{X_{i,k}^{\train}\in B_{h_n}(\tilde x)\}
-
P_{k,X}(B_{h_n}(\tilde x))
\right|.
\end{align*}
The indicators are independent Bernoulli variables. Each
$\one\{X_{i,k}^{\train}\in B_{h_n}(\tilde x)\}$
has mean $P_{k,X}(B_{h_n}(\tilde x))$ and variance at most
$\max_{j\in[K]}P_{j,X}(B_{h_n}(\tilde x))$.
Applying Bernstein's inequality to the $k$th source gives
\begin{align*}
&\P_{\cD_\train}
\left(
\left|
\frac1n\sum_{i=1}^n
\one\{X_{i,k}^{\train}\in B_{h_n}(\tilde x)\}
-
P_{k,X}(B_{h_n}(\tilde x))
\right|
\ge
\frac14\max_{j\in[K]}P_{j,X}(B_{h_n}(\tilde x))
\right)\\
&\qquad\le
2\exp\left(
-\frac{n}{36}
\max_{j\in[K]}P_{j,X}(B_{h_n}(\tilde x))
\right).
\end{align*}
A union bound then yields
\begin{align*}
\P_{\cD_\train}
\left(
P_{\hat k,X}(B_{h_n}(\tilde x))
\le
\frac12\max_{k\in[K]}P_{k,X}(B_{h_n}(\tilde x))
\right)\le
2K\exp\left(
-\frac{n}{36}\max_{k\in[K]}P_{k,X}(B_{h_n}(\tilde x))
\right).
\end{align*}

Conditional on $\cD_\train$ and $\tilde X_{n+1}=\tilde x$,
$N_n\sim\operatorname{Binomial}
\bigl(n,P_{\hat k,X}(B_{h_n}(\tilde x))\bigr)$.
Moreover, for $N\sim\operatorname{Binomial}(n,p)$ with $p>0$,
\[
\E\left[\frac1{N+1}\right]
\le
\frac1{(n+1)p}.
\]
On the event
$\{P_{\hat k,X}(B_{h_n}(\tilde x))
>
\frac12\max_{k\in[K]}P_{k,X}(B_{h_n}(\tilde x))\}$,
this inequality gives
\[
\E\left[
\frac{1}{N_n+1}
\;\middle|\;
X_{n+1}=x_0,\tilde X_{n+1}=\tilde x,\cD_\train
\right]
\le
\frac{2}{
(n+1)\max_{k\in[K]}P_{k,X}(B_{h_n}(\tilde x))
}.
\]
On the complementary event, we use $1/(N_n+1)\le1$. Averaging over the training data, we obtain
\begin{align*}
&\E\left[
\frac1{N_n+1}
\;\middle|\;
\tilde X_{n+1}=\tilde x,X_{n+1}=x_0
\right]\\
&\quad\le
2K\exp\left(
-\frac{n}{36}\max_{k\in[K]}P_{k,X}(B_{h_n}(\tilde x))
\right)
+
\frac{2}{
(n+1)\max_{k\in[K]}P_{k,X}(B_{h_n}(\tilde x))
}.
\end{align*}
Taking an expectation over $\tilde X_{n+1}$ and substituting into
\eqref{eq:conditional_score_decomposition} completes the proof.
\end{proof}

To establish asymptotic conditional coverage, we impose the following additional conditions:
\begin{enumerate}[label=(A\arabic*), ref=A\arabic*, start=3]
    \item\label{A3} The number of sources $K$ is fixed, $h_n\to0$, and $nh_n^d\to\infty$.

    \item\label{A4} The envelope distribution admits a Lebesgue density $\bar f_X$ that is bounded away from zero in a neighborhood of $x_0$.

    \item\label{A5} The learned score distributions satisfy
    \begin{equation}\label{eq:cont_tv}
    \max_{k\in[K]}
    \sup_{x\in B_{2h_n}(x_0)}
    d_{\mathrm{Kol}}\bigl(Q_{k,n}(x),Q_{k,n}(x_0)\bigr)
    \xrightarrow{\P}0,
    \end{equation}
    where convergence is with respect to the randomness of the training data.
\end{enumerate}
The complete proof of asymptotic test-conditional coverage is given in Appendix~\ref{app:proof_of_test_cond_cov}.

\section{Additional details on numerical experiments}
In this section, we present additional details about the experiments in Section~\ref{sec:real_experiment}. In the simulations in Section~\ref{sec:sims}, we used a per-source sample size of $5000$ and a train-calibration split of $0.5$. The test sample size was $1000$, and results are reported over $100$ repetitions.

\subsection{Additional details for the FMoW experiment}
\label{app:fmow_details}

We use the 2016 FMoW \citep{christie2018functional} slice and treat the geographical regions Africa, the Americas, Asia, Europe, and Oceania as individual sources. For each region, the initial 40\% split is used only for the shared backbone. The remaining 60\% is split into $5/8$ training, $2/8$ calibration, and $1/8$ test data, corresponding to 37.5\%, 15\%, and 7.5\% of the full regional slice. Images are resized to $224\times224$ and normalized using ImageNet statistics. The DenseNet--121 backbone is fine-tuned for 30 epochs with AdamW, a learning rate of $10^{-4}$, weight decay of $10^{-4}$, and a batch size of 32. After the backbone is frozen, each server trains a two-layer classifier head with ReLU and dropout 0.2 for 50 epochs using Adam with a learning rate of $10^{-3}$, weight decay of $10^{-4}$, and a batch size of 128. For this experiment, we used one NVIDIA RTX A6000 GPU.

Let $e(x)$ be the normalized DenseNet feature. We fit $\mathrm{PCA}_{16}$ on pooled server-training features and use $z(x)=\mathrm{PCA}_{16}(e(x))$ only for routing and calibration. For each test set, we use the Gaussian kernel in \eqref{eq:gauss_box_kernels} to generate perturbed versions of $z(\cdot)$. The kernel bandwidth $h$ is chosen using the median-distance heuristic over server calibration features and multiplied by 0.35. We use RAPS scores with $k_{\rm reg}=5$ and $\lambda=0.01$, at miscoverage level $\alpha=0.1$. We repeat the downstream split, head training, calibration, and evaluation for $50$ repetitions, using a fixed backbone feature cache and up to 500 test examples per region per repetition.

\subsection{Additional details for the MEPS experiment}
\label{app:meps-details}

For each MEPS panel, we use the public regression-form files and retain only continuous covariates. One-hot categorical columns are removed from the features. Negative continuous entries are treated as missing-value sentinels, and rows with missing covariates, invalid race labels (the sensitive attribute/source indicator), or invalid utilization values are discarded. We apply a log transformation to both the features and the response. In each repetition, White and Non-White examples are split separately into \(60\%\) training, \(20\%\) calibration, and \(20\%\) test sets. Feature standardization is fitted using the pooled training data and then applied to all splits.

Each source \(k\) fits a heteroskedastic Gaussian model:
\begin{align}\label{eq:het_gauss}
    Y\mid X=x,k\sim \mathcal{N}(\hat\mu_k(x),\hat\sigma_k^2(x)).
\end{align}
The mean model is a gradient-boosted regressor. The variance model is a second gradient-boosted regressor trained on five-fold out-of-fold squared residuals. Both boosting models use 300 estimators, a learning rate of \(0.05\), maximum depth \(3\), minimum leaf size \(10\), and a subsampling rate of \(0.8\). For source $k$, the calibration scores are
\[
S_{i,k}=\frac{|Y_i-\hat\mu_k(X_i)|}{\hat\sigma_k(X_i)}.
\]

When the bandwidth is not fixed manually, \(h\) is chosen as the median of the server-wise median pairwise distances among at most 256 calibration covariates, with a lower bound of \(10^{-3}\).

To perturb a test feature $x$, we sample
\(\tilde x=x+\varepsilon\), where \(\varepsilon\sim\mathcal{N}(0,h^2I)\) and $h$ is chosen as described above.

At the selected source, we compute the weighted \(0.9\)-quantile of the calibration scores, augmented with an infinite test-point pseudo-score. Calibration weights are
\(\exp(-\|X_i-\tilde x\|_2^2/(2h^2))\), and the test-point weight is
\(\exp(-\|x-\tilde x\|_2^2/(2h^2))\). The final interval is
\[
\widehat C(x)=
\left[
\max\{0,\hat\mu_{\hat k}(x)-\tau(x)\hat\sigma_{\hat k}(x)\},
\hat\mu_{\hat k}(x)+\tau(x)\hat\sigma_{\hat k}(x)
\right].
\]
We repeat the full pipeline over 50 repetitions for each panel and report means and standard deviations across seeds.
\end{document}